\DeclareMathOperator*{\Argmin}{Argmin}
\DeclareMathOperator*{\argmin}{argmin}
\DeclareMathOperator*{\argmax}{argmax}
\newtheorem{theorem}{Theorem}[section]
\newtheorem{proposition}{Proposition}[section]
\newtheorem{lemma}{Lemma}[section]
\newtheorem{ass}{Assumption}
\newtheorem{rem}{Remark}[section]
\newtheorem{corollary}{Corollary}[section]
\newcommand{\ba}{\begin{array}{c}}
\newcommand{\bal}{\begin{array}{l}}
\newcommand{\ea}{\end{array}}
\newcommand{\R}{\mathds{R}}
\newcommand{\bit}{\begin{itemize}}
\newcommand{\eit}{\end{itemize}}
\newcommand{\bvec}{\left(\!\!\!\begin{array}{c} }
\newcommand{\evec}{\end{array}\!\!\!\right)}
\newcommand{\E}{\mathds{E}}
\newcommand{\tr}{\mbox{tr}}
\newcommand{\wt}{\widetilde}
\newcommand{\rb}{\mathbb{R}}
\newcommand{\VV}{\mathcal{V}}
\def \FF{{\mathcal F}}
\newcommand{\noteb}[1]{{\textbf{\color{blue}#1}}}
\newcommand{\puone}{p_U^1}
\newcommand{\putwo}{p_U^2}
\newcommand{\quone}{q_U^1}
\newcommand{\qutwo}{q_U^2}
\newcommand{\pvone}{p_V^1}
\newcommand{\pvtwo}{p_V^2}
\newcommand{\qvone}{q_V^1}
\newcommand{\qvtwo}{q_V^2}
\newcommand{\sgn}{\textup{sign}}
\newcommand{\Unorm}{\mathscr{U}}
\newcommand{\Vnorm}{\mathscr{V}}
\newcommand{\Wnorm}{\mathscr{W}}
\newcommand{\F}{\textup{F}}
\newcommand{\ff}{\mathfrak{f}}
\newcommand{\UU}{\mathcal{U}}
\newcommand{\barr}{\begin{array}}
\newcommand{\earr}{\end{array}}
\newcommand{\ind}{\mathds{1}}
\newcommand{\zeros}{{0}}
\newcommand{\cV}{\mathcal{V}}
\newcommand{\WW}{\mathcal{W}}
\renewcommand{\tilde}{\widetilde}
\newcommand{\leqs}{\leqslant}
\newcommand{\geqs}{\geqslant}
\newcommand{\cL}{\mathcal{L}}
\newcommand{\wh}{\widehat}
\newcommand{\veps}{\varepsilon}
\newcommand{\cH}{\mathcal{H}}
\newcommand{\prim}{\textup{prim}}
\newcommand{\dual}{\textup{dual}}
\newcommand{\lang}{\left\langle}
\newcommand{\rang}{\right\rangle}
\newcommand{\Rem}{\textup{\texttt{r}}}
\newcommand{\Edata}{\mathds{E}_{\textup{data}}}
\newcommand{\Gap}{\textup{\texttt{Gap}}}
\renewcommand{\tr}{\textup{tr}}
\newcommand{\Acur}{A}
\newcommand{\Bcur}{B}
\newcommand{\Aprev}{A_{\textup{pr}}}
\newcommand{\Bprev}{B_{\textup{pr}}}
\newcommand{\Usum}{U_{\Sigma}}
\newcommand{\Vsum}{V_{\Sigma}}
\renewcommand{\it}{{\boldsymbol{i}}}
\newcommand{\jt}{{\boldsymbol{j}}}
\newcommand{\lt}{{\boldsymbol{l}}}
\newcommand{\ellt}{{\boldsymbol{\ell}}}
\newcommand{\List}{\texttt{Changed}}
\newcommand{\resetProcCounter}{
\floatname{algorithm}{Procedure}
\setcounter{algorithm}{0}
}
\renewcommand{\cite}{\citep}
\title{Efficient Primal-Dual Algorithms \\ for Large-Scale Multiclass Classification}
\newcommand*{\affmark}[1][*]{\textsuperscript{#1}}
\newcommand*{\email}[1]{\texttt{#1}}
\author{
Dmitry Babichev \thanks{Equal contribution.} \textsuperscript{,}\thanks{SIERRA Project-Team, INRIA and \' Ecole Normale Sup\' erieure, PSL Research University, Paris, France.}
\and 
Dmitrii M.~Ostrovskii \affmark[$*$,]\affmark[$\dagger$] 
\and 
Francis Bach \affmark[$\dagger$]
}
\date{
\email{\{dmitry.babichev, dmitrii.ostrovskii, francis.bach\}@inria.fr}
}
\begin{document}
\maketitle

\begin{abstract}
We develop efficient algorithms to train~$\ell_1$-regularized linear classifiers with large dimensionality~$d$ of the feature space, number of classes~$k$, and sample size~$n$.
Our focus is on a special class of losses that includes, in particular, the multiclass hinge and logistic losses.
Our approach combines several ideas:
(i) passing to the equivalent saddle-point problem with a quasi-bilinear objective; 
(ii) applying stochastic mirror descent with a proper choice of geometry which guarantees a favorable accuracy bound;
(iii) devising non-uniform sampling schemes to approximate the matrix products.
In particular, for the multiclass hinge loss we propose a \emph{sublinear} algorithm with iterations performed in~$O(d+n+k)$ arithmetic operations.
\end{abstract}
\section{Introduction}
\label{sec:intro}
We study optimization problems arising in multiclass linear classification with a large number of classes and features.
Formally, consider a dataset of~$n$ pairs~$(x_i, y_i)$,~$i \in [n] := \{1, ..., n\}$, where~$x_i \in \rb^d$ is the feature vector of the~$i$-th example, and~$y_i \in \{e_1, ..., e_k\}$ is the label vector encoding one of~$k$ possible classes; here~$e_1, ..., e_k$ are the standard basis vectors in~$\R^k$.
Given such data, our goal is to find a linear classifier that minimizes the $\ell_1$-regularized empirical risk. We thus consider a  minimization problem of the form
\begin{equation}
\label{eq:erm}
\min\limits_{U\in\R^{d\times k}}  \frac{1}{n} \sum_{i=1}^n \ell(U^\top x_i,y_i) + \lambda\|U\|_{1}.
\end{equation}
Here,~$U \in \R^{d \times k}$ is the matrix whose columns specify the parameter vectors for each of the~$k$ classes;
~$\ell(U^\top x,y)$, with~$\ell: \R^k \times \Delta_k \to \R$ and~$\Delta_k \subset \R^k$ being the unit probability simplex, is the \textit{loss} corresponding to the margins~$U^\top x \in \rb^k$ assigned to~$x$ when its class is encoded by~$y$; finally, the regularization term~$\lambda\|U\|_{1}$,~$\lambda \geqs 0$, uses the elementwise~$\ell_1$-norm~$\|U\|_{1} = \sum_{i=1}^d \sum_{j=1}^k |U_{ij}|$. 
Apart from inducing sparsity of features and classes~\citep{buhlmann2011statistics}, this choice of regularization is crucial from the algorithmic perspective, as will be explained in Sec.~\ref{sec:sampling}. 

Our focus is on the so-called \textit{Fenchel-Young} losses, introduced by~\citet{blondel2018learning}, which can be expressed as
\begin{equation}
\label{eq:fenchel-loss}
\ell(U^\top x,y) = \max\limits_{v\in\Delta_k} \left\{- \ff(v,y) + (v-y)^\top U^\top x \right\},
\end{equation}
where~$\Delta_k$ is the probability simplex in~$\R^k$, and the function~$\ff(\cdot,y): \Delta_k \to \R$ is convex and ``simple'' (i.e., quasi-separable in~$v$), which implies that maximization in~\eqref{eq:fenchel-loss} can be performed in running time~$O(k)$.
In particular, this allows us to address two commonly used multiclass losses:

\begin{itemize}
\item The multiclass \textit{logistic (or softmax) loss} 
\begin{equation}
\label{eq:loss-logistic}
\log \left(\sum_{l=1}^k \exp(U_l^\top x)\right) - y^\top U^\top x,
\end{equation}
where~$U_l$ is the~$l$-th column of~$U$ so that~$U_l^\top x$ is the~$l$-th element of~$U^\top x$. This loss corresponds to~\eqref{eq:fenchel-loss} with the negative entropy term~$\ff(v,y) = \sum_{l=1}^k v_l\log v_l$, which is  independent of~$y$.
\item The multiclass \textit{hinge loss}, given by
\begin{equation}
\label{eq:loss-hinge}
\max_{l \in [k]} \left\{ \ind[e_l \ne y] + U_{l}^\top  x \right\}  - y^\top U^\top x,
\end{equation}
and used in multiclass support vector machines (SVM). This loss reduces to~\eqref{eq:fenchel-loss} by setting~$\ff(v,y) = v^\top y - 1$ (refer to Appendix~\ref{SVMMotivation} for additional details).
\end{itemize}

Arranging the feature vectors into~$X \in \rb^{n \times d}$, and the class labels into~$Y \in \rb^{n \times k}$, and
using the Fenchel-type representation~\eqref{eq:fenchel-loss} of the loss, we can recast the initial problem~\eqref{eq:erm} as the following convex-concave saddle-point problem:
\begin{align}
\label{eq:erm-saddle}
\min_{ U \in \rb^{d \times k} } \max_{V \in \cV} \;
&-\FF(V,Y) +
\frac{1}{n} \tr \left[
( V - Y )^\top X U
\right]  + \lambda \| U \|_{1},\\
\label{eq:composite-part-v}
&\mbox{ with} \;\;  \FF(V,Y) := \frac{1}{n} \sum_{i=1}^n \ff(v_i,y_i),
\end{align}
where~$v_i, y_i \in \Delta_k$ are the~$i$-th rows of~$V$ and~$Y$, and define the Cartesian product of probability simplices
\begin{equation}
\label{eq:cube-of-simplices}
\cV := \Delta_k^{\otimes n} \subset \R^{n \times k},
\end{equation} 
the set comprised of all right-stochastic matrices in~$\R^{n \times k}$.
Taking into account the Fenchel-type representation~\eqref{eq:fenchel-loss}, this reduction is quite natural.
Indeed, while the objective in~\eqref{eq:erm} can be non-smooth, the essential part of the objective in~\eqref{eq:erm-saddle},
\begin{equation}
\label{eq:bilinear-part}
\Phi(U,V-Y) := \frac{1}{n} \tr \left[
( V - Y )^\top X U
\right],
\end{equation}
is not only smooth but \textit{bilinear} in~$U$ and~$V-Y$.
On the other hand, the presence of the dual constraints, as given by~\eqref{eq:cube-of-simplices}, does not seem problematic since~$\cV$ allows for a computationally cheap projection oracle.
Finally, in the saddle-point formulation we can control the duality gap which provides an \emph{accuracy certificate} for the initial problem, see, e.g.,~\citet{nemirovski2010accuracy,algorec2018arxiv}.

In this work, we propose efficient algorithms for solving~\eqref{eq:erm} via the associated saddle-point problem~\eqref{eq:erm-saddle}, built upon the vector-field formulation of stochastic mirror descent (SMD), a well-known general optimization method, see, e.g.,~\citet{optbook1} and references therein. 
Mirror descent, as well as its cousin Mirror Prox~\cite{optbook2}, allows to solve convex-concave saddle-point problems (CCSPPs) whenever the first-order information about the objective is available, and the primal and dual feasible sets have simple structures that allow for easily computable prox-mappings.
While these algorithms are poorly adapted for obtaining high-accuracy solutions, this is not a limitation in the context of empirical risk minimization where the ultimate goal is to minimize the \textit{expected risk}, and there is often a natural level of statistical accuracy, going beyond which is unnecessary, see, e.g.,~\citet{mokhtari2016adaptive}.
On the other hand, mirror descent is especially well-suited to quasi-bilinear CCSPPs of the form~\eqref{eq:erm-saddle}. 

First, it uses the Bregman divergence, rather than the standard Euclidean distance, as a proximity measure, which allows to adjust to the specific geometry associated to~$\cV$ and~$\|\cdot\|_{1}$.

Second, it retains its favorable convergence guarantee when the exact partial gradients
\[
\nabla_{U}[\Phi(U,V-Y)] = \frac{1}{n}XU, \quad \nabla_{V}[\Phi(U,V-Y)] = \frac{1}{n} X^\top(V-Y),
\] 
cf.~\eqref{eq:bilinear-part}, are replaced with their unbiased estimates. 
This is especially important: recall that the exact computation of the matrix products~$XU,~X^\top (V-Y)$ requires~$O(dnk)$ arithmetic operations (a.o.'s) while the subsequent proximal mappings can usually be done in linear time in the combined size of the variables, i.e.,~$O(dk+nk)$; thus, computation of the matrix products becomes the main bottleneck.
On the other hand, these matrix products can be approximated via \textit{randomized  subsampling} of the elements of~$U$,~$V-Y$, and~$X$.
While a similar approach has already been explored by~\citet{optbook2} in the case of bilinear CCSPPs with vector variables arising in sparse recovery, its extension to problems of the type~\eqref{eq:erm-saddle} is non-trivial.
In fact, for a sampling scheme to be deemed ``good'', it clearly has to satisfy two concurrent requirements.

\begin{itemize}
\item[(a)]
On one hand, one must control the stochastic variability of the estimates in the chosen sampling scheme. 
Ideally, the additional term due to sampling should not be much larger than the term already present in the accuracy bound for deterministic mirror descent.
\item[(b)]
On the other hand, the estimates must be computationally cheap. 
The immediate goal is~$O(dk+nk)$ per iteration, i.e., the cost of the proximal mapping given the full gradients.
However, one might want to go beyond that, to~$O(d+n+k)$ per iteration, obtaining a \textit{sublinear} algorithm with complexity of an iteration much smaller than the combined size of the variables.
\end{itemize}
Devising a sampling scheme that satisfies both these requirements simultaneously is a delicate task. To solve it, one should carefully exploit the specific geometric structure of~\eqref{eq:erm-saddle} associated to~$\cV$, the norm~$\|\cdot\|_{1}$, and the term~$\FF(V,Y)$.

\paragraph{Contributions and Outline.} 
We propose two sampling schemes with various levels of ``aggressiveness'' satisfying the above requirements, study their statistical properties, and analyze the numerical complexity of stochastic mirror descent (SMD) equipped with them.
In particular, we show that SMD with appropriately balanced entropy-type potentials (see Sec.~\ref{sec:geometry-choice-main}) has nearly the same complexity estimate, in terms of the number of iterations to guarantee a given duality gap, as its deterministic counterpart with exact gradients, while at the same time enjoying a drastically improved cost of iterations, with improvement depending on the scheme.

\begin{itemize}
\item The \emph{partial sampling scheme} (see Sec.~\ref{sec:part-stoch}) is applicable to any problem of the form~\eqref{eq:erm-saddle}.
In it, we sample a single \textit{row} of~$U$ and~$V-Y$ at a time, with probabilities minimizing the expected squared norm of the gradient estimate at the current iterate. 
This leads to the cost~$O(dn + dk+nk)$ of iterations. 

\item In the \emph{full sampling scheme} (see Sec.~\ref{sec:full-stoch}), sampling of the rows of~$U$ and~$V-Y$ is augmented with column sampling. 
Applying it to the multiclass hinge loss~\eqref{eq:loss-hinge}, we construct a sublinear algorithm (see Sec.~\ref{sec:sublinear}) with iteration cost of~$O(d+n+k)$ a.o.'s, and a one-time additional cost of~$O(dn+nk)$ a.o.'s (starting from the zero primal solution allows to remove the~$O(dk)$ term).

\end{itemize}
We conclude the paper with numerical experiments that illustrate our approach (see Sec.~\ref{sec:experiments}).

\paragraph{Related Work.}
Instead of passing to the saddle-point problem~\eqref{eq:erm-saddle} we can solve the original problem~\eqref{eq:erm} directly.
For deterministic first-order algorithms, this leads to~$O(dnk)$ numerical complexity of one iteration, and for  stochastic algorithms that sample one example at a time, the complexity is~$O(dk)$, similarly to our approach with partial sampling. In particular, algorithms such as SAG~\citep{schmidt2017minimizing}, SVRG~\citep{johnson2013accelerating}, SDCA~\citep{shalev2013stochastic}, SAGA~\citep{defazio2014saga} use variance reduction techniques to obtain accelerated convergence rates in terms of the number of iterations, but all have $O(dk)$ or $O(dk+nk)$ runtime. Such variance reduction techniques have been extended to saddle-point problems~\citep{palaniappan2016stochastic,shi2017bregman} but with the same overall complexity (our full sampling scheme could also be adapted to them); still, none of these algorithms are sublinear.

Regarding sublinear algorithms, several results can be found in the literature for the biclass setting. 
The case of bilinear~CCSPPs was first considered by~\citet{grigoriadis1995sublinear}, then by~\citet{optbook2}, and by~\citet{xiao2017dscovr}. 
\citet{hazan2011beatingsvm} proposed a sublinear algorithm for biclass SVM, and~\citet{garber2011approximating,garber2016sublinear} addressed semidefinite programs; more general results were given by~\citet{clarkson2012sublinear}.
We reuse some of the tools (mirror descent and multiplicative updates) considered in this literature.
However, none of these approaches can be easily extended to the multiclass setting without an extra~$O(k)$ factor in the cost of iterations. 

\section{Mirror Descent for Quasi-Bilinear CCSPPs}
\label{sec:geometry-choice-main}

\paragraph{Preliminary Reductions.}
We focus on a CCSPP given by~\eqref{eq:erm-saddle}--\eqref{eq:bilinear-part}, and make the following assumption:
\begin{ass}
\label{ass:radius}
We assume that the~$\|\cdot\|_{1}$-radius of an optimal solution $U^\ast$ to~\eqref{eq:erm-saddle} is known:
\[
\|U^\ast\|_{1} = R_{*}.
\]
\end{ass}
\begin{rem}
\label{rem:radius}
The accuracy bounds presented later on depend on~$R_{*}$, and are preserved when~$R_{*}$ becomes an upper bound on~$\|U^\ast\|_1$.
Since~$U=0$ is feasible, we always have~$R_{*}  \leqs 1/\lambda$ when the loss is non-negative, but this bound is usually loose since~$\lambda$ typically decreases with~$n$. 
Alternatively, we can solve a series of constrained problems, starting with a small radius, and increasing it by a constant factor until the obtained solution leaves the boundary of the feasible set. 
\end{rem}
With Assumption~\ref{ass:radius}, we can put~\eqref{eq:erm-saddle} in the constrained form:
\begin{equation}
\label{MainEquationConstrained}
\min_{\| U \|_1 \le R_{*}}\max\limits_{V\in \VV} - \FF(V,Y) + \Phi(U,V-Y) + \lambda \|U\|_{1},
\end{equation}
and then reduce~\eqref{MainEquationConstrained} to a simplex-constrained problem.
Indeed, let~$\ind_{2d \times k} \in \R^{2d \times k}$ be the all-ones matrix, and define
\begin{equation}
\label{eq:solid-simplex}
\UU := \big\{U \in \R^{2d \times k}:  U_{ij} \geqs 0, \; \tr[\ind_{2d \times k}^\top U] \leqs R_{*}\big\},
\end{equation}
i.e., the ``solid'' simplex in~$\R^{2d \times k}$ (note that~$\tr[\ind_{2d \times k}^\top U] = \| U\|_{1}$ on~$\UU$).
Consider now the following CCSPP:
\begin{equation}
\label{SaddleUhat}
\min_{U \in \UU} \max_{\vphantom{U \in \UU}V\in \VV} - \FF(V,Y) + \wh \Phi(U, V-Y) + \lambda \, \tr[\ind_{2d \times k}^\top U],
\end{equation}
where~$\FF(V,Y)$ and~$\VV$ are given by~\eqref{eq:composite-part-v}--\eqref{eq:cube-of-simplices},~$\UU$ by~\eqref{eq:solid-simplex}, and
\begin{equation}
\label{eq:bilinear-part-simplex}
\wh\Phi(U,V-Y) := \frac{1}{n} \tr \left[(V - Y)^\top \wh X U\right], \; \wh X := \begin{bmatrix}X, -X \end{bmatrix},
\end{equation}
using the ``Matlab notation'' for matrix concatenation (i.e.,~$\wh X \in \R^{n \times 2d}$).
One can verify that~\eqref{SaddleUhat} is equivalent to~\eqref{MainEquationConstrained}, and hence to~\eqref{eq:erm-saddle}, in the following sense: any~$\veps$-accurate (in terms of the primal accuracy or duality gap, see below) solution~$(U,V)$ to~\eqref{SaddleUhat} with $U = [U_1; U_2]$ results in the~$\veps$-accurate solution~$(U_1 - U_2,V)$ to~\eqref{MainEquationConstrained}.
This reduction is motivated by the fact that mirror descent with an entropy-type potential on~$\UU$ reduces to multiplicative updates, which is crucial in the sublinear algorithm presented in~Sec.~\ref{sec:sublinear}.

\paragraph{Background on CCSPPs.} 
The accuracy of a candidate solution~$(\bar U, \bar V)$ to a~CCSPP
\[
\min_{U \in \UU} \max_{V \in \VV} f(U,V)
\]
with compact sets~$\UU,\VV$ can be quantified via the \emph{duality gap}
\begin{equation}
\label{eq:gap}
\Gap(\bar U, \bar V) := \max_{V \in \VV} f(\bar U,V) -  \min_{U \in \UU} f(U, \bar V).
\end{equation}
Under certain conditions which in particular hold for~\eqref{SaddleUhat}, see~\citet{sion1958}, this CCSPP possesses an optimal solution $W^* = (U^*, V^*)$, called a \textit{saddle point}, for which it holds
$f(U^*,V^*) = \max_{V \in \VV} f(U^*,V) = \min_{U \in \UU} f(U,V^*)$. That is,~$U^*$ (resp.~$V^*$) is optimal in the primal problem of minimizing~$f_{\prim}(U) := \max_{V \in \VV} f(U,V)$ (resp.~the dual problem of maximizing~$f_{\dual}(V) := \min_{U \in \UU} f(U, \bar V)$). 
Thus, the duality gap bounds from above the primal accuracy -- in our case, the accuracy of solving the initial problem~\eqref{eq:erm}.

\subsection{Choice of the Geometry}
\label{sec:norms-and-potentials}
When applied to~CCSPPs, the geometry of mirror descent is specified by the choice of the pair of the primal and dual norms~$\|\cdot\|_{\Unorm}$,~$\|\cdot\|_{\Vnorm}$ in which the optimal solution is likely to be small,
and the \emph{potentials}, or \emph{distance-generating functions} in the terminology of~\citet{nemirovski2004prox}, that must satisfy some compatibility properties with respect to these norms.

\paragraph{Norms.} 
It is natural to use~$\|\cdot\|_1$ as the primal norm~$\|\cdot\|_{\Unorm}$.
On the other hand, due to~\eqref{eq:cube-of-simplices}, it is reasonable to choose the norm~$\|\cdot\|_{\Vnorm}$ within the family of mixed~$\ell_p \times \ell_q$ norms
\begin{equation}
\label{def:mixed-norms}
\|V\|_{p \times q} := \left(\sum_{i} \|V(i,:)\|_q^p \right)^{1/p}, \quad p,q \in [1, \infty],
\end{equation}
i.e., the~$\ell_p$-norm of the vector of~$\ell_q$-norms of the rows, with~$q = 1$. 
The question is how to choose~$p$. The naive choice~$p = \infty$, corresponding to the direct-product structure of~$\VV$, does not allow for a compatible potential (see~\citet{optbook1}).
The remedy is to replace~$p = \infty$ with~$p = 2$, leading to the choice~$\|\cdot\|_\Vnorm = \|\cdot\|_{2\times 1}$.
In fact, this choice can also be motivated from the black-box model perspective~\cite{nemirovsky1983problem}.

\begin{rem}
\label{rem:other-geometries}
Other choices of the regularization norm and the norm~$\|\cdot\|_{\Vnorm}$ are explored in Appendix~\ref{sec:AllGeometries}.
As it turns out, the choice described here is the only one in a broad class of those using the mixed~$\ell_p \times \ell_q$ norms, for which one can achieve the goal stated in Section~\ref{sec:intro}, that is, have both a favorable accuracy guarantee and an efficient (sublinear) algorithmic implementation.
\end{rem}

\paragraph{Partial Potentials.}
\label{sec:partial-potentials}
A potential~$\phi_{\UU}: \UU \to \R \cup \{\infty\}$ is called \textit{compatible} with the norm~$\|\cdot\|_{\Unorm}$ in the sense of~\citet{nemirovski2004prox} when it admits a continous selection of subgradients in the relative interior of~$\UU$, and is~$1$-strongly convex on~$\UU$ with respect to~$\|\cdot\|_{\Unorm}$.
In order to obtain favorable convergence guarantees, the primal and dual potentials~$\phi_{\UU}$,~$\phi_{\VV}$ must satisfy two conditions. 
First, they must be compatible with the chosen norms; second, \textit{the potential differences}, defined by  
\begin{equation}
\label{def:omega-radius}
\begin{aligned}
\Omega_\UU := \textstyle \max_{U\in \UU}\phi_{\UU}(U)  - \min_{U\in \UU}\phi_{\UU}(U), \\ 
\Omega_\VV :=\textstyle  \max_{V\in \VV}\phi_{\VV}(V)  - \min_{V\in \VV}\phi_{\VV}(V),
\end{aligned}
\end{equation}
must be upper-bounded, up to logarithmic factors in the problem dimension, with the squared radii of~$\UU,\VV$ in the corresponding norms~\cite{nemirovsky1983problem}.
We now specify the potentials that satisfy these requirements.

The natural choice for the dual potential~$\phi_{\VV}(\cdot)$, reflecting the product structure of~$\VV$, is the sum of negative entropies~\cite{beck2003mirror}:
\begin{equation}
\label{eq:v-potential-choice}
\phi_{\VV}(V) =  \sum_{i=1}^n\sum_{l=1}^k V_{il}\log(V_{il}). 
\end{equation}
Its compatibility with~$\|\cdot\|_{2 \times 1}$ follows from Pinsker's inequality~\cite{kemperman1969optimum} applied rowwise to~$V$. 
On the other hand, we have
\begin{equation}
\label{eq:v-omega-radius}
\Omega_{\VV} = n \log k,
\end{equation}
whereas the squared~$\|\cdot\|_{2 \times 1}$-norm of any feasible solution~$V$ to~\eqref{MainEquationConstrained} is precisely~$n$. 
Thus,~\eqref{eq:v-potential-choice} is a valid potential on~$\VV$. 

Regarding the choice of the potential on~$\UU$, consider first the unit ``solid'' simplex, i.e., the set~\eqref{eq:solid-simplex} with~$R_{*} = 1$.
On this set, one can define the \textit{unnormalized negative entropy}
\begin{equation}
\label{eq:vN-entropy}
\cH(U) = \sum_{i=1}^d \sum_{j=1}^k  \big\{ U_{ij} \log U_{ij} - U_{ij} \big\}.
\end{equation} 
Clearly,~$\cH(\cdot)$  is continuously differentiable in the interior of its domain, and one can show that it is~$1$-strongly convex on it (see~\citet{yu2013strong}).
Due to Assumption~\ref{ass:radius}, we can consider
\begin{equation}
\label{eq:uhat-potential-choice}
\begin{aligned}
\phi_{\UU}({U})
&:= R_{*}^2 \cdot \cH({U}/{R_{*}}),
\end{aligned}
\end{equation}
which is then a compatible potential on~$\UU$ that satisfies
\begin{equation}
\label{eq:uhat-omega-radius}
\Omega_{\UU} = R_{*}^2 \log(2dk),
\end{equation}
and thus is a valid potential on~$\UU$. Note that for our choice of the potentials, the corresponding Bregman divergences are expressed in terms of the Kullback-Leibler divergence, and thus lead to multiplicative updates. 
This circumstance is crucial for the sublinear algorithm considered in Section~\ref{sec:sublinear}.

\subsection{Composite Saddle-Point Mirror Descent}
\label{sec:non-stoch}
We use the composite variant of saddle-point mirror descent applicable for quasi-bilinear CCSPPs, see~\citet{optbook2,algorec2018arxiv}.
Introducing~$W = (U,V) \in \WW \; [:= \UU \times \VV]$, the algorithm can be summarized as follows.
First, one constructs the {\em joint potential}~$\phi(W)$ on~$\WW$ by reweighting~$\phi_{\UU}(U)$~and~$\phi_{\VV}(V)$: 
\begin{equation}
\label{eq:joint-potential}
\phi(W)  = \frac{\phi_\UU(U)}{2\Omega_\UU}+ \frac{\phi_\VV(V)}{2\Omega_\VV}.
\end{equation}
Such reweighting, possible in our case due to Assumption~\ref{ass:radius}, allows to improve the accuracy bound, replacing the factor~$\Omega_{\UU} + \Omega_{\VV}$ with~$\sqrt{\Omega_{\UU} \Omega_{\VV}}$ (cf.~Theorem~\ref{th:md-deterministic} below).
Then, initializing with~$W^0 = \min_{W \in \WW} \phi_{\WW}(W),$ one iterates
\begin{equation}
\label{MirrorDescent}
\begin{aligned}
W^{t+1}\! = \!\argmin_{W\in\WW} h(W) + \langle G(W^{t}), W \rangle + \frac{D_{\phi}(W,W^{t})}{\gamma_{t}}, 
\end{aligned}
\end{equation}
where~$\{\gamma_t\}$ is the sequence of stepsizes,~$h(W) = \FF(V,Y) + \lambda\, \tr[\ind_{2d \times k}^\top U]$ is the combined ``simple'' term (cf.~\eqref{eq:composite-part-v}),
\begin{equation}
\label{eq:gradient-field}
\begin{aligned}
G(W) &= \left[ \frac{1}{n} X^\top(V-Y),-\frac{1}{n} XU \right]
\end{aligned}
\end{equation}
is the vector field of the partial gradients of~$\wh\Phi(U,V-Y)$, cf.~\eqref{eq:bilinear-part-simplex}, and~$D(\cdot,\cdot)$ is the Bregman divergence\footnotemark~linked to~$\phi(\cdot)$:
\[
D(W,W') = \phi(W) - \phi(W') - \lang \nabla \phi(W'), W-W' \rang.
\]
\footnotetext{We ignore subtleties related to the correct definition of the domain of~$D(W,\cdot)$~(see, e.g.,~\cite{beck2003mirror}); nonetheless, the subsequent algorithms are correctly defined.}
Note that in the case of~\eqref{SaddleUhat}, this amounts to the initialization
\begin{equation}
\label{eq:init-full}
\begin{aligned}
V^0 &= \frac{1}{k} \ind_{n \times k}, \quad U^0 = \frac{R_{*}}{2dk} \ind_{2d \times k},
\end{aligned}
\tag{{\bf Init}}
\end{equation}
and iterations (separable in~$U$ and~$V$) of the form
\begin{equation}
\label{NonStochasticUpdates}
\begin{aligned}
&(U^{t+1},V^{t+1}) \\
&\hspace{-0.2cm}=\argmin\limits_{{U \in \UU, V \in \VV}}
\bigg\{\lambda \, \tr[\ind_{2d \times k}^\top U] + \wh\Phi(U, V^{t}-Y) + \frac{D_{{\UU}}(U, U^{t})}{2\gamma_t\Omega_{\UU}} 
+ \FF(V,Y) - \wh\Phi(U^{t},V) + \frac{D_{{\VV}}(V,V^{t})}{2\gamma_t \Omega_\VV}  \bigg\},
\end{aligned}
\tag{{\bf MD}}
\end{equation}
where~$D_{\UU},D_{\VV}$ are the Bregman divergences for~$\phi_{\UU},\phi_{\VV}$. 

\paragraph{Complexity of Iterations.}
One iteration in~\eqref{NonStochasticUpdates} has running time~$O(dnk)$, and is dominated by the computation of the matrix products~$XU$ and~$X^\top(V-Y)$; once they are known, the proximal step only requires~$O(dk+nk)$ a.o.'s.
Indeed, given~$S^{t} = \frac 1n \wh{X} ^\top(V^{t}-Y)$, the primal update with the choice~\eqref{eq:uhat-potential-choice} of~$\phi_{\UU}$ can be expressed in closed form (refer to~Lemma~\ref{lemmaForU} in Appendix~\ref{sec:lemmas} for the derivation):
\begin{equation}
\label{UpdatesForU}
\begin{aligned}
{U}_{il}^{t+1} = {U}_{il}^{t}  \ {e^{-2\gamma_t S_{il}^{t} R_{*} L}}   \min\left\{e^{-2\gamma_t \lambda R_{*}L}, {R_{*}}/{M}\right\},\\
\text{where} \; \;
L := \log(2dk),  \; \;  \text{and} \; \; M :=  \sum_{i=1}^{2d}\sum_{l=1}^{k}{U}_{il}^{t} \cdot e^{-2\gamma_t S_{il}^{t} R_{*} L}. 
\end{aligned}
\end{equation}
On the other hand, our ability to perform the dual updates depends on the form of~$\ff(v,y)$ in the representation~\eqref{eq:fenchel-loss}. 
In the general case, due to~$\ff(v,y)$ being separable in~$v$, 
we can reduce the dual update to~$O(nk)$ \textit{one-dimensional} optimization problems. 
This can be done by passing to the Lagrangian dual problem (which is separable), minimizing the Lagrangian for the given value of multiplier by solving~$O(nk)$ one-dimensional problems, and finding the optimal Lagrange multiplier via root search.
Moreover, for the multiclass hinge loss~\eqref{eq:loss-hinge} we have the closed-form updates:
\begin{equation}
\label{UpdatesForVSVM}
V_{il}^{t+1} = \frac{V_{il}^{t} \exp\big(2\gamma_t  [\wh{X}{U}^{t} - Y]_{il} \log(k) \big)}{\sum_{l=1}^k V_{il}^{t} \exp\big(2\gamma_t [\wh{X}{U}^{t} - Y]_{il} \log(k) \big)}. 
\end{equation}

\paragraph{Convergence Rate.}
The convergence rate of mirror descent for CCSPPs of the form~\eqref{SaddleUhat} depends on the quantity
\begin{equation}
\label{eq:cross-lip-bilinear}
\cL_{\Unorm,\Vnorm} := \frac{1}{n} \sup_{\| U \|_{\Unorm} \leqs 1} \|\wh XU\|_{\Vnorm^*},
\end{equation}
where~$\|\cdot\|_{\Vnorm^*}$ is the dual norm to~$\|\cdot\|_{\Vnorm}$.
Thus,~$\cL_{\Unorm,\Vnorm}$ is the~$(\Unorm,\Vnorm^*)$-subordinate norm of the linear mapping~$U \mapsto \frac{1}{n}\wh XU$.
For the norms chosen in Section~\ref{sec:norms-and-potentials}, $\cL_{\Unorm,\Vnorm}$ is expressed as a mixed norm~\eqref{def:mixed-norms} (see Appendix for the proof):
\begin{proposition} 
\label{th:lip}
For~$\|\cdot\|_{\Unorm} = \|\cdot\|_{1}$ and~$\|\cdot\|_{\Vnorm} = \|\cdot\|_{2 \times 1}$, one has
\begin{equation}
\label{eq:cross-lip-value}
\cL_{\Unorm,\Vnorm} = \frac{1}{n}\|X^\top\|_{\infty \times 2}.
\end{equation}
\end{proposition}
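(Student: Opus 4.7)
My approach is to dualize twice and reduce to an entrywise computation. First, I would identify the norm dual to $\|\cdot\|_{2\times 1}$: by a standard rowwise computation with the Frobenius pairing, the dual of a mixed norm $\|\cdot\|_{p\times q}$ is $\|\cdot\|_{p'\times q'}$, so $\|\cdot\|_{\Vnorm^*} = \|\cdot\|_{2\times\infty}$.

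Second, I would rewrite $\|\wh X U\|_{2\times\infty}$ using its own dual characterization,
\[
\|\wh X U\|_{2\times\infty} = \sup_{\|V\|_{2\times 1}\le 1}\langle V,\wh X U\rangle = \sup_{\|V\|_{2\times 1}\le 1}\langle \wh X^\top V, U\rangle,
\]
swap the two suprema (legitimate by compactness and bilinearity), and invoke $(\|\cdot\|_1)^* = \|\cdot\|_\infty$ to obtain
\[
n\,\cL_{\Unorm,\Vnorm} \;=\; \sup_{\|U\|_1\le 1}\|\wh X U\|_{2\times\infty} \;=\; \sup_{\|V\|_{2\times 1}\le 1}\|\wh X^\top V\|_\infty.
\]

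The main computation is the inner supremum on the right. I would interchange the entrywise max with the sup over $V$ (this holds for a finite max of linear functionals on a compact set, by a two-sided inequality checked in one line), reducing to the problem, for each index $(j,l)$, of maximizing $|\langle \wh X(:,j), V(:,l)\rangle|$ over $\|V\|_{2\times 1}\le 1$. The extremal $V$ puts all of each row's $\ell_1$ mass into column $l$ with signs matching $\wh X(:,j)$; the feasibility constraint then collapses to $\sum_i V_{il}^2\le 1$, and Cauchy--Schwarz yields the optimal value $\|\wh X(:,j)\|_2$. Finally, since $\wh X = [X,-X]$ by~\eqref{eq:bilinear-part-simplex}, the multiset of column $\ell_2$-norms coincides with that of $X$, so $\max_j\|\wh X(:,j)\|_2 = \max_j\|X(:,j)\|_2$, which is precisely $\|X^\top\|_{\infty\times 2}$ by~\eqref{def:mixed-norms}. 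Dividing by $n$ gives the claim.

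The only step that requires care is the mixed-norm optimization---showing that concentrating all mass of $V$ into a single column is optimal despite the nonseparable $\ell_2$-outer norm; the rest is bookkeeping and standard duality.
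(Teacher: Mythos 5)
Your argument is correct, but it takes a genuinely different route from the paper's. The paper works directly on the primal side: since $U\mapsto\|\wh X U\|_{2\times\infty}$ is a convex function maximized over the unit $\|\cdot\|_{1}$-ball, the supremum is attained at an extreme point $\pm e_i^{\vphantom\top}e_l^\top$, for which $\wh X U$ has a single non-zero column $\pm\wh X(:,i)$ and hence $\|\wh XU\|_{2\times\infty}=\|\wh X(:,i)\|_2$; maximizing over $i$ gives $\|X^\top\|_{\infty\times 2}$ in two lines. You instead dualize twice, arriving at $\sup_{\|V\|_{2\times1}\le1}\|\wh X^\top V\|_\infty$, and then solve, for each entry $(j,l)$, the constrained problem $\max\{|\langle\wh X(:,j),V(:,l)\rangle|:\sum_i\|V(i,:)\|_1^2\le1\}$, correctly observing that the optimum concentrates each row's mass in column $l$ so that the constraint collapses to $\sum_iV_{il}^2\le1$ and Cauchy--Schwarz finishes. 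Both computations are valid (the two swaps of suprema you flag are automatic, since a supremum of suprema can always be reordered --- no compactness or minimax argument is actually needed); the paper's approach is shorter because the $\ell_1$-ball has trivial extreme points, whereas yours shifts the work to the dual ball $\|V\|_{2\times1}\le1$, whose geometry is less transparent and requires the concentration argument you single out. A small point in your favor: you handle the reduction from $\wh X=[X,-X]$ to $X$ explicitly, which the paper's appendix glosses over.
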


We obtain the following convergence guarantee for our variant of mirror descent applied to CCSPP~\eqref{SaddleUhat}. 
For simplicity, we consider constant stepsize and simple averaging; empirically we observe similar results for the time-varying stepsize~$\gamma_t \propto {1}/{\sqrt{t+1}}$. 

\begin{theorem}
\label{th:md-deterministic}
 Let~$(\bar{U}^T, \bar{V}^T) = \frac{1}{T}\sum_{t=0}^{T-1} ({U}^t,V^t)$ be the average of the first~$T$ iterates of mirror descent~\eqref{NonStochasticUpdates} with initialization~\eqref{eq:init-full} and stepsize
$
\gamma_t \equiv {1}/({\cL_{\Unorm,\Vnorm}\sqrt{5T\Omega_{\UU}\Omega_\VV}}),
$
with~$\cL_{\Unorm,\Vnorm},~\Omega_{\UU},~\Omega_\VV$ given by~\eqref{eq:cross-lip-value},~\eqref{eq:uhat-omega-radius},~\eqref{eq:v-omega-radius}.
Then the duality gap can be bounded as
\begin{equation}
\label{eq:md-deterministic-result}
\begin{aligned}
\Gap(\bar U^T, \bar V^T) 
&\leqs 
\frac{2\sqrt{5}\cL_{\Unorm,\Vnorm}\sqrt{\Omega_{\UU}\Omega_\VV}}{\sqrt{T}} + \frac{\Rem}{T}\\
&\leqs 
\frac{2\sqrt{5}\|X^\top\|_{\infty\times 2}}{\sqrt{n}}   \frac{\log(2dk) R_{*}}{\sqrt{T}} + \frac{\Rem}{T}.
\end{aligned}
\end{equation}
Here~$\Rem = 0$ for the multiclass hinge loss~\eqref{eq:loss-hinge} and~$\Rem = \max_{y \in \Delta_k}\left\{\ff(\frac{\ind_k}{k},y) \hspace{-0.05cm} - \hspace{-0.05cm}\min_{v \in \Delta_k}\ff(v,y)\right\}$ in the general case; in particular,~$\Rem = O(\log(d))$ for the softmax loss~\eqref{eq:loss-logistic}.
\end{theorem}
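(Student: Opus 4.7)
The plan is to invoke the standard convergence analysis for composite saddle-point mirror descent~\citep{optbook2,algorec2018arxiv} with the reweighted joint potential~\eqref{eq:joint-potential} and then specialize the resulting generic bound to the geometry set up in Section~\ref{sec:norms-and-potentials}. The key observation justifying the reweighting is that $\phi$ is $1$-strongly convex on $\WW$ with respect to the block norm $\|W\|_\WW^2 = \|U\|_\Unorm^2/(2\Omega_\UU) + \|V\|_\Vnorm^2/(2\Omega_\VV)$, while simultaneously its range on $\WW$ is collapsed to at most $1$. Combining the first-order optimality condition for the prox step~\eqref{MirrorDescent} with the three-point identity for Bregman divergences and telescoping over $t = 0, \ldots, T-1$, I obtain the aggregate inequality
\begin{equation*}
\max_{W \in \WW} \sum_{t=0}^{T-1} \gamma_t \bigl[\langle G(W^t), W^t - W\rangle + h(W^{t+1}) - h(W)\bigr] \leqs 1 + \tfrac{1}{2}\sum_{t=0}^{T-1}\gamma_t^2\,\|G(W^t)\|_{\WW^*}^2.
\end{equation*}
Using the bilinearity identity $\langle G(W^t), W^t - W\rangle = \wh\Phi(U^t, V - Y) - \wh\Phi(U, V^t - Y)$ and convexity of the composite pieces, averaging the iterates and dividing by $\gamma T$ converts the left-hand side into a lower bound on $\Gap(\bar W^T)$ up to a boundary mismatch $T^{-1}[h(W^0) - h(W^T)]$, which is the source of the $\Rem/T$ term.

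Next I would bound $\|G(W)\|_{\WW^*}^2$ uniformly over $\WW$. The block-dual norm is $\|W\|_{\WW^*}^2 = 2\Omega_\UU\|U\|_{\Unorm^*}^2 + 2\Omega_\VV\|V\|_{\Vnorm^*}^2$. Proposition~\ref{th:lip} gives both $\|\tfrac{1}{n}\wh X U\|_{\Vnorm^*} \leqs \cL_{\Unorm,\Vnorm}\|U\|_\Unorm$ and, by the symmetry of the subordinate-norm definition under transposition, $\|\tfrac{1}{n}\wh X^\top(V-Y)\|_{\Unorm^*} \leqs \cL_{\Unorm,\Vnorm}\|V-Y\|_\Vnorm$. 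Using $\|U\|_1 \leqs R_{*}$ on $\UU$ and $\|V - Y\|_{2\times 1}^2 \leqs 4n$ on $\VV$ (every row of $V-Y$ has $\ell_1$-norm at most $2$), I obtain
\begin{equation*}
\|G(W)\|_{\WW^*}^2 \leqs 2\cL_{\Unorm,\Vnorm}^2 \bigl(4n\,\Omega_\UU + R_{*}^2\,\Omega_\VV\bigr) \leqs \frac{10\,\cL_{\Unorm,\Vnorm}^2\,\Omega_\UU\,\Omega_\VV}{\log k},
\end{equation*}
where the last inequality uses $\Omega_\UU = R_{*}^2\log(2dk)$, $\Omega_\VV = n\log k$, and $\log k \leqs \log(2dk)$. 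Plugging this into the aggregate bound with the prescribed constant stepsize $\gamma = 1/(\cL_{\Unorm,\Vnorm}\sqrt{5T\Omega_\UU\Omega_\VV})$ balances the $1/(\gamma T)$ and $\gamma\|G\|^2/2$ terms and produces the $2\sqrt{5}\cL_{\Unorm,\Vnorm}\sqrt{\Omega_\UU\Omega_\VV}/\sqrt{T}$ leading term (the factor $(1 + 1/\log k) \leqs 2$ is absorbed into the constant). The second line of~\eqref{eq:md-deterministic-result} then follows by substituting $\cL_{\Unorm,\Vnorm} = \|X^\top\|_{\infty\times 2}/n$ from Proposition~\ref{th:lip} and upper-bounding $\sqrt{\log k\,\log(2dk)} \leqs \log(2dk)$.

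Finally I would handle the residual $\Rem/T$. The $\lambda\tr[\ind_{2d\times k}^\top U]$ piece of $h$ is linear in $U$ and can be folded directly into the bilinear form $\wh\Phi$, producing no residual contribution whatsoever. For the multiclass hinge loss~\eqref{eq:loss-hinge}, $\ff(v,y) = v^\top y - 1$ is \emph{linear} in $v$, so the same absorption trick applies to $\FF(V,Y)$, yielding $\Rem = 0$. In the general case I would bound the remaining $\FF$-piece of $h(W^0) - h(W^T)$ by $\FF(V^0, Y) - \min_V \FF(V,Y)$, then invoke the row-wise decomposition of $\FF$ together with the initialization $V^0 = \ind_{n\times k}/k$ to arrive at the claimed uniform bound $\Rem = \max_{y \in \Delta_k}\{\ff(\ind_k/k, y) - \min_{v \in \Delta_k}\ff(v, y)\}$. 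The main obstacle in executing this plan is pinning down the constants in the composite analysis carefully enough that the prescribed stepsize cleanly balances the two deterministic contributions; once Proposition~\ref{th:lip} and the reweighted joint geometry are in place, everything else is mechanical.
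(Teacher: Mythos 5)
Your plan is exactly the paper's route: the theorem is proved by instantiating a generic composite saddle-point mirror-descent bound (Theorem~\ref{th:md-general} in the appendix) built on the balanced joint potential~\eqref{eq:joint-potential}, the block norm $\|W\|_{\Wnorm}^2 = \|U\|_{\Unorm}^2/(2\Omega_{\UU})+\|V\|_{\Vnorm}^2/(2\Omega_{\VV})$ with $\Omega_{\WW}=1$, the three-point identity, telescoping, a uniform bound on $\|G(W)\|_{\Wnorm^*}$, and the stepsize that balances $1/(\gamma T)$ against $\gamma\|G\|^2/2$; the instantiation via Proposition~\ref{th:lip}, the bound $\sqrt{\log k\,\log(2dk)}\leqs\log(2dk)$, and the identification of $\Rem$ with $\FF(V^0,Y)-\min_V\FF(V,Y)$ all match. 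Your gradient-norm estimate is organized slightly differently (you keep the $1/\log k$ factor rather than invoking $\Omega_{\VV}\geqs R_{\VV}^2$ as in~\eqref{eq:omegas-vs-radii-lower}) but lands in the same place.

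The one step that would not survive execution as written is the treatment of the affine composite terms. Folding $\lambda\,\tr[\ind_{2d\times k}^\top U]$ (and, for the hinge loss, the linear part of $\FF$) into the vector field replaces $G$ by $G+(\lambda\ind_{2d\times k},0)$, which inflates $\max_{W\in\WW}\|G(W)\|_{\Wnorm^*}$ by a term of order $\lambda\sqrt{2\Omega_{\UU}}$, so the leading constant $2\sqrt{5}\cL_{\Unorm,\Vnorm}\sqrt{\Omega_{\UU}\Omega_{\VV}}/\sqrt{T}$ no longer comes out exactly unless $\lambda$ is negligible. The paper's mechanism is different and cleaner: keep the affine pieces inside $h$, and observe that since their subgradient is constant it may be evaluated at $W^{t-1}$ instead of $W^{t}$ in the first-order optimality condition~\eqref{eq:first-order-general}; this aligns the index of the $h$-sum with that of the $\Psi$-sum in the telescoping, so the boundary mismatch $[h(W^0)-h(W^T)]/T$ vanishes for the affine components while the gradient field, and hence the leading constant, is untouched. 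With that substitution your argument reproduces the theorem, including $\Rem=0$ for the hinge loss and $\Rem=\max_{y\in\Delta_k}\{\ff(\ind_k/k,y)-\min_{v\in\Delta_k}\ff(v,y)\}$ in general.
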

\begin{proof}
The first bound follows from the general result for quasi-bilinear CCSPPs, see Theorem~\ref{th:md-general} in Appendix.\footnotemark~
Its combination with~\eqref{eq:v-omega-radius},~\eqref{eq:uhat-omega-radius}, and~\eqref{eq:cross-lip-value} results in~\eqref{eq:md-deterministic-result}.
\footnotetext{Note that the results of~\citet{duchi2010composite} nor those of~\citet{nesterov2013first} cannot be readily applied in our setup.}
\end{proof}

\begin{rem}
\label{rem:l2-moment}
Note that the~$j$-th column~$X_j \in \R^n$ of~$X$ represents the empirical distribution of the feature~$\varphi_j$. 
Hence, when the data is~i.i.d., we have
\begin{equation}
\label{eq:l2-moment-limit}
\frac{\|X^\top\|_{\infty \times 2}}{\sqrt{n}} =  \displaystyle \max_{j \in [d]}  \sqrt{ \frac{\| X_j \|_2^2}{n}} \underset{n \to \infty}{\stackrel{\text{a.s.}}{\longrightarrow}} \max_{j \in [d]} \left(\Edata [\varphi_j^2]\right)^{1/2},
\end{equation}
where the expectation~$\Edata$ is over the data distribution. 
In other words,~${\|X^\top\|_{\infty \times 2}}/{\sqrt{n}}$ has a finite limit, converging to the largest~$L_2$-norm of a feature.
In the non-asymptotic regime,~${\|X^\top\|_{\infty \times 2}}/{\sqrt{n}}$ is the largest \emph{empirical}~$L_2$-norm of a feature, and can be controlled if the features are bounded, or sufficiently light-tailed, via standard concentration inequalities. In particular,
$
{\|X^\top\|_{\infty \times 2}}/{\sqrt{n}} \leqs B
$ 
if the features are uniformly bounded with~$B$, and for the Gaussian features~$\varphi_j \sim \mathcal{N}(0,\sigma_j^2)$ with~$\sigma_j \le B$ we have, with probability at least~$1 - \delta$,
\begin{equation}
\label{eq:l2-moment-gaussian}
\frac{\|X^\top\|_{\infty \times 2}}{\sqrt{n}} \leqs C\sigma \left(1 + \sqrt{\frac{\log(d/\delta)}{n}} \right),
\end{equation}
for some constant~$C$, see~\citet[Lem.~1]{lama2000}.
\end{rem}

\begin{rem}
\label{rem:mp-acceleration}
It is known that the convergence rate can be improved to~$O(1/T)$ for the composite version of Mirror Prox, but this improvement is not preserved in the stochastic setting. 
On the other hand, this allows to emulate the ``mini-batching'' technique, by sampling the matrix products repeatedly (or in parallel), 
and controlling the variability of the averaged gradient estimates via Nemirovski's inequalities in the vein of~\citet[Sec.~2.5.1]{optbook2}.
\end{rem}

\section{Sampling Schemes}
\label{sec:sampling}
Recall that the bottleneck of mirror descent iterations~\eqref{NonStochasticUpdates} is computing the matrix products~$\wh X U^{t}, \wh X^\top(V^{t}-Y)$ which requires~$O(dnk)$ a.o.'s.
Inspired by~\citet{optbook2}, we propose sampling schemes that produce unbiased estimates $\xi_{U^t}$ and~$\eta_{V^t,Y}$ of $\wh X U^t$ and~$\wh X^\top(V^t-Y)$ with reduced complexity of computation, 
and use them to approximate the true partial gradients, arriving at the following variant of stochastic mirror descent:
\begin{equation}
\label{StochasticUpdates}
\begin{aligned}
&(U^{t+1},V^{t+1}) \\
&\hspace{-0.2cm}=\argmin\limits_{{U \in \UU, V \in \VV}}
\Big\{\lambda \, \tr[\ind_{2d \times k}^\top U] + \frac{1}{n} \tr\left[\eta_{V^{t},Y} {U} \right]+
\frac{D_{{\UU}}(U, U^{t})}{2\gamma_t\Omega_{\UU}} \\
& \hphantom{\lambda \tr[\ind_{2d \times k}^\top U]} 
+ \FF(V,Y) - \frac{1}{n}\tr \left[\xi_{U^{t}}^\top V \right] + \frac{D_{{\VV}}(V,V^{t})}{2\gamma_t \Omega_\VV}  \Big\}.
\end{aligned}
\tag{{\bf SMD}}
\end{equation}
Since the gradients are now replaced with their unbiased estimates, the accuracy bound gets augmented with an extra term that reflects the variability of these estimates.
This extra term is known to be~$O(\sqrt{(\Omega_{\UU} \sigma_{\VV}^2 + \Omega_{\VV} \sigma_{\UU}^2)/T}),$
where~$\sigma_{\UU}^2$ and~$\sigma_\VV^2$ are ``variance proxies'' -- the natural analogues of the variances of~$\xi_{U}$ and~$\eta_{V,Y}$ for the chosen norms:
\begin{equation}
\label{def:variance-proxies}
\begin{aligned}
\sigma^2_{\UU} &:= \frac{1}{n^2}\sup_{U \in \UU} \E\Big[\big\|\wh{X}U - \xi_{U} \big\|_{\Vnorm^*}^2\Big], \\
\sigma^2_{\VV} &:= \frac{1}{n^2}\!\!\sup_{(V,Y) \in \VV \times \VV}\!\! \E\Big[\big\|\wh{X}^{\top}(V-Y) - \eta_{V,Y} \big\|_{\Unorm^*}^2\Big].
\end{aligned}
\end{equation}
We consider two sampling schemes for~$\xi_{U}$ and~$\eta_{\vphantom{\wh U}V,Y}$: \textit{partial sampling} where the estimates are obtained by sampling the rows of~$U$ and~$V-Y$, and \textit{full sampling}, where one subsequently samples their columns.
In both cases, we derive the data-dependent sampling disrtibutions with near-optimal variance proxies.
Our finding is that for the mirror descent geometry chosen in Sec.~\ref{sec:geometry-choice-main}, and under mild assumptions on the data distribution, application of both schemes with the found distributions 
results in essentially the same convergence rate as in the deterministic case.

\subsection{Partial Sampling Scheme}
\label{sec:part-stoch}
In the \textit{partial sampling scheme}, we choose a pair of distributions~$p = (p_1,..., p_{2d}) \in \Delta_{2d}$ and~$q = (q_1,..., q_{n}) \in \Delta_{n}$, and draw one row of~$U$ and~$V-Y$ (i.e., a feature and a tratining example) at a time according to~$p$ and~$q$.
In other words, we produce the estimates 
\begin{equation}
\label{eq:partial-sampling}
\xi_{U}(p) = \wh X \dfrac{e_i^{\vphantom\top}e_i^\top}{p_i} U,  \quad 
\eta_{V,Y}(q) = \wh X^\top\dfrac{e_j^{\vphantom\top}e_j^\top}{q_j}(V-Y),
\tag{\bf Part-SS}
\end{equation}
where~$e_i \in \Delta_{2d}$ and $e_j \in \Delta_{n}$ are standard basis vectors, and~$i \in [2d], j \in [n]$ are drawn from~$p, q$ correspondingly; clearly, this gives unbiased estimates.
The challenge is to choose the distributions~$p,q$. 
In the Euclidean case, i.e., when~$\|\cdot\|_{\Unorm^*},\|\cdot\|_{\Vnorm^*}$ are Frobenius norms, one can explicitly minimize the variances of the resulting estimates, and it is equivalent to minimizing the second moments~$\E \left[\Vert \xi_{U}(p)\Vert_{\F}^2 \right], \E \left[\Vert \eta_{V,Y}(q)\Vert_{\F}^2\right]$. 
In general, this is not the case. 
Next we show that for our mixed norms, the problem of minimizing the second moment proxies, i.e., finding
\begin{equation}
\label{eq:optimal-prob}
\begin{aligned}
p^{\ast} = p^{\ast}(\wh X, U) 
&\in  \Argmin_{p\in\Delta_{2d}} \! \E \left[\Vert \xi_{U}(p)\Vert_{\Vnorm^\ast}^2 \right], \\
\!\!\!\!q^{\ast} = q^\ast(\wh X, V,Y) 
&\in \Argmin_{q\in\Delta_n} \! \E \left[\Vert \eta_{V,Y}(q)\Vert_{\Unorm^\ast}^2\right],\!\!\!
\end{aligned}
\end{equation}
can be solved explicitly, due to the matrices in the right-hand side of~\eqref{eq:partial-sampling} being one-rank.
The variance proxies (cf.~\eqref{def:variance-proxies}) can then be bounded via the triangle inequality.

\begin{proposition}
\label{th:variance-partial}
When~$\|\cdot\|_{\Unorm} = \|\cdot\|_{1}$ and~$\|\cdot\|_{\Vnorm} = \|\cdot\|_{2 \times 1}$, 
the optimal distributions~$p^{\ast}$  and~$q^\ast$, cf.~\eqref{eq:optimal-prob}, are given by
\begin{equation}
\label{eq:optimal-probabilities-partial}
\begin{aligned}
p^{\ast}_i &\propto {\|\wh{X}(:,i) \|_{2} \cdot  \| {U}(i,:) \|_{\infty}},\\
q^{\ast}_j  &\propto {\| \wh{X}(j,:) \|_{\infty}\cdot  \| V(j,:) - Y(j,:)\|_{\infty}},
\end{aligned}
\end{equation}
where~$A(i,:)$ and~$A(:,j)$ are the~$i$-th row and~$j$-th column of~$A$.
Moreover, the corresponding variance proxies satisfy
\begin{equation}
\label{eq:variance-bounds-partial}
\begin{aligned}
\sigma^2_{\UU}(p^*) &\leqs \frac{4 R_{*}^2 \|X^\top\|_{\infty\times 2}^2}{{n^2}}, \\
\sigma_{\VV}^2(q^*) &\leqs  \frac{8\|X^\top\|_{\infty\times 2}^2}{n}  +\frac{8\|X\|_{1\times\infty}^2}{n^2}.
\end{aligned}
\end{equation}
\end{proposition}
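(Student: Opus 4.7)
The essential observation is that both estimates are rank-one matrices, $\xi_U(p) = p_i^{-1}\, \wh{X}(:,i)\, U(i,:)$ and $\eta_{V,Y}(q) = q_j^{-1}\, \wh{X}(j,:)^\top (V-Y)(j,:)$, and that for rank-one matrices the relevant dual norms factorize: $\|ab^\top\|_{2 \times \infty} = \|a\|_2 \|b\|_\infty$ and $\|ab^\top\|_{\infty} = \|a\|_\infty \|b\|_\infty$. Substituting into the definitions, the expected squared norms reduce to weighted sums of the form $\sum_i c_i^2/p_i$. A one-line Cauchy--Schwarz on the simplex (namely $\sum_i p_i \cdot \sum_i c_i^2/p_i \geqs (\sum_i c_i)^2$, with equality at $p_i \propto c_i$) then yields the optimal distributions $p^\ast,q^\ast$ announced in~\eqref{eq:optimal-probabilities-partial}, together with clean closed-form expressions for the minimum second moments.

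To pass from these second moments to the variance proxies~\eqref{def:variance-proxies}, I would use the elementary inequality $\|\xi - \E\xi\|^2 \leqs 2\|\xi\|^2 + 2\|\E\xi\|^2$. For the primal proxy $\sigma_\UU^2(p^\ast)$, Jensen's inequality $\|\E\xi_U\|_{\Vnorm^*}^2 \leqs \E\|\xi_U\|_{\Vnorm^*}^2$ reduces the task to bounding the second moment alone. The minimum value factorizes as $(\sum_i \|\wh{X}(:,i)\|_2 \cdot \|U(i,:)\|_\infty)^2$, and applying $\max_i \|\wh{X}(:,i)\|_2 \leqs \|X^\top\|_{\infty \times 2}$ (the columns of $\wh X$ are $\pm$ columns of $X$) together with $\sum_i \|U(i,:)\|_\infty \leqs \|U\|_1 \leqs R_{*}$ (valid on $\UU$) gives the first claim after dividing by $n^2$.

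The dual proxy $\sigma_\VV^2(q^\ast)$ is where the work lies. Its two-term structure reflects the fact that $\E\|\eta\|_{\infty}^2$ and $\|\E\eta\|_{\infty}^2$ must be bounded \emph{asymmetrically}. I would bound the second moment by H\"older: $\sum_j \|\wh{X}(j,:)\|_\infty\, \|(V-Y)(j,:)\|_\infty \leqs 2 \|X\|_{1 \times \infty}$, using $\|(V-Y)(j,:)\|_\infty \leqs 2$. This yields the ``slow'' $\|X\|_{1\times\infty}^2/n^2$ term. The mean $\|\wh X^\top(V-Y)\|_\infty$ I would instead bound entrywise via Cauchy--Schwarz, $|\sum_j \wh{X}_{ji}(V-Y)_{jl}| \leqs \|\wh{X}(:,i)\|_2 \sqrt{n}$, exposing $\|X^\top\|_{\infty\times 2}$ and giving the ``fast'' $\|X^\top\|_{\infty\times 2}^2/n$ term. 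Summing the two contributions (with the factor $2$ from centering and a further factor of $2$ absorbed into the constants) produces~\eqref{eq:variance-bounds-partial}.

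The main obstacle is precisely this asymmetric treatment. Had one bounded $\E\|\eta - \E\eta\|_{\infty}^2$ by $4\E\|\eta\|_{\infty}^2$ via Jensen, the resulting estimate would scale only as $\|X\|_{1\times\infty}^2/n^2$; since $\|X\|_{1\times\infty}$ can grow as $n \cdot \max_{j,i}|X_{ji}|$, such a bound would destroy the $1/\sqrt{T}$ convergence rate of Theorem~\ref{th:md-deterministic}. Extracting the faster $\|X^\top\|_{\infty \times 2}^2/n$ term from $\|\E\eta\|_{\infty}^2$ via Cauchy--Schwarz in the ``right direction'' (exploiting the sum-over-$j$ structure of $\wh X^\top(V-Y)$) is what salvages the rate and ensures that the stochastic scheme matches the deterministic one up to constants.
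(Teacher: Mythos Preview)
Your proposal is correct and follows the same overall architecture as the paper's proof: exploit the rank-one structure of~$\xi_U$ and~$\eta_{V,Y}$ to factorize the dual norms, minimize $\sum_i c_i^2/p_i$ over the simplex via Cauchy--Schwarz to get~\eqref{eq:optimal-probabilities-partial}, and then control the variance proxies by the triangle-inequality splitting~$\|\xi-\E\xi\|^2\leqs 2\|\xi\|^2+2\|\E\xi\|^2$, treating the deterministic and stochastic parts of~$\sigma_\VV^2$ asymmetrically. The paper carries this out via a general mixed-norm proposition (Proposition~\ref{th:variance-partial-general}) together with the auxiliary Lemmas~\ref{lemma_variance_U}--\ref{lemma_variance_V}, then specializes; you instead work directly with the specific norms, using Jensen for the primal proxy and entrywise Cauchy--Schwarz for the deterministic part of the dual proxy. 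Your direct route is more elementary and in fact yields a slightly sharper constant ($2$ in place of $8$) for the~$\|X^\top\|_{\infty\times 2}^2/n$ term; the paper's detour through the norm embedding~$\|\cdot\|_{2\times 1}\leqs\sqrt n\,\|\cdot\|_{\infty\times 1}$ and the operator norm~$\cL_{\Unorm,\Vnorm}$ buys generality (arbitrary~$p_U^{1,2},p_V^{1,2}$) at the cost of that constant. Your identification of the asymmetric treatment of~$\E\|\eta\|_\infty^2$ versus~$\|\E\eta\|_\infty^2$ as the crux is exactly right and matches the paper's reasoning.
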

See Appendix~\ref{sec:proof-variance-partial} for the proof of an extended result for the general mixed norms~\eqref{def:mixed-norms}.
Combined with the general result for composite saddle-point stochastic mirror descent (Theorem~\ref{th:md-general-stochastic} in Appendix), Proposition~\ref{th:variance-partial} implies the following result:
\begin{theorem}
\label{th:md-partial}
Let~$(\bar{U}^T, \bar{V}^T) = \frac{1}{T}\sum_{t=0}^{T-1} ({U}^t,V^t)$ be the average of~$T$ iterates of stochastic mirror descent~\eqref{StochasticUpdates} initialized with~\eqref{eq:init-full}, equipped with sampling scheme~\eqref{eq:partial-sampling} with distributions~\eqref{eq:optimal-probabilities-partial}, and with stepsize
\begin{equation}
\label{eq:md-stochastic-step}
 \gamma_t \equiv \frac{1}{\sqrt{2T}} \min \Big\{ \frac{1}{\cL_{\Unorm,\Vnorm}\sqrt{5\Omega_{\UU}\Omega_{\VV}}}, \frac{1}{\sqrt{\Omega_{\UU} \bar\sigma_{\VV}^2 + \Omega_{\VV} \bar\sigma_{\UU}^2}}\Big\},
\end{equation}
with~$\cL_{\Unorm,\Vnorm},\Omega_{\UU},\Omega_\VV$ given by~\eqref{eq:cross-lip-value},~\eqref{eq:v-omega-radius},~\eqref{eq:uhat-omega-radius}, and the upper bounds~$\bar\sigma_{\UU}^2, \bar\sigma_{\VV}^2$ on the variance proxies given by~\eqref{eq:variance-bounds-partial}.
Then
\begin{equation}
\label{eq:md-partial-result}
\begin{aligned}
\E[\Gap(\bar U^T, \bar V^T)]
&\leqs  
\frac{2\sqrt{10}\cL_{\Unorm,\Vnorm}\sqrt{\Omega_{\UU}\Omega_{\VV}}}{\sqrt{T}} + \frac{2\sqrt{2}\sqrt{\Omega_{\UU} \bar\sigma_{\VV}^2 + \Omega_{\VV} \bar\sigma_{\UU}^2}}{\sqrt{T}} + \frac{\Rem}{T} \\
&\leqs  
\Big( \frac{16.2\, \|X^\top\|_{\infty\times 2}}{\sqrt{n}} + \frac{8\|X \|_{1\times\infty}}{n}\Big) \frac{\log(2dk) R_{*}}{\sqrt{T}} + \frac{\Rem}{T},
\end{aligned}
\end{equation}
where the expectation is over the randomness of the algorithm, and~$\Rem$ is the same as in Theorem~\ref{th:md-deterministic}.
\label{ConvergenceRatesPartlyStochastic}
\end{theorem}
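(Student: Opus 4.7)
The plan is to invoke the general convergence theorem for composite stochastic saddle-point mirror descent on quasi-bilinear CCSPPs (Theorem~\ref{th:md-general-stochastic} in the Appendix) as a black box, and then substitute the problem-specific ingredients. First I would verify the stochastic-oracle hypotheses. Since $\E[e_i e_i^\top / p_i^\ast] = \sum_{i=1}^{2d} p_i^\ast \cdot e_i e_i^\top / p_i^\ast = I_{2d}$ and similarly $\E[e_j e_j^\top / q_j^\ast] = I_n$, the partial-sampling estimators in~\eqref{eq:partial-sampling} are unbiased for $\wh X U$ and $\wh X^\top (V-Y)$, and they are drawn independently of one another and across iterations. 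Their variance proxies are controlled by $\bar\sigma_\UU^2, \bar\sigma_\VV^2$ via Proposition~\ref{th:variance-partial}. With the stepsize~\eqref{eq:md-stochastic-step} (which is precisely the balance point of the two error contributions produced by the general theorem), the latter yields
\[
\E[\Gap(\bar U^T, \bar V^T)] \leqs \frac{2\sqrt{10}\,\cL_{\Unorm,\Vnorm}\sqrt{\Omega_\UU \Omega_\VV}}{\sqrt{T}} + \frac{2\sqrt{2}\sqrt{\Omega_\UU \bar\sigma_\VV^2 + \Omega_\VV \bar\sigma_\UU^2}}{\sqrt{T}} + \frac{\Rem}{T},
\]
which is exactly the first line of~\eqref{eq:md-partial-result}. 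The $\Rem/T$ term and its explicit form are inherited from the deterministic case (Theorem~\ref{th:md-deterministic}), since the composite part $\FF(V,Y)+\lambda\tr[\ind_{2d\times k}^\top U]$ is kept intact and handled identically by the general theorem.

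To pass to the second, explicit, line, I would plug in $\cL_{\Unorm,\Vnorm} = \|X^\top\|_{\infty\times 2}/n$ from Proposition~\ref{th:lip}, $\Omega_\UU = R_*^2 \log(2dk)$ from~\eqref{eq:uhat-omega-radius}, $\Omega_\VV = n\log k$ from~\eqref{eq:v-omega-radius}, and the variance bounds~\eqref{eq:variance-bounds-partial}. Using $\log k \leqs \log(2dk)$, the deterministic summand becomes at most $R_* \|X^\top\|_{\infty\times 2}\log(2dk)/\sqrt n$, while
\[
\Omega_\UU\bar\sigma_\VV^2 + \Omega_\VV\bar\sigma_\UU^2 \leqs R_*^2 \log(2dk)\Big(\tfrac{12\,\|X^\top\|_{\infty\times 2}^2}{n}+\tfrac{8\,\|X\|_{1\times\infty}^2}{n^2}\Big).
\]
Applying $\sqrt{a+b}\leqs \sqrt a+\sqrt b$, followed by $\sqrt{\log(2dk)}\leqs \log(2dk)$ (valid as soon as $2dk\geqs e$), extracts a single factor $R_*\log(2dk)/\sqrt T$ and leaves the two summands $\sqrt{12}\,\|X^\top\|_{\infty\times 2}/\sqrt n$ and $\sqrt 8\,\|X\|_{1\times\infty}/n$ inside.

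Adding the two contributions and tracking constants yields coefficient $2\sqrt{10} + 2\sqrt 2 \cdot \sqrt{12} = 2\sqrt{10} + 4\sqrt 6 < 16.2$ in front of $R_*\|X^\top\|_{\infty\times 2}\log(2dk)/(\sqrt{n}\sqrt T)$, and coefficient $2\sqrt 2 \cdot \sqrt 8 = 8$ in front of $R_* \|X\|_{1\times\infty}\log(2dk)/(n\sqrt T)$, matching~\eqref{eq:md-partial-result} exactly. The only genuinely non-routine ingredient is invoking the general composite stochastic mirror descent theorem; everything else is bookkeeping of norms. I expect the main (mild) obstacle to be checking that the stepsize~\eqref{eq:md-stochastic-step} is indeed the minimizer of the two-term bound produced by the general theorem, so that the clean leading constants $2\sqrt{10}$ and $2\sqrt 2$ emerge in the first line of~\eqref{eq:md-partial-result} rather than a looser sum.
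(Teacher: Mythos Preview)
Your proposal is correct and follows exactly the route the paper takes: the paper simply states that Theorem~\ref{th:md-partial} follows by combining Proposition~\ref{th:variance-partial} with the general stochastic result Theorem~\ref{th:md-general-stochastic}, without spelling out the arithmetic for the second line of~\eqref{eq:md-partial-result}. Your computation of the explicit constants (in particular $2\sqrt{10}+4\sqrt{6}<16.2$ and $2\sqrt{2}\cdot\sqrt{8}=8$) fills in details the paper omits, and your observation that the stepsize~\eqref{eq:md-stochastic-step} is just an algebraic rewriting of the one in Theorem~\ref{th:md-general-stochastic} resolves the ``mild obstacle'' you flagged.
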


\begin{rem}
\label{rem:light-tailed}
Comparing~\eqref{eq:md-partial-result} with~\eqref{eq:md-deterministic-result}, we see that the partial sampling~\eqref{eq:partial-sampling} does not deteriorate the convergence rate as long as the extra term~$\|X \|_{1\times\infty}/n$ does not dominate~$\|X^\top\|_{\infty\times 2}/\sqrt{n}$. 
When the data is~\emph{light-tailed}, the two terms are comparable. 
Indeed,~$\|X^\top\|_{\infty\times 2}/\sqrt{n}$ a.s.~converges to~$\max_{j \in [d]} \left(\Edata [\varphi_j^2]\right)^{1/2}$, where~$\varphi_j$'s are the features~(cf.~\eqref{eq:l2-moment-limit}), and the term~$\|X \|_{1 \times \infty}/{n}$ clearly converges to~$\Edata \max_{j \in [d]} |\varphi_j|$.
When~$\varphi_j$'s are subgaussian, we have 
\[
\begin{aligned} 
\Edata [\max_{j \in [d]} |\varphi_j|] 
& \leqs \wt O_d(1) \max_{j \in [d]} (\Edata [\varphi_j^2])^{{1}/{2}},
\end{aligned}
\]
where~$\wt O_d(1)$ is a log-factor in~$d$. 
Similar conclusions hold in finite sample: both terms admit the same bound in terms of the uniform bound on the features (cf. Remark~\ref{rem:l2-moment}), and when~$\varphi_j \sim \mathcal{N}(0,\sigma_j^2)$ 
with~$\sigma_j \leqs \sigma$ for any~$j \in [d]$, we have  
\[
{\|X \|_{1\times\infty}}/{n} \leqs C\sigma \sqrt{\log(dn/\delta)},
\] 
w.p.~$1-\delta$, with a similar bound for~$\|X^\top\|_{\infty \times 2}/\sqrt{n}$, cf.~\eqref{eq:l2-moment-gaussian}.
\end{rem}

\paragraph{Complexity.}
$O(dn)$ a.o.'s are needed once to compute the row and column norms of~$\wh X$ in~\eqref{eq:optimal-probabilities-partial}.
Producing~$\xi_{U}(p^*),\eta_{V,Y}(q^*)$ costs~$O(dk+nk)$ a.o.'s, including the computation of the distibutions~\eqref{eq:optimal-probabilities-partial}; given them, the proximal step has the same complexity 
as discussed in Sec.~\ref{sec:geometry-choice-main}.

\subsection{Full Sampling Scheme}
\label{sec:full-stoch}

In the \emph{full sampling scheme}, sampling of the rows of~$U$ and~$V-Y$ is augmented with a subsequent column sampling:
\begin{equation}
\label{eq:full-sampling}
\begin{aligned}
\xi_{U}(p,P) &= \wh X \frac{e_i^{\vphantom\top}e_i^\top}{p_i} U \frac{e^{\vphantom \top}_l e_l^{\top}}{P_{il}},   \\[-.1cm]
\eta_{V,Y}(q,Q) &= \wh X^\top\dfrac{e_j^{\vphantom\top}e_j^\top}{q_j}(V-Y) \frac{e^{\vphantom \top}_l e_l^{\top}}{Q_{jl}},
\end{aligned}
\tag{\bf Full-SS}
\end{equation}
where~$i \in [2d]$ and~$j \in [n]$ are drawn from distributions~$p \in \Delta_{2d}, q \in \Delta_{n}$ as before, and the rows of the matrices~$P \in \Delta_k^{\otimes 2d}$ and~$Q \in \Delta_k^{\otimes n}$ specify the conditional sampling distribution of the class~$l \in [k]$ given~$i$ and~$j$. Unbiasedness of these estimates is easy to verify.
Next we derive the optimal sampling distributions and bound their variance proxies (refer to Appendix~\ref{Theorem:variancesFull} for the proof).

\begin{proposition}
\label{th:variance-full}
Let~$\|\cdot\|_{\Unorm} = \|\cdot\|_{1 \times 1}$,~$\|\cdot\|_{\Vnorm} = \|\cdot\|_{2 \times 1}$.
The optimal solutions~$(p*,P^*),~(q^*,Q^*)$ to
\[
\! \min_{ p \in \Delta_{2d}, P \in \Delta_k^{\otimes 2d}}  \E \| \xi_{U}(p,P) \|_{\Vnorm^*}^2, 
\quad
\min_{q \in\Delta_n,  Q \in \Delta_k^{\otimes n}} \E \| \eta_{V,Y}(q,Q) \|_{\Unorm^*}^2
\] 
are unique and given by
\begin{equation}
\label{eq:optimal-probabilities-full}
\begin{aligned}
p^{\ast}_i &\propto \| \wh{X}(:,i) \|_{2} \,  \|{U}(i,:) \|_{1}, \hspace{-0.1cm}
&P_{il}^* &\propto {U}_{il}; \\
q^{\ast}_j &\propto \| \wh{X}(j,:) \|_{\infty} \,  \| V(j,:) - Y(j,:)\|_{1}, \hspace{-0.1cm}
&Q_{jl}^* &\propto {|V_{jl} - Y_{jl}|}.
\end{aligned}
\end{equation}
The respective variance proxies still admit the bounds~\eqref{eq:variance-bounds-partial}.
\end{proposition}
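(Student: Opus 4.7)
The plan is to reduce each optimization problem to a two-stage (nested) minimization by exploiting the rank-one structure of the sampled estimates, solve the resulting problems with Cauchy-Schwarz, and then translate the optimized second moments into variance proxy bounds using the simplex constraints on $U$ and $V-Y$.

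First, I would write each second moment in closed form. The matrix $\wh X\, e_i^{\vphantom\top} e_i^\top\, U\, e_l^{\vphantom\top} e_l^\top$ equals $\wh X(:,i)\, U_{il}\, e_l^\top$, i.e., an $n \times k$ matrix with a single non-zero column of Euclidean norm $\|\wh X(:,i)\|_2\, U_{il}$; since the norm dual to $\|\cdot\|_{2 \times 1}$ is $\|\cdot\|_{2 \times \infty}$ (the $\ell_2$-norm of the vector of rowwise $\ell_\infty$-norms), this gives
\[
\E\|\xi_{U}(p,P)\|_{\Vnorm^*}^2 \;=\; \sum_{i,l} \frac{\|\wh X(:,i)\|_2^2\, U_{il}^2}{p_i\, P_{il}}.
\]
Analogously, $\wh X^\top\, e_j^{\vphantom\top} e_j^\top (V-Y)\, e_l^{\vphantom\top} e_l^\top$ is a $2d \times k$ matrix with a single non-zero column of $\ell_\infty$-norm $\|\wh X(j,:)\|_\infty\, |V_{jl}-Y_{jl}|$, so since $\|\cdot\|_{\Unorm^*}$ is the entrywise $\ell_\infty$-norm,
\[
\E\|\eta_{V,Y}(q,Q)\|_{\Unorm^*}^2 \;=\; \sum_{j,l}\frac{\|\wh X(j,:)\|_\infty^2\,(V_{jl}-Y_{jl})^2}{q_j\, Q_{jl}}.
\]

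Next, I would perform the nested minimization. The Cauchy-Schwarz inequality $\bigl(\sum_l |a_l|\bigr)^2 \leqs \bigl(\sum_l \pi_l\bigr)\bigl(\sum_l a_l^2/\pi_l\bigr)$ on the simplex, with equality iff $\pi_l \propto |a_l|$, yields for each fixed $i$ the inner optimum $P^*_{il} \propto U_{il}$ with value $\|U(i,:)\|_1^2$, after which the outer minimization over $p \in \Delta_{2d}$ delivers $p^*_i \propto \|\wh X(:,i)\|_2\, \|U(i,:)\|_1$, with optimal value $\bigl(\sum_i \|\wh X(:,i)\|_2\, \|U(i,:)\|_1\bigr)^2$. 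The analogous calculation on the $\eta$ side gives $Q^*_{jl} \propto |V_{jl} - Y_{jl}|$ and $q^*_j \propto \|\wh X(j,:)\|_\infty\, \|V(j,:) - Y(j,:)\|_1$. Uniqueness (on the support of the probability vectors, with the natural convention when the numerator vanishes) is furnished by the equality case of Cauchy-Schwarz.

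Finally, I would derive the variance proxy bounds by combining the elementary inequality $\E\|\zeta - \E\zeta\|_\bullet^2 \leqs 2\E\|\zeta\|_\bullet^2 + 2\|\E\zeta\|_\bullet^2 \leqs 4\,\E\|\zeta\|_\bullet^2$, valid for any seminorm, with the optimized second moments above. For $\xi$, factoring out the maximum column $\ell_2$-norm of $X$ and using $\|U\|_1 \leqs R_{*}$ gives $\sum_i \|\wh X(:,i)\|_2\, \|U(i,:)\|_1 \leqs R_{*}\|X^\top\|_{\infty \times 2}$, which matches the first bound in~\eqref{eq:variance-bounds-partial}. For $\eta$, I would follow the argument of Proposition~\ref{th:variance-partial}, splitting $\|V(j,:) - Y(j,:)\|_1 \leqs \|V(j,:)\|_1 + \|Y(j,:)\|_1 = 2$ and then using Cauchy-Schwarz against the row maxima of $\wh X$ to recover both summands. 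The main obstacle is precisely this last step: the estimator $\eta_{V,Y}(q,Q)$ can have $\ell_1$-mass up to a factor $k$ larger than the partial-sampling estimator, so the bound must exploit the simplex structure $\|V(j,:)\|_1 = \|Y(j,:)\|_1 = 1$ (rather than a loose inequality between $\ell_\infty$ and $\ell_1$ norms on $\R^k$) to stay of the same order $\|X^\top\|_{\infty\times 2}^2/n + \|X\|_{1\times\infty}^2/n^2$ as in the partial case.
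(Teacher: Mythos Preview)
Your proposal is correct and follows essentially the same route as the paper's proof. You compute the second moments via the rank-one structure, carry out the nested minimization over $(P,p)$ and $(Q,q)$ by Cauchy--Schwarz (this is exactly what the paper does, writing the inner minimum explicitly and then the outer one), and for the variance proxies you invoke the triangle-inequality split $\E\|\zeta-\E\zeta\|^2\leqs 2\|\E\zeta\|^2+2\E\|\zeta\|^2$ together with the simplex constraint $\|V(j,:)-Y(j,:)\|_1\leqs 2$, which is precisely how the paper reduces the full-sampling bound on $\sigma_\VV^2$ to the argument already given for Proposition~\ref{th:variance-partial}. The only cosmetic difference is that for $\sigma_\UU^2$ you use the shortcut $\leqs 4\,\E\|\zeta\|^2$, whereas the paper keeps the two-term split and then identifies both terms with $R_*^2\|X^\top\|_{\infty\times 2}^2/n^2$ via $\cL_{\Unorm,\Vnorm}$ and Lemma~\ref{lemma_variance_U}; the resulting constant is the same.
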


\begin{corollary}
\label{th:md-full}
The second bound of~\eqref{eq:md-partial-result} in~Theorem~\ref{th:md-partial} remains true when we replace~\eqref{eq:partial-sampling} with the sampling scheme~\eqref{eq:full-sampling} with distributions given by~\eqref{eq:optimal-probabilities-full}.
\end{corollary}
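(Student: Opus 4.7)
The proof of Corollary~\ref{th:md-full} is essentially a direct consequence of Proposition~\ref{th:variance-full} combined with the same machinery used to establish Theorem~\ref{th:md-partial}. My plan is to argue that the only problem-specific quantities entering the bound~\eqref{eq:md-partial-result} are (i) the deterministic ingredients~$\cL_{\Unorm,\Vnorm}$, $\Omega_{\UU}$, $\Omega_{\VV}$, and $\Rem$, which depend solely on the CCSPP~\eqref{SaddleUhat} and on the geometry chosen in Sec.~\ref{sec:norms-and-potentials}, and (ii) the variance proxies~$\sigma_{\UU}^2, \sigma_{\VV}^2$ of the gradient estimates. Switching from \eqref{eq:partial-sampling} to~\eqref{eq:full-sampling} leaves (i) untouched, and Proposition~\ref{th:variance-full} tells us that the upper bounds~$\bar\sigma_{\UU}^2,\bar\sigma_{\VV}^2$ in~\eqref{eq:variance-bounds-partial} remain valid for (ii). Hence plugging into the abstract stochastic bound should yield exactly the second inequality in~\eqref{eq:md-partial-result}.

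Concretely, the first step is to check that~$\xi_{U}(p^*,P^*)$ and~$\eta_{V,Y}(q^*,Q^*)$ are unbiased estimates of~$\wh X U$ and~$\wh X^\top(V-Y)$: this follows by conditioning on the row index and noting that the conditional column sampling with probabilities proportional to~$U_{il}$ (resp.~$|V_{jl}-Y_{jl}|$) restores the correct row of the product in expectation. Together with the variance-proxy bounds in Proposition~\ref{th:variance-full}, this places the full-sampling SMD within the scope of the general composite stochastic mirror descent result (Theorem~\ref{th:md-general-stochastic} in the Appendix), which produces a bound of the form
\[
\E[\Gap(\bar U^T,\bar V^T)] \leqs \frac{2\sqrt{10}\,\cL_{\Unorm,\Vnorm}\sqrt{\Omega_{\UU}\Omega_{\VV}}}{\sqrt{T}} + \frac{2\sqrt{2}\sqrt{\Omega_{\UU}\bar\sigma_{\VV}^2 + \Omega_{\VV}\bar\sigma_{\UU}^2}}{\sqrt{T}} + \frac{\Rem}{T},
\]
valid for the stepsize~\eqref{eq:md-stochastic-step}, where the same~$\bar\sigma_{\UU}^2,\bar\sigma_{\VV}^2$ as in Theorem~\ref{th:md-partial} can be used.

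The second step is to substitute the values~$\Omega_{\UU} = R_{*}^2 \log(2dk)$ from~\eqref{eq:uhat-omega-radius}, $\Omega_{\VV} = n\log k$ from~\eqref{eq:v-omega-radius}, and~$\cL_{\Unorm,\Vnorm} = \|X^\top\|_{\infty\times 2}/n$ from~\eqref{eq:cross-lip-value}, and then to combine with~\eqref{eq:variance-bounds-partial}. Since each term matches the corresponding term in the derivation of the second line of~\eqref{eq:md-partial-result}, the final bound reads
\[
\E[\Gap(\bar U^T,\bar V^T)] \leqs \Big(\frac{16.2\,\|X^\top\|_{\infty\times 2}}{\sqrt{n}} + \frac{8\,\|X\|_{1\times\infty}}{n}\Big)\frac{\log(2dk)\,R_{*}}{\sqrt{T}} + \frac{\Rem}{T},
\]
which is precisely the second inequality in~\eqref{eq:md-partial-result}.

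There is no genuinely hard step here; the content of the corollary lies entirely in Proposition~\ref{th:variance-full}, whose derivation does require care in order to show that the additional column sampling does not inflate the variance proxies beyond~\eqref{eq:variance-bounds-partial}. The mildest possible obstacle in writing the proof is bookkeeping: one must verify that the stepsize rule~\eqref{eq:md-stochastic-step} is stated in a way that accepts upper bounds~$\bar\sigma_{\UU}^2,\bar\sigma_{\VV}^2$ on the true variance proxies (rather than their exact values), so that the same choice of~$\gamma_t$ is admissible for both sampling schemes. Given that the stepsize in Theorem~\ref{th:md-partial} already uses~$\bar\sigma_{\UU}^2,\bar\sigma_{\VV}^2$, this check is immediate, and the corollary follows.
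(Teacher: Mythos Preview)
Your proposal is correct and follows exactly the paper's intended (implicit) argument: since Proposition~\ref{th:variance-full} guarantees that the full-sampling estimates are unbiased and satisfy the same variance-proxy bounds~\eqref{eq:variance-bounds-partial} as the partial scheme, one simply reapplies Theorem~\ref{th:md-general-stochastic} with the same~$\bar\sigma_{\UU}^2,\bar\sigma_{\VV}^2$ and the same stepsize~\eqref{eq:md-stochastic-step}, recovering the second line of~\eqref{eq:md-partial-result} verbatim. The paper offers no separate proof for the corollary beyond this observation, so your write-up is, if anything, more detailed than what the paper provides.
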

\section{Sublinear Algorithm for Multiclass $\ell_1$-SVM}
\label{sec:sublinear}

For the hinge loss~\eqref{eq:loss-hinge}, we provide a \textit{sublinear} implementation of the bundle~$\eqref{StochasticUpdates}+\eqref{eq:full-sampling}$ with sampling distributions~\eqref{eq:optimal-probabilities-full}.

\paragraph{Lazy Updates.}
Note that although the estimates~$\xi_{U},\eta_{V,Y}$ produced in~\eqref{eq:full-sampling} are sparse (each contains a single non-zero column), the updates in~\eqref{StochasticUpdates}, which can be expressed as~\eqref{UpdatesForU}--\eqref{UpdatesForVSVM} with~$\xi_{U},\eta_{V,Y}$ instead of the corresponding matrix products, are \textit{dense}, and implementing them naively costs~$O(dk+nk)$ a.o.'s.
Fortunately, these updates have a special form: all elements in each row of~$U^t$ and~$V^t$ are simply rescaled with the same factor -- except for at most two elements corresponding to a single non-zero element of~$\eta_{V^t,Y}$ and at most two non-zero elements of~$\xi_{U^t} - Y$ in this row. 
To exploit this fact, we perform ``lazy'' updates:
instead of explicitly computing the actual iterates~$(U^t,V^t)$, we maintain the quadruple~$(\wt U, \alpha,\wt V, \beta)$, where~$\wt U,\wt V$ have the same dimensions as~$U,V$, while~$\alpha \in \R^{2d}$ and~$\beta \in \R^{n}$ are the ``scaling vectors'', so that at any iteration~$t$ it holds
\begin{equation}
\label{eq:scaling-invariant}
U^t(i,:) = \wt U(i,:) \cdot \alpha(i), \quad V^t(j,:) = \wt V(j,:) \cdot \beta(j)
\end{equation}
for any row of~$U^t$ and~$V^t$.
Initializing with~$(\wt U,\wt V) = (U,V)$,~$\alpha = \ind_{2d}$,~$\beta = \ind_{n}$, we can update the whole quadruple, while maintaining~\eqref{eq:scaling-invariant}, by updating at most two elements in each row of~$\wt U$ and~$\wt V$, and encapsulating the overall scaling of rows in~$\alpha$ and~$\beta$. Clearly, this update requires only~$O(d+n)$ operations once~$\xi_{U^t},\eta_{V^t,Y}$ have been drawn.

\paragraph{Sampling.}
Computing the distributions~$p^*,q^*$ from~\eqref{eq:optimal-probabilities-full} requires the knowledge of~$\|\wh{X}(:,i)\|_2$ and~$\|\wh{X}(j,:)\|_{\infty}$ which can be precomputed in~$O(dn)$ a.o.'s, and maintaining $O(d+n)$ norms~$\pi_i, \rho_j$ of the rows of~$U^t$ and~$V^t - Y$ that can maintained in~$O(1)$ a.o.'s each using~\eqref{eq:scaling-invariant}. 
Thus,~$p^*$ and~$q^*$ can be updated in~$O(d+n)$. Once it is done, we can sample~$i^t \sim p^*$ and~$j^t \sim q^*$, and then sample the class from~$P^*$ and~$Q^*$, cf.~\eqref{eq:optimal-probabilities-full}, by computing only the~$i^t$-th row of~$P^*$ and the~$j^t$-th row of~$Q^*$, both in~$O(k)$ a.o.'s. Thus, the total complexity of producing~$\xi_{U^t},\eta_{V^t,Y}$ is~$O(d+n+k)$.

\paragraph{Tracking the Averages.}
Similar ``lazy'' updates can be performed for the running averages of the iterates. Omitting the details, this requires~$O(d+n)$ a.o.'s per iteration, plus post-processing of~$O(dk+nk)$ a.o.'s.\\

The above ideas are implemented in Algorithm~\ref{alg:main} whose correctness is formally shown in Appendix~\ref{sec:alg-correctness} (see also Sec.~\ref{sec:alg-details} for an additional discussion).
Its close inspection shows the iteration cost of~$O(d+n+k)$ a.o.'s, plus~$O(dn+dk+nk)$ a.o.'s for pre/post-processing, and the memory complexiy of~$O(dn+dk+nk)$. Moreover, the term~$O(dk)$, which dominates in high-dimensional and highly multiclass problems, can be removed if one exploits sparsity of the corresponding primal solution to the~$\ell_1$-constrained problem~\eqref{MainEquationConstrained}, and outputs it directly, bypassing the explicit storage of~$\wt U$ (see~Appendix~\ref{sec:alg-correctness} for details). 
Note that when~$n = O(\min(d,k))$, the resulting algorithm enters the sublinear regime after as few as~$O(n)$ iterations.

\begin{algorithm}[!b]
\caption{Sublinear Multiclass~$\ell_1$-Regularized SVM}
\label{alg:main}
\begin{algorithmic}[1]
\REQUIRE \mbox{$X \in \R^{n \times d}$,~$y \in [k]^{\otimes n}$,~$\lambda$,~$R_*$, $T \geqs 1$,~$\{\gamma_t\}_{t = 0}^{T-1}$}
\STATE Obtain~$Y \in \Delta_k^{\otimes n}$ from the labels~$y$; \quad $\wh X \equiv [X,-X]$
\STATE $\alpha \gets \ind_{2d}$; \;\; $\wt U \gets \frac{R_{*}\ind_{2d \times k}}{2dk}$; \;\; $\beta \gets \ind_{n}$;  \;\; $\wt V \gets \frac{\ind_{n \times k}}{k}$ 
\label{alg:line-U-init}
\FOR{$\imath=1$ \TO $2d$}
\STATE $\sigma({\imath}) \equiv \|\wh{X}(:,\imath)\|_2$; \;\;\, $\pi(\imath) \gets \|\wt{U}(\imath,:)\|_1$
\ENDFOR
\FOR{$\jmath=1$ \TO $n$}
\STATE $\tau(\jmath) \equiv \|\wh{X}(\jmath,:)\|_\infty$; \;  $\rho(\jmath) \gets \|\wt{V}(\jmath,:) - Y(\jmath,:)\|_1$ 
\ENDFOR
\item[\#] {\em Initialize machinery to track the cumulative sums}
\STATE $\Usum  \gets 0_{2d\times k}$; \;\, $\Vsum \gets 0_{n\times k}$ \COMMENT{Cumulative sums}
\STATE $\Acur \gets \zeros_{2d}$; \; $\Bcur \gets 0_{n}$; \; $\Aprev \gets 0_{2d\times k}$; \; $\Bprev \gets 0_{n\times k}$
\FOR[\eqref{StochasticUpdates} iterations]{$t=0$ \TO $T-1$}
\STATE Draw~$\jt \sim \tau \circ \rho$ \COMMENT{$\circ$ is the elementwise product}
\STATE Draw~$\lt \sim |\wt V(\jt,:) \cdot \beta_\jt   - Y(\jt,:)|$ 
\STATE $[\Usum, \Aprev, \Acur] \gets \textsc {TrackPrimal}(\wt U, \Usum, \Aprev, A,\alpha,\lt)$
\item[\#] {\em The only non-zero column of~$\eta_{V^t,Y}$, cf.~\eqref{eq:full-sampling}:}
\STATE ${\eta} \gets \wh{X}(\jt,:) \cdot \frac{\sum_{\jmath=1}^n \tau(\jmath) \cdot \rho(\jmath)\cdot\sgn[\beta_{\jmath} \cdot V(\jmath,\lt)-Y(\jmath,\lt)]}{\tau(\jt)}$
\label{alg:line-eta}
\STATE $[\wt{U}, \alpha, \pi] \gets \textsc{UpdatePrimal}(\wt U, \alpha, \pi, \eta, \lt, \gamma_t, \lambda, R_*)$ 
\STATE Draw $\it  \sim \sigma \circ \pi$ 
\STATE Draw $\ellt \sim \wt U(\it,:)$
\STATE $[\Vsum, \Bprev, \Bcur] \gets \textsc {TrackDual}(\wt V, \Vsum, \Bprev, B, \beta, \ellt, y)$
\item[\#] {\em The only non-zero column of~$\xi_{U^t}$, cf.~\eqref{eq:full-sampling}:}
\STATE $\xi \gets  \wh{X}(:,\it) \cdot \frac{\sum_{\imath=1}^{2d}  \sigma(\imath) \cdot \pi(\imath)}{\sigma(\it)}$
\label{alg:line-xi}
\STATE $[\wt{V}, \beta, \rho] \gets \textsc{UpdateDual}(\wt V, Y, \beta, \rho, \xi, \ellt, y, \gamma_t)$ 
\ENDFOR
\FOR[Postprocessing of cumulative sums]{$l = 1$ \TO $k$}
\STATE $\Usum(:,l) \gets \Usum(:,l) + \wt{U}(:,l) \circ (\alpha + \Acur - \Aprev(:,l))$
\label{alg:line-post-prim}
\STATE $\Vsum(:,l) \gets \Vsum(:,l) + \wt{V}(:,l) \circ (\beta + \Bcur - \Bprev(:,l))$
\label{alg:line-post-dual}
\ENDFOR
\ENSURE $\frac{1}{T+1}\Usum, \frac{1}{T+1} \Vsum$ \COMMENT{Averages~$(\bar U^{T+1}, \bar V^{T+1})$}
\end{algorithmic}
\end{algorithm}

\resetProcCounter

\begin{algorithm}
\caption{\sc UpdatePrimal}
\label{proc:primal-update}
\begin{algorithmic}[1]
\REQUIRE $\wt U \in \R^{2d \times k}$,~$\alpha, \pi,\eta \in \R^{2d}$,~$l \in [k]$,~$\gamma$,~$\lambda$,~$R_*$
\STATE $L \equiv \log(2dk)$
\FOR {$i = 1$ \TO $2d$}
\STATE $\mu_i  = \pi_i - {\alpha_i \cdot \tilde{U}(i,l)} \cdot (1-e^{-2\gamma L R_* \eta_i/n})$
\ENDFOR
\STATE $M = \sum_{i=1}^{2d} \mu_i$
\STATE $\nu= \min\{e^{-2\gamma L {R}_\ast \lambda}, {R}_\ast /M\}$
\FOR {$i = 1$ \TO $2d$}
\STATE $\wt U(i,l) \gets \wt{U}(i,l) \cdot e^{-2\gamma L R_* \eta_i / n}$
\STATE $\alpha^{+}_i = \nu \cdot \alpha_i$
\STATE $\pi_i^{+} = \nu \cdot \mu_i$
\ENDFOR
\ENSURE $\wt U, \alpha^{+},\pi^{+}$ 
\end{algorithmic}
\end{algorithm}

\begin{algorithm}
\caption{\sc UpdateDual}
\label{proc:dual-update}
\begin{algorithmic}[1]
\REQUIRE $\wt V,Y \in \R^{n \times k}$,~$\beta, \rho, \xi \in \R^{n}$,~$\ell \in [k]$,~$y \in [k]^{\otimes n}$,~$\gamma$
\STATE $\theta = e^{-2\gamma \log(k)}$
\FOR{$j = 1$ \TO $n$}
\STATE $\omega_j = e^{2 \gamma \log(k) \xi_j}$ 
\STATE $\veps_j = e^{-2\gamma \log(k)Y(j,\ell)}$
\STATE $\chi_j = 1-\beta_j \cdot \wt{V}(j,\ell) \cdot (1-\omega_{j} \cdot \veps_j)$
\IF[not the actual class of~$j$ drawn]{$\ell \ne y_j$}
\STATE $\chi_j \gets \chi_j - \beta_j \cdot \wt{V}(j,y_j) \cdot (1 - \theta)$
\ENDIF
\label{alg:line-chi}
\STATE $\beta_j^+ = \beta_j / \chi_j$
\label{alg:line-newbeta}
\STATE $\wt{V}(j,\ell) \gets \wt{V}(j,\ell) \cdot \omega_j \cdot \veps_j$
\label{alg:line-newVl}
\STATE $\wt{V}(j,y_j) \gets \wt{V}(j,y_j) \cdot \omega_j \cdot \theta$
\label{alg:line-newVyj}
\STATE $\rho^{+}_{j} = 2 - 2\beta_j^{+}\cdot \wt{V}(j,y_j)$
\label{alg:line-newrho}
\ENDFOR
\ENSURE $\wt V, \beta^+, \rho^+$ 
\end{algorithmic}

\end{algorithm}

\begin{algorithm}
\caption{\sc TrackPrimal}
\label{proc:primal-track}
\begin{algorithmic}[1]
\REQUIRE $\wt U, \Usum, \Aprev  \in \R^{2d \times k}$,~$\Acur, \alpha \in \R^{2d}$,~$l \in [k]$
\FOR{$i = 1$ \TO $2d$}
\STATE \mbox{$\Usum(i,l) \gets \Usum(i,l) + \wt{U}(i,l) \cdot ( \Acur_i + \alpha_i -  \Aprev(i,l))$}
\label{alg:line-Usum-update}
\STATE $\Aprev(i,l) \gets \Acur_i + \alpha_i$
\label{alg:line-Aprev-update}
\STATE $\Acur_i \gets \Acur_i + \alpha_i$
\label{alg:line-Acur-update}
\ENDFOR
\ENSURE $\Usum$,~$\Aprev$,~$\Acur$ 
\end{algorithmic}
\end{algorithm}

\begin{algorithm}
\caption{\sc TrackDual}
\label{proc:dual-track}
\begin{algorithmic}[1]
\REQUIRE \hspace{-0.1cm} \mbox{$\wt V, \Vsum, \Bprev  \in \R^{n \times k}$,~$\Bcur,\beta \in \R^{n}$,~$\ell \in [k],$~$y \in [k]^{\otimes n}$}
\FOR{$j = 1$ \TO $n$}
\FOR[$\{\ell, y_j \}$ has 1 or 2 elements]{$l \in \{\ell, y_j \}$}
\STATE $\Vsum(j,l) \gets \Vsum(j,l) + \wt{V}(j,l) \cdot ( \Bcur_j + \beta_j -  \Bprev(j,l))$
\STATE $\Bprev(j,l) \gets \Bcur_j + \beta_j$
\ENDFOR
\STATE $\Bcur_j \gets \Bcur_j + \beta_j$
\ENDFOR
\ENSURE $\Vsum$,~$\Bprev$,~$\Bcur$ 
\end{algorithmic}
\end{algorithm}

\section{Experiments} 
\label{sec:experiments}

\paragraph{Sublinear Runtime.}
To illustrate the sublinear iteration cost of Algorithm~\ref{alg:main}, we consider the following experiment.
Fixing~$n=d=k$, we generate $X$ with i.i.d.~standard Gaussian entries, take~$U^o$ to be the identity matrix (thus very sparse), and generate the labels by~$\argmax_{l \in [k]} x_j U^o_l +\frac{1}{\sqrt{d}}\mathcal{N}(0,I_d),$ where~$x_j$'s are the rows of~$X$, and~$U^o_l$'s are the columns of~$U^o$. 
This is repeated~$10$ times with~$n = d = k$ increasing by a constant factor~$\kappa$; each time we run Algorithm~\ref{alg:main} for a fixed (large) number of iterations to dominate the cost of pre/post-processing, with~$R_* = \|U^o\|_1$ and~$\lambda = 10^{-3}$, and measure its runtime. We observe (see~Tab.~\ref{TableLinear}) that the runtime is proportional to~$\kappa$, as expected.

\begin{table}
\begin{center}
\begin{tabular}{|l|c|c|c|c|c|}
\hline
$n=d=k$ 
&400&800&1600&3200&6400\\
\hline
$T = 10^4$ 
& 1.17& 2.07& 4.27& 7.55& 15.56\\
\hline
$T = 2 \cdot 10^4$ 
&2.47& 4.27& 8.74& 14.65& 30.77\\
\hline
\end{tabular}
\end{center}

\vspace*{-.5cm}

\caption{Runtime (in seconds) of Algorithm~\ref{alg:main} in a synthetic data experiment.}
\label{TableLinear}

\end{table}

\paragraph{Synthetic Data Experiment.}
We compare Algorithm~\ref{alg:main} with two competitors:~$\|\cdot\|_1$-composite stochastic subgradient method (SSM) for the primal problem~\eqref{eq:erm}, in which one uniformly samples one training example at a time~\cite{shalev2011pegasos}, leading to~$O(dk)$ iteration cost; deterministic saddle-point Mirror Prox (MP) with geometry chosen as in Algorithm~\ref{alg:main}, for which we have~$O(dnk)$ cost of iterations but~$O(1/T)$ convergence in terms of the number of iterations. We generate data as in the previous experiment, fixing~$n = d = k = 10^3$. The randomized algorithms are run~$10$ times for~$T \in \{ 10^{m/2}, m = 1, ..., 12\}$ iterations with constant stepsize (we use stepsize~\eqref{eq:md-stochastic-step} in Algorithm~\ref{alg:main}, choose the one recommended in Theorem~\ref{th:md-deterministic} for MP, and use the theoretical stepsize for SSM, explicitly computing the variance of subgradients and the Lipschitz constant).
Each time we compute the duality gap and the primal accuracy, and measure the runtime (see~Fig.~\ref{fig:acc-simulated}). We see that Algorithm~\ref{alg:main} outperforms SSM, which might be the combined effect of sublinearity and our choice of geometry. 
It also outmatches MP up to high accuracy due to the sublinear effect (MP eventually ``wins'' because of its~$O(1/T)$ rate).\footnotemark 

\footnotetext{The codes of our experiments are available online at  \url{https://github.com/flykiller/sublinear-svm}.}

\newcommand\halfwth{0.48\textwidth}
\newcommand\halfhgt{0.24\textheight}
\newcommand\thirdwth{0.32\textwidth}
\newcommand\thirdhgt{0.16\textheight}
\newcommand\quartwth{0.238\textwidth}
\newcommand\quarthgt{0.119\textheight}
\newcommand\sixthwth{0.156\textwidth}
\newcommand\sixthhgt{0.078\textheight}

\begin{figure}[H]
\begin{minipage}{\halfwth}
\includegraphics[width=1.00\textwidth]{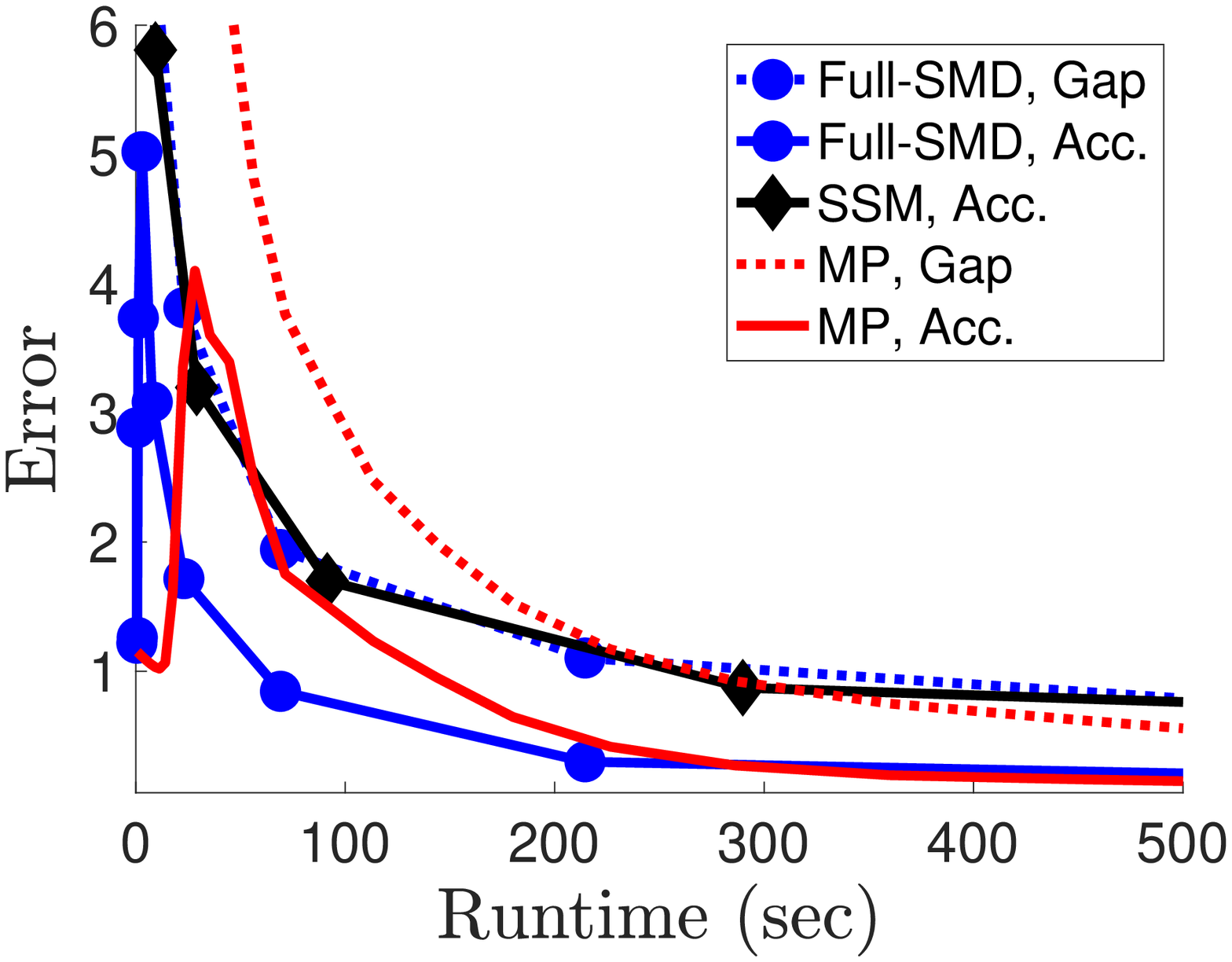}
\end{minipage}
\begin{minipage}{\halfwth}
\includegraphics[width=1.00\textwidth]{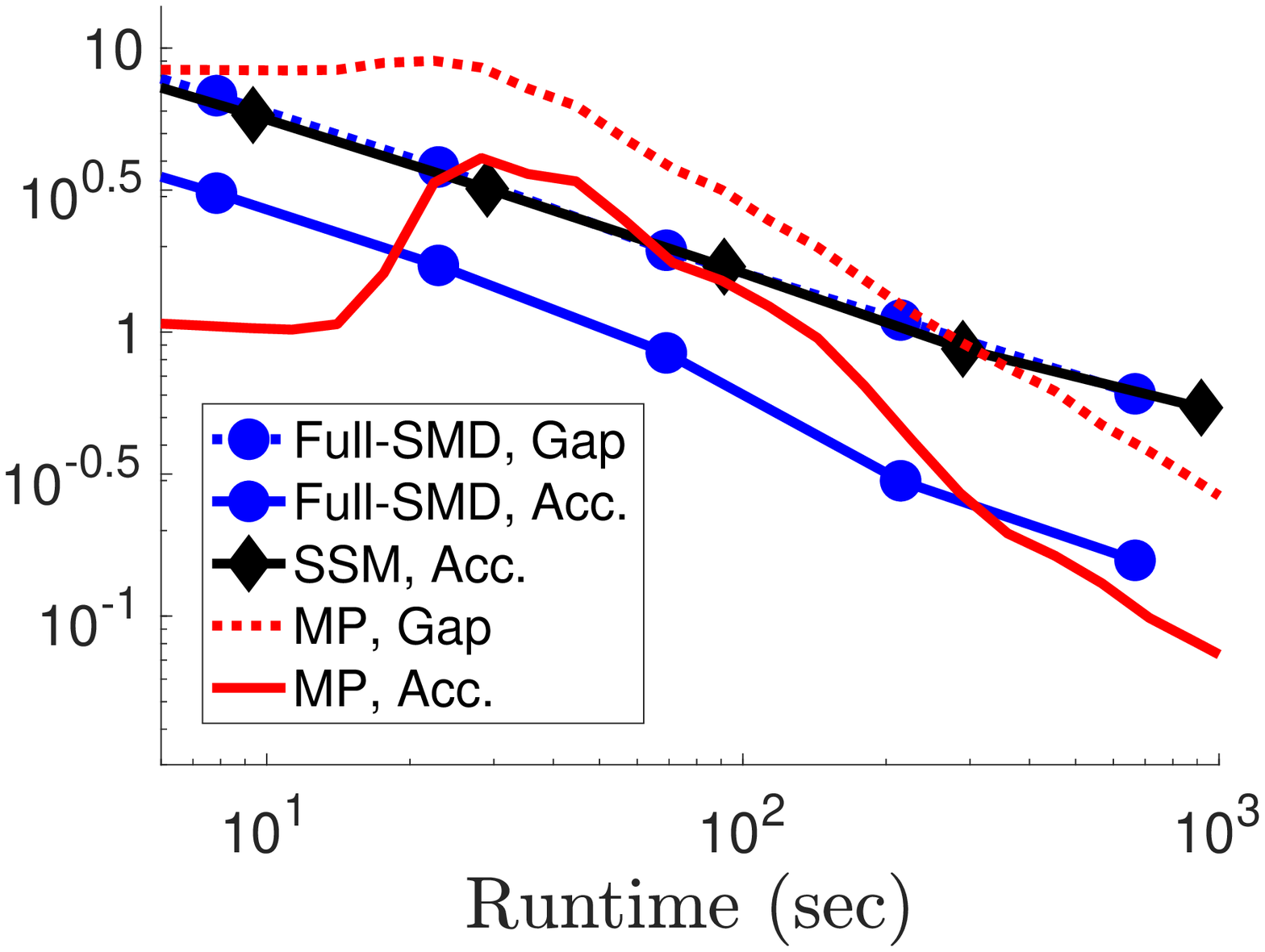}
\end{minipage}
\caption{
Primal accuracy and duality gap (when available) for Algorithm~\ref{alg:main}, stochastic subgradient method (SSM), and Mirror Prox (MP) with exact gradients, on a synthetic data benchmark, in the natural scale (left) and the log-log scale (right).}
\label{fig:acc-simulated}
\end{figure}

\section{Conclusion and Perspectives}
We proposed efficient algorithms based on stochastic mirror descent with entropy-type potentials, that allows to train~$\ell_1$-regularized multiclass linear classifiers in the case when the loss admits an explicit Fenchel-type representation, by reducing the finite-sum minimization problem to its saddle-point equivalent. In particular, in the case of the multiclass hinge loss we were able to construct a sublinear algorithm with the cost~$O(d+n+k)$ of iterations, which was possible due to the multiplicative form of the updates. 
We envision the following directions for future work.
\begin{itemize}
\item It would be interesting to investigate whether out approach can also yield sublinear algorithms for other Fenchel-Young losses, in particular for the multiclass logistic model~\eqref{eq:loss-logistic}, which is widely used in Natural Language Processing (NLP) problems, where $d$, $n$ and $k$ are on order of millions or even billions~\citep{chelba2013one,partalas2015lshtc}.
\item It would be useful to implement more flexible stepsizes, including the online stepsize search in the vein of~\citet{optbook1}, and to conduct larger scale experiments, including those on real data.
\item According to our result in Theorem~\ref{th:md-partial} and Corollary~\ref{th:md-full}, the loss of accuracy due to sampling is negligible when the data is light-tailed, but otherwise, the additional error due to sampling might be significant (see~Remark~\ref{th:md-full}). On the other hand, in the case of (non-composite) bilinear problems with vector variables,~\citet[Sec.~2.5.2.3]{optbook2} propose a technique of transforming the problem to an equivalent one, for which the loss of accuracy due to sampling is \textit{always} tolerable. 
Extending their technique to our situation is non-trivial, and could be a worthwhile direction for future research.
\end{itemize}

\section*{Acknowledgments}
DB and FB acknowledge support from the European Research Council (grant SEQUOIA 724063). 
DB was partly funded from the European
Union’s H2020 Framework Programme (H2020-MSCA-ITN-2014) under grant
agreement N\textsuperscript{\underline{o}}642685 MacSeNet.
DO was supported by the ERCIM Alain Bensoussan Fellowship.
We would like to thank Anatoli Juditsky for interesting discussions related to this work.

\bibliography{references}
\bibliographystyle{plainnat}

\clearpage

\appendix

\section{Motivation for Multiclass Hinge Loss}
\label{SVMMotivation}

We justify the multiclass extension~\eqref{eq:loss-hinge} of the hinge loss due to~\cite{shalev2014understanding}.
In the binary case, the hinge loss is
\[
\frac{1}{n} \sum_{i=1}^n \left[ \max(0, 1-\wt y_i u^\top x_i) \right],
\]
where~$u \in \R^d$ and $\wt y_i \in \{-1,1\}$. 
Introducing~$y = e_{\wt y} \in \{e_{-1}, e_{1}\}$ where~$e_j$ is the~$j$-th standard basis vector (the dimensions of space are symbolically indexed in~$\{-1,1\}$), and putting~$u_1 = -u_{-1} = \frac{u}{2}$, we can rewrite the loss as
\[
\max(0, 1-\wt y u^\top x) = \max_{k \in \{1,-1\}} \left\{ \ind\{e_k \ne y\} + u_{k}^\top  x  - u_{\wt y}^\top x \right\}.
\]
The advantage of this reformulation is that we can naturally pass to the multiclass case, by replacing the set $\{-1,1\}$ with~$\{1,...,K\}$ and introducing~$u_1, ..., u_K \in \R^d$ without any restrictions: 
\[
\begin{aligned}
\max_{k \in \{1,..., K\}} \left\{ \ind\{e_k \ne y\} + u_{k}^\top  x  - u_{\wt y}^\top x \right\} 
&= \max_{v \in \{e_1,..., e_K\}} \left\{ \ind\{v \ne y\} + \sum_{l=1}^K (v[l] - y[l]) u_l^\top  x\right\} \\
&= \max_{v \in \{e_1,..., e_K\}} \left\{ \ind\{v \ne y\} + (v - y)^\top U^\top  x\right\} =: \ell(U,(x,y)),
\end{aligned}
\]
where~$a[l]$ denotes the~$l$-th element of a column-vector~$a$, and~$U \in \R^{d \times K}$ has $u_l$ as its $l$-th column. 
Finally, we can rewrite~$\ell(U,(x,y))$ as follows:
\[
\ell(U,(x,y)) = \max_{v \in \Delta_K} \left\{1 - v^\top y + (v - y)^\top U^\top  x\right\}.
\]
This is because we maximize an affine function of~$v$, and~$1 - v^\top y = \ind\{v \ne y\}$ at the vertices. Thus, we obtain the Fechel dual representation of the multiclass hinge loss.
Adding the regularization term~$\|\cdot\|_{\Unorm}$, we also arrive at the saddle-point problem
\[
\min_{ U \in \rb^{d \times k} } \max_{V\in \Delta_k^{\otimes n}}
1 - \frac{1}{n}\tr[V^\top Y] + 
\frac{1}{n} \tr \big[
( V - Y )^\top X U
\big]  + \lambda \| U \|_{\Unorm}.
\]

\section{General Accuracy Bounds for Composite Saddle-Point Mirror Descent}
\label{sec:accuracy-general}

\paragraph{Deterministic Case.}
Here we provide general accuracy bounds which are instantiated in Theorems~\ref{th:md-deterministic} and~\ref{th:md-partial}. 
Below we outline the general setting that encompasses, in particular, the case of~\eqref{SaddleUhat} solved via~\eqref{NonStochasticUpdates} with initialization~\eqref{eq:init-full}.

\begin{itemize}
\item
We consider a convex-concave saddle-point problem
\[
\min_{U \in \UU} \max_{V \in \VV} f(U,V)
\]
with a composite objective
\[
f(U,V) = \Phi(U,V-Y) + \Upsilon(U) - \FF(V),
\]
where
\[
\Phi(U,V) = \frac{1}{n}V^\top X U
\] 
is a bilinear function, and~$\Upsilon(U),\FF(V)$ are convex ``simple'' terms. 
Moreover, we assume that the primal feasible set~$\UU$ belongs to the~$\|\cdot\|_{\Unorm}$-norm ball with radius~$R_{\ast}$, the dual constraint set~$\VV$ belongs to the~$\|\cdot\|_\Vnorm$-norm ball with radius~$R_{\VV}$, and~$\| Y \|_{\Vnorm} \leqs R_{\VV}$.\footnotemark~ 
To simplify the results, we make the assumption (satisfied in all known to us situations):
\begin{equation}
\label{eq:omegas-vs-radii-lower}
\Omega_{\UU} \geqs R_{\ast}^2, \quad \Omega_{\VV} \geqs R_{\VV}^2.
\end{equation}
\item
Recall that the vector field of partial gradients of~$\Psi(U,V) := \Phi(U,V-Y)$ is
\begin{equation}
\label{eq:gradient-field-proof}
\begin{aligned}
G(W)
:&= (\nabla_{U} \Psi(U,V), -\nabla_{V} \Psi(U,V)) \\
&= \frac{1}{n} (X^\top(V-Y),-XU)
\end{aligned}
\end{equation}
\footnotetext{Note that the linear term~$\frac{1}{n}Y^\top X U$ can be absorbed into the simple term~$\Upsilon(U)$, which will slightly improve the bound in Theorem~\ref{th:md-general}. However, this improvement is impossible in the stochastic version of the algorithm where we sample the linear form~$Y^\top X$ but not the gradient of~$\Upsilon(U)$.}
\item
Given the partial proximal setups~$(\|\cdot\|_{\Unorm},\phi_{\UU}(\cdot))$ and~$(\|\cdot\|_{\Vnorm},\phi_{\VV}(\cdot))$, we run Composite Mirror Descent~\eqref{MirrorDescent} on the vector field~$G(W)$ with the joint penalty term~\[
h(W) = \Upsilon(U) + \FF(V),
\] 
the ``balanced'' joint potential given by~\eqref{eq:joint-potential}, and stepsizes~$\gamma_t$.
\end{itemize}

We now provide the convergence analysis of Mirror Descent, extending the argument of Lem.~1 in~\cite{duchi2010composite} to composite saddle-point optimization.

\begin{theorem}
\label{th:md-general}
In the above setting, let~$(\bar{U}^T, \bar{V}^T) = \frac{1}{T}\sum_{t=0}^{T-1} ({U}^t,V^t)$ be the average of the first~$T$ iterates of the composite Mirror Descent~\eqref{MirrorDescent} with constant stepsize
\[
\gamma_t \equiv \frac{1}{\cL_{\Unorm,\Vnorm}\sqrt{5T\Omega_{\UU}\Omega_\VV}},
\]
where
\[
\cL_{\Unorm,\Vnorm} := \frac{1}{n}\sup_{\| U \|_{\Unorm} \le 1} \|XU\|_{\Vnorm^*}.
\]
Then we have the following guarantee for the duality gap of~$f(\cdot)$:
\[
\Gap(\bar U^T, \bar V^T) \leqs \frac{2\sqrt{5}\cL_{\Unorm,\Vnorm}\sqrt{\Omega_{\UU}\Omega_\VV}}{\sqrt{T}} 
+ \frac{\Upsilon(U^0) - \min_{U \in \UU} \Upsilon(U)}{T} + \frac{\FF(V^0) - \min_{V \in \VV} \FF(V)}{T}.
\]
Moreover, if one of the functions~$\Upsilon(U)$,~$\FF(V)$ is affine, the corresponding~$O(1/T)$ error term vanishes from the bound.
\end{theorem}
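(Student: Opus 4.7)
The plan is to follow the standard template for analyzing mirror descent on saddle-point problems, while carefully accounting for two composite-related complications: the ``simple'' term $h(W)=\Upsilon(U)+\FF(V)$ that is retained inside the prox-mapping, and the asymmetric reweighting of the joint potential $\phi(W)=\phi_{\UU}(U)/(2\Omega_{\UU})+\phi_{\VV}(V)/(2\Omega_{\VV})$, which is what yields the $\sqrt{\Omega_{\UU}\Omega_{\VV}}$ factor instead of the naive $\Omega_{\UU}+\Omega_{\VV}$. First, from the first-order optimality condition of the update~\eqref{MirrorDescent} and the three-point identity for Bregman divergences, I would derive, for any $W\in\WW$, the per-iteration inequality
\[
\gamma_t[h(W^{t+1})-h(W)]+\gamma_t\langle G(W^t),W^{t+1}-W\rangle \leqs D_\phi(W,W^t)-D_\phi(W,W^{t+1})-D_\phi(W^{t+1},W^t).
\]
Adding $\gamma_t\langle G(W^t),W^t-W^{t+1}\rangle$ to both sides and invoking Fenchel--Young together with $1$-strong convexity of $\phi$ with respect to the joint norm $\|W\|_{\mathrm{jt}}^2:=\|U\|_{\Unorm}^2/(2\Omega_{\UU})+\|V\|_{\Vnorm}^2/(2\Omega_{\VV})$ absorbs $D_\phi(W^{t+1},W^t)$ at the cost of a $\tfrac{\gamma_t^2}{2}\|G(W^t)\|_{\mathrm{jt},*}^2$ term.

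Second, I would sum this inequality for $t=0,\ldots,T-1$, telescope the Bregman terms, and use that $W^0=\argmin_{\WW}\phi$ together with the normalization $\max_\WW\phi-\min_\WW\phi=1$ to get $D_\phi(W,W^0)\leqs 1$. The dual of $\|\cdot\|_{\mathrm{jt}}$ decomposes as $\|(g_U,g_V)\|_{\mathrm{jt},*}^2=2\Omega_{\UU}\|g_U\|_{\Unorm,*}^2+2\Omega_{\VV}\|g_V\|_{\Vnorm,*}^2$; using the definition of $\cL_{\Unorm,\Vnorm}$, the feasible-set radii $R_\ast,R_\VV$ (with $\|V-Y\|_\Vnorm\leqs 2R_\VV$ from the triangle inequality), and~\eqref{eq:omegas-vs-radii-lower} one gets $\|G(W^t)\|_{\mathrm{jt},*}^2\leqs 10\,\cL_{\Unorm,\Vnorm}^2\Omega_{\UU}\Omega_{\VV}$. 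Plugging the prescribed constant stepsize $\gamma=1/(\cL_{\Unorm,\Vnorm}\sqrt{5T\Omega_{\UU}\Omega_{\VV}})$ balances the two contributions so that the right-hand side becomes $2$, and dividing by $\gamma T$ produces exactly $\tfrac{2\sqrt{5}\,\cL_{\Unorm,\Vnorm}\sqrt{\Omega_{\UU}\Omega_{\VV}}}{\sqrt T}$.

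Third, to convert this bound into a duality-gap bound I would exploit bilinearity of $\Psi(U,V):=\Phi(U,V-Y)$: convex-concavity yields $\langle G(W^t),W^t-W\rangle\geqs\Psi(U^t,V)-\Psi(U,V^t)$ for every $W=(U,V)$, and linearity of $\Psi$ in each argument gives $\tfrac1T\sum_{t=0}^{T-1}[\Psi(U^t,V)-\Psi(U,V^t)]=\Psi(\bar U^T,V)-\Psi(U,\bar V^T)$. The composite sum involves $W^{t+1}$ rather than $W^t$; I would handle this by writing $\tfrac1T\sum_{t=0}^{T-1}h(W^{t+1})=\tfrac1T\sum_{t=0}^{T-1}h(W^t)+[h(W^T)-h(W^0)]/T$, applying Jensen to the first piece, and bounding $h(W^T)\geqs\min_\WW h=\min_\UU\Upsilon+\min_\VV\FF$ to produce the remainder $[\Upsilon(U^0)-\min\Upsilon+\FF(V^0)-\min\FF]/T$. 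When $\Upsilon$ or $\FF$ is affine, the corresponding Jensen step is an equality and no decomposition is needed, which is exactly why that part of the remainder vanishes. Finally, taking $W$ to be the pair of best responses $(U^\ast(\bar V^T),V^\ast(\bar U^T))$ converts the inequality into a bound on $\Gap(\bar U^T,\bar V^T)$.

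The main obstacle I anticipate is the bookkeeping in the second step: getting the constant $\sqrt{5}$ right requires correctly matching the reweighting factors $2\Omega_{\UU},2\Omega_{\VV}$ in the dual joint norm with the operator-norm bounds $\|n^{-1}X^\top(V-Y)\|_{\Unorm,*}\leqs 2\cL_{\Unorm,\Vnorm}R_{\VV}$ and $\|n^{-1}XU\|_{\Vnorm,*}\leqs\cL_{\Unorm,\Vnorm}R_\ast$, and then tightly relating $R_\ast^2,R_{\VV}^2$ to $\Omega_{\UU},\Omega_{\VV}$ via~\eqref{eq:omegas-vs-radii-lower}; any slack in this chain leaks into the final constant. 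A secondary subtlety is ensuring the three-point prox identity is valid in the presence of the non-smooth composite term $h$, which is why the composite variant of the update must be used rather than a naive linearization.
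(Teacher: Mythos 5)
Your main argument coincides with the paper's own proof of Theorem~\ref{th:md-general}: the same per-iteration inequality obtained from the composite first-order optimality condition plus the three-point identity, the same Fenchel--Young absorption of $D_\phi(W^{t+1},W^t)$ via $1$-strong convexity of the reweighted potential, the same bound $\|G(W)\|_{\Wnorm^*}^2\leqs 10\,\cL_{\Unorm,\Vnorm}^2\Omega_{\UU}\Omega_{\VV}$ from the partial-gradient estimates combined with~\eqref{eq:omegas-vs-radii-lower}, and the same index-shift bookkeeping producing the remainder $[h(W^0)-\min_{\WW}h]/T$. The constants check out, and the conversion to a duality-gap bound via bilinearity of $\Psi$ is the paper's argument as well.

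The one genuine flaw is your justification of the final claim that the $O(1/T)$ term vanishes when, say, $\Upsilon$ is affine. You attribute this to ``the corresponding Jensen step is an equality,'' but Jensen is not the source of that remainder. The remainder comes from the index mismatch between the bilinear part, evaluated at $W^t$, and the composite part, evaluated at $W^{t+1}$; even for affine $\Upsilon$ one has
\[
\frac{1}{T}\sum_{t=0}^{T-1}\Upsilon(U^{t+1})=\Upsilon(\bar U^T)+\frac{\Upsilon(U^T)-\Upsilon(U^0)}{T},
\]
so the correction $[\Upsilon(U^0)-\Upsilon(U^T)]/T$ survives and the exactness of Jensen buys you nothing. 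The correct mechanism (the one used in the paper) is that when $\nabla\Upsilon$ is constant, the subgradient entering the optimality condition at $W^{t+1}$ equals $\nabla\Upsilon(U^{t})$, so the affine composite term can be linearized at the \emph{current} iterate and treated exactly like the bilinear term: $\Upsilon(U^{t})-\Upsilon(U)=\langle\nabla\Upsilon,U^{t}-U\rangle$ is split into $\langle\nabla\Upsilon,U^{t+1}-U\rangle$ (controlled by the optimality condition together with $G(W^t)$) and $\langle\nabla\Upsilon,U^{t}-U^{t+1}\rangle$ (absorbed by the same Fenchel--Young step as $\langle G(W^t),W^t-W^{t+1}\rangle$). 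The per-iteration inequality then carries $\Upsilon(U^{t})-\Upsilon(U)$ rather than $\Upsilon(U^{t+1})-\Upsilon(U)$, the averaging index matches that of $\bar U^T$, and the remainder disappears. As written, your derivation only yields the weaker bound in which the $[\Upsilon(U^0)-\min_{\UU}\Upsilon]/T$ term is still present.
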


\begin{proof}

$\boldsymbol{1^o}.$
We begin by introducing the norm for~$W = (U,V)$:
\begin{equation}
\label{eq:joint-norm}
\|W\|_{\Wnorm} = \sqrt{\frac{\|U\|_\Unorm^2}{2\Omega_\UU}+\frac{\|V\|_\Vnorm^2}{2\Omega_\VV}},
\end{equation}
and its dual norm defined for~$G = (G_U,G_V)$ with~$G_U \in \R^{d \times k}$ and~$G_V \in \R^{n \times k}$:
\begin{equation}
\label{eq:joint-norm-dual}
\|G\|_{\Wnorm^*} = \sqrt{2\Omega_\UU\|G_U\|_{\Unorm^*}^2 + 2\Omega_\VV\|G_V\|_{\Vnorm^*}^2},
\end{equation}
where~$\|\cdot\|_{\Unorm^*}$ and~$\|\cdot\|_{\Vnorm^*}$ are the dual norms for~$\|\cdot\|_{\Unorm}$ and~$\|\cdot\|_{\Vnorm}$ correspondingly.
We now make a few observations. First, the joint potential~$\phi_{\WW}(W)$ given by~\eqref{eq:joint-potential} is~$1$-strongly convex with respect to the norm~$\|\cdot\|_{\Wnorm}$.
Second, we can compute the potential difference corresponding to~$\phi_{\WW}$:
\begin{equation}
\label{eq:md-potential-difference-estimate}
\Omega_{\WW} := \max_{W \in \WW} \phi_{\WW}(W) - \min_{W \in \WW} \phi_{\WW}(W) = 1
\end{equation}
Finally, by~\eqref{eq:gradient-field-proof} and~\eqref{eq:cross-lip-bilinear} we have
\[
\max_{W \in \WW} \|G_U(W)\|_{\Unorm^*} \leqs 2\cL_{\Unorm,\Vnorm}R_{\VV}, \quad \max_{W \in \WW} \|G_V(W)\|_{\Vnorm^*} \leqs \cL_{\Unorm,\Vnorm}R_{\ast},
\]
combining which with~\eqref{eq:omegas-vs-radii-lower} we bound the~$\|\cdot\|_{\Wnorm^*}$-norm of~$G(W)$ on~$\WW$:
\begin{equation}
\label{eq:md-gradient-norm}
\begin{aligned}
\max_{W \in \WW} \| G(W) \|_{\Wnorm^*} 
&\leqs \sqrt{10}\cL_{\Unorm,\Vnorm} \sqrt{\Omega_{\UU}\Omega_\VV}.
\end{aligned}
\end{equation}

$\boldsymbol{2^o}.$
We now follow the convergence analysis of composite Mirror Descent, see~\cite{duchi2010composite}, extending it to convex-concave objectives.
By the convexity properties of~$\Psi(U,V) = \Phi(U,V-Y)$,~for any~$(\bar U, \bar V) \in \WW$ and~$(U,V) \in \WW$ it holds
\[
\begin{aligned}
\Psi(\bar U,V) - \Psi(U,\bar V) 
&= \Psi(\bar U,V) - \Psi(\bar U,\bar V) + \Psi(\bar U,\bar V) - \Psi(U,\bar V) \\
&\leqs \lang \nabla_U \Psi(\bar U,\bar V), \bar U - U\rang - \lang \nabla_V \Psi(\bar U,\bar V), \bar V - V\rang \\
& = \lang G(\bar W), \bar W - W \rang,
\end{aligned}
\]
Let~$W^t = (U^t, V^t)$ be the~$t$-th iterate of~\eqref{MirrorDescent} for~$t \ge 1$.
By convexity of~$\Upsilon(U)$ and~$\FF(V)$, and denoting~$h(W) = \Upsilon(U) + \FF(V)$, we have, for any~$W = [U,V]$, that
\begin{equation}
\label{eq:md-gap-main}
\begin{aligned}
\Psi(U^{t-1},V) - \Psi(U,V^{t-1}) &+ h(W^{t}) - h(W) \\
&\leqs \lang G(W^{t-1}), W^{t-1} - W\rang + \lang \partial h(W^t), W^t - W \rang.
\end{aligned}
\end{equation}
Let us now bound the right-hand side. 
Note that the first-order optimality condition for~\eqref{MirrorDescent} (denoting~$\phi(\cdot) := \phi_{\WW}(\cdot)$ the joint potential) writes\footnotemark
\begin{equation}
\label{eq:first-order-general}
\lang \gamma_t [G(W^{t-1}) + \partial h(W^{t})] + \nabla \phi(W^{t}) - \nabla \phi(W^{t-1}), W^{t} - W \rang \leqs 0.
\end{equation}
Combining this with~\eqref{eq:md-gap-main}, we get
\begin{equation}
\label{eq:md-one-term-prelim}
\begin{aligned}
\gamma_t [\Psi(U^{t-1},V) - \Psi(U,V^{t-1}) + h(W^{t}) - h(W)] 
&\leqs \lang \nabla \phi(W^{t-1}) - \nabla \phi(W^{t}), W^{t} - W \rang \\
&+ \gamma_t \lang G(W^{t-1}), W^{t-1} - W^t \rang.
\end{aligned}
\end{equation}
By the well-known identity,
\begin{align}
\label{eq:bregman-3points}
\lang \nabla \phi(W^{t-1}) - \nabla \phi(W^{t}), W^{t} - W \rang 
&= D_{\phi}(W,W^{t-1}) - D_{\phi}(W,W^t) - D_{\phi}(W^t,W^{t-1}),
\end{align}
see,~e.g.,~\citet[Lemma~4.1]{beck2003mirror}. 
On the other hand, by the Fenchel-Young inequality we have
\begin{equation}
\label{eq:md-young}
\begin{aligned}
\gamma_t \lang G(W^{t-1}), W^{t-1} - W^t \rang 
&\leqs \frac{\gamma_t^2 \| G(W^{t-1})\|_{\Wnorm^*}^2}{2} + \frac{\| W^{t-1} - W^t \|_{\Wnorm}^2}{2}  \\
&\leqs 5\gamma_t^2 \cL_{\Unorm,\Vnorm}^2 \Omega_{\UU}\Omega_\VV + D_{\phi}(W^t,W^{t-1}),
\end{aligned}
\end{equation}
where we used~\eqref{eq:md-gradient-norm} and~1-strong convexity of~$\phi(\cdot)$ with respect to~$\|\cdot\|_{\Wnorm}$. 
Thus, we obtain
\begin{equation}
\label{eq:gap-final-bound}
\begin{aligned}
\gamma_t [\Psi(U^{t-1},V) - \Psi(U,V^{t-1}) + h(W^{t}) - h(W)] \le &D_{\phi}(W,W^{t-1}) - D_{\phi}(W,W^t) + 5\gamma_t^2 \cL_{\Unorm,\Vnorm}^2 \Omega_{\UU}\Omega_\VV.
\end{aligned}
\end{equation}

$\boldsymbol{3^o}.$
Now, assuming the constant stepsize, by the convexity properties of~$\Psi(\cdot,\cdot)$ and~$h(\cdot)$ we obtain
\begin{equation}
\label{eq:gap-averaging}
\begin{aligned}
f(\bar U^T,V) - f(U,\bar V^T) 
&= \Psi(\bar U^{T},V) - \Psi(U,\bar V^{T}) + h(\bar W^{T}) - h(W) \\
&\leqs \frac{1}{T} \sum_{t=1}^{T} \Psi(U^{t-1},V) - \Psi(U,V^{t-1}) + h(W^{t-1}) - h(W) \\
&\leqs \frac{1}{T} \left( h(W^{0}) - h(W^{T}) + \frac{D_{\phi}(W,W^0)}{\gamma} + 5 T \gamma   \cL_{\Unorm,\Vnorm}^2 \Omega_{\UU}\Omega_\VV\right) \\
&\leqs \frac{1}{T} \left( h(W^{0}) - \min_{W \in \WW} h(W) + \frac{1}{\gamma} + 5 T \gamma   \cL_{\Unorm,\Vnorm}^2 \Omega_{\UU}\Omega_\VV\right).
\end{aligned}
\end{equation}
where for the third line we substituted~\eqref{eq:gap-final-bound}, simplified the telescoping sum, and used that~$D(W,W^T) \geqs 0$, and in the last line we used~$D(W,W^0) \leqs \Omega_{\WW} \leqs 1$, cf.~\eqref{eq:md-potential-difference-estimate}.
The choice
\[
\gamma = \frac{1}{\cL_{\Unorm,\Vnorm}\sqrt{5T\Omega_{\UU}\Omega_\VV}}, 
\]
results in the accuracy bound from the premise of the theorem:
\[
\Gap(\bar U^T, \bar V^T) \leqs \frac{2\sqrt{5}\cL_{\Unorm,\Vnorm}\sqrt{\Omega_{\UU}\Omega_\VV}}{\sqrt{T}} + \frac{h(W^0) - \min_{W \in \WW} h(W)}{T}.
\]
Finally, assume that one of the terms~$\Upsilon(U)$,~$\FF(V)$ is affine -- w.l.o.g. let it be~$\Upsilon(U)$. Then, since~$\nabla \Upsilon(U)$ is constant,~$\partial h(W^{t}) = (\nabla \Upsilon(U^t), \partial \FF(V^t))$ in~\eqref{eq:first-order-general} can be replaced with~$(\nabla \Upsilon(U^{t-1}), \partial \FF(V^t))$. 
Then in~\eqref{eq:gap-final-bound} we can replace~$h(W^t) - h(W^0)$ with~$\Upsilon(U^{t-1}) - \Upsilon(U) + \FF(V^{t}) - \FF(V)$, implying that the term~$h(W^0) - h(W^{T})$ in the right-hand side of~\eqref{eq:gap-averaging} gets replaced with~$\FF(V^0) - \FF(V^t)$. The claim is proved.
\end{proof}
\footnotetext{Note that~$\phi(W)$ is continuously differentiable in the interior of~$\WW$, and~$\nabla \phi$ diverges on the boundary of~$\WW$, then the iterates are guaranteed to stay in the interior of~$\WW$~\cite{beck2003mirror}.}

\paragraph{Stochastic Mirror Descent.} 
We now consider the stochastic setting that allows to encompass~\eqref{StochasticUpdates}. Stochastic Mirror Descent is given by
\begin{equation}
\label{StochasticMirrorDescent}
\begin{aligned}
W^0 &= \min_{W \in \WW} \phi_{\WW}(W); \\
W^{t} &= \arg\min\limits_{W\in\WW}\left\{h(W) + \langle \Xi(W^{t-1}), W \rangle + \frac{1}{\gamma_t}D_{\phi_{\mathcal{W}}}(W,W^{t-1})\right\}, \; t \ge 1,
\end{aligned}
\end{equation}
where
\[
\Xi(W) := \frac{1}{n}(\eta_{V,Y},-\xi_{U})
\]
is the unbiased estimate of the first-order oracle~$G(W) = \frac{1}{n}(X^\top(V-Y),-XU)$.
Let us introduce the corresponding variance proxies (refer to the preamble of Section~\ref{sec:sampling} for the discussion):
\begin{equation}
\label{def:variance-proxies-general}
\sigma^2_{\UU} = \frac{1}{n^2}\sup_{U \in \UU} \E\left[\left\|XU - \xi_{U} \right\|_{\Vnorm^*}^2\right], \quad
\sigma^2_{\VV} = \frac{1}{n^2}\sup_{(V,Y) \in \VV \times \VV} \E\left[\left\| X^{\top}(V-Y) - \eta_{V,Y} \right\|_{\Unorm^*}^2\right].
\end{equation}
We assume that the noises~$G(W^{t-1}) - \Xi(W^{t-1})$ are independent along the iterations of~\eqref{StochasticMirrorDescent}. 
In this setting, we prove the following generalization of Theorem~\ref{th:md-general}:

\begin{theorem}
\label{th:md-general-stochastic}
Let~$(\bar{U}^T, \bar{V}^T) = \frac{1}{T}\sum_{t=0}^{T-1} ({U}^t,V^t)$ be the average of the first~$T$ iterates of stochastic composite mirror descent~\eqref{StochasticMirrorDescent} with constant stepsize
\[
\gamma_t \equiv \frac{1}{\sqrt{T}} \min \left\{ \frac{1}{\sqrt{10}\cL_{\Unorm,\Vnorm}\sqrt{\Omega_{\UU}\Omega_\VV}}, \; \frac{1}{\sqrt{2}\sqrt{\Omega_{\UU} \bar\sigma_{\vphantom{\UU} \VV}^2 + \Omega_{\vphantom{\UU}\VV} \bar\sigma_{\UU}^2}}\right\},
\]
where~$\cL_{\Unorm,\Vnorm},\Omega_{\UU},\Omega_\VV$ are the same as in Theorem~\ref{th:md-general-stochastic}, and~$\bar \sigma_{\UU}^2, \bar \sigma_{\VV}^2$ are the upper bounds for~$\sigma_{\UU}^2, \sigma_{\VV}^2$, cf.~\eqref{def:variance-proxies-general}. 
Then it holds
\[
\begin{aligned}
\E[\Gap(\bar U^T, \bar V^T)] 
&\leqs \frac{2\sqrt{10}\cL_{\Unorm,\Vnorm}\sqrt{\Omega_{\UU}\Omega_\VV}}{\sqrt{T}} 
+ \frac{2\sqrt{2}\sqrt{\Omega_{\UU} \bar\sigma_{\vphantom{\UU} \VV}^2 + \Omega_{\vphantom{\UU}\VV} \bar\sigma_{\UU}^2}}{\sqrt{T}} \\
&\;\;+ \frac{\Upsilon(U^0) - \min_{U \in \UU} \Upsilon(U)}{T} + \frac{\FF(V^0) - \min_{V \in \VV} \FF(V)}{T},
\end{aligned}
\]
where~$\E[\cdot]$ is the expectation over the randomness in~\eqref{StochasticMirrorDescent}.
Moreover, if one of the functions~$\Upsilon(U)$,~$\FF(V)$ is affine, the corresponding~$O(1/T)$ term can be discarded.
\end{theorem}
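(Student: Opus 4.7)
The approach is to mimic the deterministic proof of Theorem~\ref{th:md-general} verbatim up to the convexity-concavity inequality
\[
\Psi(U^{t-1},V) - \Psi(U,V^{t-1}) \leqs \lang G(W^{t-1}), W^{t-1} - W\rang,
\]
and then, on the right-hand side, replace $G(W^{t-1})$ by the unbiased oracle $\Xi(W^{t-1})$ and isolate the stochastic error $\Delta_t := G(W^{t-1}) - \Xi(W^{t-1})$. Using the first-order optimality condition for the~\eqref{StochasticMirrorDescent} update $W^t$ (which now involves $\Xi$ rather than $G$), together with the three-point identity~\eqref{eq:bregman-3points} and the Fenchel-Young inequality applied to $\gamma_t\lang \Xi(W^{t-1}), W^{t-1} - W^t\rang$, I would arrive, for any fixed $W \in \WW$, at the analogue of~\eqref{eq:gap-final-bound}:
\[
\gamma_t[\Psi(U^{t-1},V) - \Psi(U,V^{t-1}) + h(W^{t}) - h(W)] \leqs D_\phi(W,W^{t-1}) - D_\phi(W,W^{t}) + \tfrac{1}{2}\gamma_t^2 \|\Xi(W^{t-1})\|_{\Wnorm^*}^2 + \gamma_t \lang \Delta_t, W^{t-1} - W\rang.
\]
The squared norm of the oracle is then split as $\|\Xi\|_{\Wnorm^*}^2 \leqs 2\|G\|_{\Wnorm^*}^2 + 2\|\Delta_t\|_{\Wnorm^*}^2$; the first piece is bounded deterministically by $20\cL_{\Unorm,\Vnorm}^2\Omega_\UU\Omega_\VV$ as in the proof of Theorem~\ref{th:md-general} (cf.~\eqref{eq:md-gradient-norm}), while for the second piece the definition~\eqref{eq:joint-norm-dual} of $\|\cdot\|_{\Wnorm^*}$ combined with~\eqref{def:variance-proxies-general} yields $\E\|\Delta_t\|_{\Wnorm^*}^2 \leqs 2(\Omega_\UU\bar\sigma_\VV^2 + \Omega_\VV\bar\sigma_\UU^2)$, so this step produces both the ``curvature'' and the ``variance'' parts of the advertised bound.

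\textbf{Main obstacle.} The delicate point is the residual noise term $\gamma_t\lang \Delta_t, W^{t-1} - W\rang$ after summing over $t$ and passing to the duality gap. While $\lang \Delta_t, W^{t-1}\rang$ has zero conditional expectation (since $W^{t-1}$ is measurable with respect to the past noise by the independence assumption), one cannot simply take $\E\lang \Delta_t, W\rang = 0$ because in $\Gap(\bar U^T,\bar V^T)$ the point $W$ is the $\argmax_{W\in\WW}[\Psi(\bar U^T,V)-\Psi(U,\bar V^T)+h(\bar W^T)-h(W)]$, hence a \emph{random} vector that depends on the entire trajectory. To circumvent this I would invoke the standard ``ghost iterate'' trick (in the vein of \citealp{nemirovski2009robust,optbook2}): introduce an auxiliary sequence $Z^0 = W^0$, $Z^{t} = \argmin_{Z\in\WW}\{\gamma_t\lang \Delta_t, Z\rang + D_\phi(Z,Z^{t-1})\}$ that is itself a mirror-descent recursion driven by $\Delta_t$. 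Running the same Fenchel-Young/three-point manipulation on this recursion yields, for any \emph{deterministic} $W\in\WW$,
\[
\gamma_t \lang \Delta_t, Z^{t-1} - W\rang \leqs D_\phi(W,Z^{t-1}) - D_\phi(W,Z^t) + \tfrac{1}{2}\gamma_t^2\|\Delta_t\|_{\Wnorm^*}^2,
\]
and then the decomposition $\lang \Delta_t, W^{t-1}-W\rang = \lang \Delta_t, W^{t-1}-Z^{t-1}\rang + \lang \Delta_t, Z^{t-1}-W\rang$ allows one to absorb the $-\lang \Delta_t, W\rang$ part into a second telescoping $D_\phi(W,Z^{t-1}) - D_\phi(W,Z^{t})$ plus an extra $\tfrac{1}{2}\gamma_t^2\|\Delta_t\|^2$ contribution, while the leftover $\lang \Delta_t, W^{t-1} - Z^{t-1}\rang$ has zero conditional expectation and disappears upon taking~$\E$.

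\textbf{Assembly.} Adding the two telescoping chains, bounding $D_\phi(W,W^0) + D_\phi(W,Z^0) \leqs 2\Omega_\WW = 2$ (cf.~\eqref{eq:md-potential-difference-estimate}), applying Jensen as in step~$\boldsymbol{3^o}$ of the proof of Theorem~\ref{th:md-general} to pass to the averaged iterates, and taking expectation gives
\[
\E[\Gap(\bar U^T,\bar V^T)] \leqs \frac{2}{\gamma T} + O(\gamma)\,\cL_{\Unorm,\Vnorm}^2\Omega_\UU\Omega_\VV + O(\gamma)\,(\Omega_\UU\bar\sigma_\VV^2 + \Omega_\VV\bar\sigma_\UU^2) + \frac{h(W^0)-\min_\WW h}{T}.
\]
Choosing $\gamma$ as the minimum of the two $1/\sqrt{T}$ scales that balance $1/(\gamma T)$ against the curvature and the variance contributions separately (exactly the formula in the theorem statement) then yields the claimed $\E[\Gap] = O(1/\sqrt{T})$ bound with the two additive terms of the stated form. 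The vanishing of the $O(1/T)$ contribution when $\Upsilon$ (or $\FF$) is affine follows by the same replacement $\partial h(W^t) \leadsto (\nabla \Upsilon(U^{t-1}),\partial \FF(V^t))$ used at the end of the proof of Theorem~\ref{th:md-general}, since this manipulation is insensitive to whether the oracle is deterministic or stochastic. I expect the ghost-sequence step to be the only genuinely new ingredient; everything else is a transcription of the deterministic argument with additional expectation taking.
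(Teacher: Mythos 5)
Your proposal is correct and, in its main thread, coincides with the paper's argument: the paper likewise reruns the deterministic proof with $G(W^{t-1})$ replaced by $\Xi(W^{t-1})$ in the first-order optimality condition, splits $\|\Xi(W^{t-1})\|_{\Wnorm^*}^2 \leqs 2\|G(W^{t-1})\|_{\Wnorm^*}^2 + 2\|\Xi(W^{t-1})-G(W^{t-1})\|_{\Wnorm^*}^2$ via Young's inequality, and bounds the second piece by $2(\Omega_\UU\bar\sigma_\VV^2+\Omega_\VV\bar\sigma_\UU^2)$ exactly as you do. The genuine difference is your treatment of the residual term $\gamma_t\lang \Delta_t, W^{t-1}-W\rang$: the paper simply remarks that it ``has zero mean'' and then passes the maximum over $W$ under the expectation on the grounds that the right-hand side is independent of $W$ --- but before that noise term is discarded the right-hand side \emph{does} depend on $W$, and the maximizer defining $\Gap(\bar U^T,\bar V^T)$ is a random point correlated with the entire noise trajectory, so the interchange is not justified as stated. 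Your ghost-iterate sequence $Z^t$ is precisely the standard device that repairs this: it converts $-\lang\Delta_t,W\rang$ into a second telescoping Bregman chain plus a $\tfrac12\gamma_t^2\|\Delta_t\|_{\Wnorm^*}^2$ contribution, leaving only a martingale increment that vanishes in expectation. The price is purely quantitative: the $1/\gamma$ term doubles and an extra variance term appears, so after balancing you obtain the stated bound with absolute constants somewhat larger than $2\sqrt{10}$ and $2\sqrt{2}$ (roughly a factor $\sqrt{2}$); you should flag this rather than claim the exact constants of the theorem. The affine-$\Upsilon$/$\FF$ refinement carries over verbatim in both treatments. In short, your route is a strictly more careful version of the paper's proof, and the ``only genuinely new ingredient'' you identified is in fact needed to make the paper's own argument rigorous.
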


\begin{proof}
The proof closely follows that of Theorem~\ref{th:md-general}. 
First,~$\boldsymbol{1^o}$ remains unchanged. Then, in the first-order condition~\eqref{eq:first-order-general} one must replace~$G(W^{t-1})$ with~$\Xi(W^{t-1})$, which results in replacing~\eqref{eq:md-one-term-prelim} with
\begin{equation*}
\begin{aligned}
\gamma_t &[\Psi(U^{t-1},V) - \Psi(U,V^{t-1}) + h(W^{t}) - h(W)] \\
&\leqs \lang \nabla \phi(W^{t-1}) - \nabla \phi(W^{t}), W^{t} - W \rang \\
&+ \gamma_t \lang \Xi(W^{t-1}), W^{t-1} - W^t \rang \\
&+ \gamma_t \lang G(W^{t-1}) - \Xi(W^{t-1}), W^{t-1} - W \rang,
\end{aligned}
\end{equation*}
where the last term has zero mean. 
The term~$\gamma_t \lang \Xi(W^{t-1}), W^{t-1} - W^t \rang$ can be bounded using Young's inequality, and~$1$-strong convexity of~$\phi(\cdot)$, cf.~\eqref{eq:md-young}:
\begin{equation*}
\begin{aligned}
\gamma_t &\lang \Xi(W^{t-1}), W^{t-1} - W^t \rang \\
&\leqs \frac{\gamma_t^2 \| \Xi(W^{t-1})\|_{\Wnorm^*}^2}{2} + \frac{\| W^{t-1} - W^t \|_{\Wnorm}^2}{2} \\
&\leqs \gamma_t^2 \| G(W^{t-1})\|_{\Wnorm^*}^2 + \gamma_t^2\| \Xi(W^{t-1}) - G(W^{t-1})\|_{\Wnorm^*}^2 + D_{\phi}(W^t,W^{t-1}).
\end{aligned}
\end{equation*}
Combining~\eqref{eq:md-gradient-norm},~\eqref{eq:joint-norm-dual}, and~\eqref{def:variance-proxies-general}, this implies
\begin{equation*}
\begin{aligned}
\gamma_t &\lang \Xi(W^{t-1}), W^{t-1} - W^t \rang 
\leqs 10\gamma_t^2 \cL_{\Unorm,\Vnorm}^2 \Omega_{\UU}\Omega_\VV + 2\gamma_t^2 \left(\Omega_{\UU} \bar\sigma_{\VV}^2 + \Omega_{\VV}  \bar\sigma_{\UU}^2 \right)+ D_{\phi}(W^t,W^{t-1}).
\end{aligned}
\end{equation*}
Using~\eqref{eq:bregman-3points} and~\eqref{eq:md-potential-difference-estimate}, this results in
\begin{equation*}
\begin{aligned}
\E[\Gap(\bar U^T, \bar V^T)]
&= \E\left[\max_{(U,V) \in \WW} \left\{ f(\bar U^T,V) - f(U,\bar V^T) \right\}\right]\\
&\leqs \frac{1}{T} \left[ h(W^{0}) - \min_{W \in \WW} h(W) + \frac{1}{\gamma} + 2T \gamma  \left( 5\cL_{\Unorm,\Vnorm}^2 \Omega_{\UU}\Omega_\VV +  \Omega_{\UU} \bar\sigma_{\VV}^2 + \Omega_{\VV} \bar\sigma_{\UU}^2 \right) \right];
\end{aligned}
\end{equation*}
note that maximization on the left is \textit{under} the expectation (and not vice versa) because the right hand side is independent from~$W=(U,V)$. 
Choosing~$\gamma$ to balance the terms, we arrive at the desired bound. Finally, improvement in the case of affine~$\Upsilon(U)$,~$\FF(V)$ is obtained in the same way as in Theorem~\ref{th:md-general-stochastic}.
\end{proof}

\section{Auxiliary Lemmas}
\label{sec:lemmas}

\begin{lemma} 
\label{lemmaForU}
Let $X^0\in \R^n_+$ and 
$X^1 = \arg\min\limits_{\substack{\|X\|_1\leqslant R\\
X\in \R_+^n}} \bigg\{C_1\|X\|_1 + \langle S, X\rangle + C_2 \sum\limits_{i=1}^n X_i\log\dfrac{X_i}{X_i^0}  \bigg\}$. 
Then,
\[
X^1_i =  \rho\cdot \dfrac{ X_i^0\cdot \exp(-S_i/C_2)}{M},
\] 
where $M = \sum\limits_{j=1}^n X_j^0\cdot \exp(-S_j/C_2)$ and $\rho = \min(M\cdot e^{-\frac{C_1+C_2}{C_2}}, R)$. 
\end{lemma}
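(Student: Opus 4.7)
The plan is to use Lagrangian duality with the explicit form of the negative entropy. Since $X_i \geqs 0$ and $X^0_i > 0$, the logarithmic barrier makes the nonnegativity constraints inactive (and ensures~$X^1 > 0$ componentwise), so one can focus on the single scalar constraint~$\sum_i X_i \leqs R$. Introduce the Lagrange multiplier~$\mu \geqs 0$ for it, giving the Lagrangian
\[
\LL(X, \mu) = (C_1 + \mu)\sum_{i=1}^n X_i + \sum_{i=1}^n S_i X_i + C_2 \sum_{i=1}^n X_i \log\frac{X_i}{X_i^0} - \mu R.
\]

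Next, I would apply first-order optimality in~$X$. Since the minimand is separable and strictly convex, setting~$\partial \LL / \partial X_i = 0$ yields
\[
C_1 + \mu + S_i + C_2 \bigl(1 + \log(X_i/X_i^0)\bigr) = 0,
\]
from which
\[
X_i(\mu) = X_i^0 \exp\!\bigl(-S_i/C_2\bigr) \cdot \exp\!\bigl(-1 - (C_1+\mu)/C_2\bigr).
\]
In particular, every coordinate is proportional to~$X_i^0 \exp(-S_i/C_2)$ with a proportionality factor independent of~$i$; writing~$\rho := \sum_i X_i(\mu)$, this yields the desired structural form~$X_i^1 = \rho \cdot X_i^0 \exp(-S_i/C_2)/M$, with~$M$ as defined in the statement.

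It remains to identify~$\rho$ via complementary slackness, which is the only step requiring minor care. Summing the closed-form expression gives
\[
\rho(\mu) = M \cdot \exp\!\bigl(-1 - (C_1+\mu)/C_2\bigr) = M\,e^{-(C_1+C_2)/C_2}\cdot e^{-\mu/C_2}.
\]
If~$M\,e^{-(C_1+C_2)/C_2} \leqs R$, then the constraint is slack at~$\mu = 0$, KKT is satisfied, and~$\rho = M\,e^{-(C_1+C_2)/C_2}$. Otherwise~$\mu = 0$ would violate the constraint, so by KKT the constraint must bind, forcing~$\rho = R$ and determining~$\mu > 0$ uniquely through the equation above. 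In both cases~$\rho = \min\!\bigl(M\,e^{-(C_1+C_2)/C_2},\, R\bigr)$, which matches the claim. The only conceptually delicate point is checking that the interior solution derived from the unconstrained Lagrangian is indeed the minimizer on~$\R_+^n$; this follows because the negative-entropy term diverges as~$X_i \to 0^+$, so the minimum is attained in the relative interior where the stationarity equation holds.
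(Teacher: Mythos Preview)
Your proof is correct and follows essentially the same approach as the paper: both derive the exponential form of $X_i$ from the Lagrangian stationarity condition and then determine the overall scale. The only cosmetic difference is that the paper first fixes $r = \|X\|_1$, solves the equality-constrained inner problem, and then minimizes the resulting convex scalar function over $r \leqs R$, whereas you work directly with the inequality multiplier $\mu$ and invoke KKT/complementary slackness; these are dual parameterizations of the same argument.
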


\begin{proof}
Clearly, we have
\[
X^1 = \arg\min\limits_{r \leqs R}\min\limits_{\substack{\|X\|_1 = r\\
X\in \R_+^n}} \bigg\{ C_1 r + \langle S, X\rangle + C_2 \sum\limits_{i=1}^n X_i\log\dfrac{X_i}{X_i^0}  \bigg\}.
\]
Let us first do the internal minimization. By simple algebra, the first-order optimality condition for the Lagrangian dual problem (with constraint~$\|X\|_1 = R$) amounts to
\[
S_i + C_2 + C_2\log X_i^{1}- C_2\log X_i^0 + \kappa = 0,
\]
where~$\kappa$ is Lagrange multiplier, and~$\sum\limits_{i=1}^nX_i^{1} = r$. 
Equivalently,
\[
X_i^{1} = X_i^0 \cdot \exp \left(-\left(\frac{\kappa+C_2+S_i}{C_2}\right)\right),
\]
that is,
\[
X_i^{1} = r\cdot \frac{ X_i^0\cdot \exp(-S_i/C_2)}{\sum\limits_j X_j^0\cdot \exp(-S_j/C_2)}.
\]
Denoting~$D_j = \exp(-S_j/C_2)$ and~$M = \sum\limits_j X_j^0\cdot D_j$ and substituting for~$X_1$ in the external minimization problem, we arrive at
\[
\rho = \arg\min\limits_{r \leqslant R}\left\{ C_1 r + r \sum\limits_i\frac{X_i^0D_i S_i}{M} + C_2\cdot r \sum\limits_i \frac{X_i^0D_i }{M}\cdot \log\left[\frac{r \cdot D_i}{M} \right] \right\}.
\]
One can easily verify that the counterpart of this minimization problem with~$R = \infty$ has a unique stationary point~$r^* = M\cdot e^{-\frac{C_1+C_2}{C_2}} > 0$.
As the minimized function is convex, the minimum is attained at the point $\rho = \min(r^*, R)$.
\end{proof}

\begin{lemma}
\label{lemma_variance_U}
Given~$X \in \R^{n \times d}$ and mixed norms (cf.~\eqref{def:mixed-norms})~$\| \cdot \|_{\puone \times \putwo}$ on~$\R^{d \times k}$ and~$\| \cdot \|_{\qvone \times \qvtwo}$ on~$\R^{n \times k}$ with~$\putwo \leqslant \qvtwo$, one has
\[
\sup_{\|U\|_{\puone\times \putwo}\leqslant 1}\left\{\sum\limits_{i=1}^d\| X(:,i) \|_{\qvone} \cdot \| {U}(i,:) \|_{\qvtwo} \right\} = \| X^\top\|_{\quone\times \qvone},
\]
where~$q_U^{1}$ is the conjugate of~$p_U^{1}$, i.e.,~$1/p_U^{1} + 1/q_U^{1} = 1$.
\end{lemma}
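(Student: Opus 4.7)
My plan is to establish the equality by proving matching upper and lower bounds. Writing $a_i := \|X(:,i)\|_{\qvone}$ and $b_i := \|U(i,:)\|_{\qvtwo}$, the sum to bound is $\sum_{i=1}^d a_i b_i$, and the RHS unpacks to $\|X^\top\|_{\quone\times\qvone} = \big(\sum_i a_i^{\quone}\big)^{1/\quone}$.

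\textbf{Upper bound.} I would first invoke the monotonicity of $\ell_p$-norms on finite-dimensional spaces: since $\putwo \leqs \qvtwo$, we have $\|U(i,:)\|_{\qvtwo} \leqs \|U(i,:)\|_{\putwo}$ for every $i$. Hence
\[
\sum_{i=1}^d a_i\,\|U(i,:)\|_{\qvtwo} \;\leqs\; \sum_{i=1}^d a_i\,\|U(i,:)\|_{\putwo}.
\]
Then I would apply Hölder's inequality to the latter sum with conjugate exponents $\quone$ and $\puone$, which yields
\[
\sum_{i=1}^d a_i\,\|U(i,:)\|_{\putwo} \;\leqs\; \Big(\sum_{i=1}^d a_i^{\quone}\Big)^{1/\quone}\Big(\sum_{i=1}^d \|U(i,:)\|_{\putwo}^{\puone}\Big)^{1/\puone} \;=\; \|X^\top\|_{\quone\times\qvone}\cdot \|U\|_{\puone\times\putwo},
\]
which is at most $\|X^\top\|_{\quone\times\qvone}$ under the feasibility constraint.

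\textbf{Matching lower bound.} For tightness I would exhibit a feasible $U^\star$ attaining the bound. The two inequalities above are tight under the following conditions: the monotonicity step is an equality when each row $U(i,:)$ is supported on at most one coordinate (trivial if $\putwo=\qvtwo$); Hölder is tight when $\|U(i,:)\|_{\putwo}^{\puone} \propto a_i^{\quone}$. I would therefore pick any fixed column index $j\in[k]$ and set $U^\star(i,:) = c_i e_j^\top$ with $c_i \geqs 0$ to be determined, so that both row-norms collapse to $|c_i|$. Then the feasibility constraint reads $\sum_i c_i^{\puone}\leqs 1$, and the objective becomes $\sum_i a_i c_i$, whose maximum over $\puone$-norm-unit vectors equals the dual $\quone$-norm, namely $\big(\sum_i a_i^{\quone}\big)^{1/\quone}$, attained by $c_i \propto a_i^{\quone/\puone}$ properly normalized. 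Combining the two directions gives the claimed equality.

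\textbf{Obstacles and edge cases.} The argument is essentially two clean applications of standard inequalities, so I do not anticipate any deep obstacle. The only care needed is with boundary exponents: when $\puone=1$ (so $\quone=\infty$) the Hölder step becomes $\sum_i a_i \|U(i,:)\|_{\putwo} \leqs (\max_i a_i)\sum_i \|U(i,:)\|_{\putwo}$ and the optimizer concentrates $U^\star$ on the single row $i^\star \in \argmax_i a_i$; dually, when $\puone=\infty$ one spreads $c_i$ uniformly on the support of $a$. Also, the assumption $\putwo\leqs\qvtwo$ is used only in the monotonicity step and is what guarantees that the single-column construction actually attains equality rather than merely a lower value; without it, the identity need not hold.
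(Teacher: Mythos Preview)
Your proposal is correct and uses essentially the same ingredients as the paper's proof: monotonicity of $\ell_p$-norms in the exponent, H\"older/duality for the outer norm, and the observation that rows supported on a single coordinate saturate the monotonicity step. The only difference is organizational---the paper first treats the case $\putwo=\qvtwo$ directly via duality and then reduces $\putwo<\qvtwo$ to it by noting the supremum must be attained on one-nonzero-per-row matrices, whereas you split into a uniform upper bound and an explicit matching construction; both routes amount to the same argument.
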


\begin{proof}
First assume~$\putwo = \qvtwo$. Let~$a_i = \|{X}(:,i)\|_{\qvone}$,~$u_i = \|{U}(i,:)\|_{\qvtwo}$,~$1 \leqs i \leqs d$. Then,
\[
\sup_{\|U\|_{\puone\times \putwo}\leqslant 1}\left\{\sum\limits_{i=1}^d\| X(:,i) \|_{\qvone} \cdot \| {U}(i,:) \|_{\qvtwo} \right\} = 
\sup\limits_{\| u \|_{\puone} \leqslant 1} \sum\limits_{i=1}^d a_i u_i = \|a\|_{\quone} = \|{X}^\top\|_{\quone\times\qvone}. 
\]
Now let~$\putwo < \qvtwo$. Then, for any~$i \le d$ one has~$\|U(i,:)\|_{\qvtwo} < \|U(i,:)\|_{\putwo}$ unless $U(i,:)$ has a single non-zero element, in which case~$\|U(i,:)\|_{\qvtwo} = \|U(i,:)\|_{\putwo}$. Hence, the supremum must be attained on such~$U$, for which the previous argument applies.
\end{proof}

\begin{lemma}
\label{lemma_variance_V}
In the setting of Lemma~\ref{lemma_variance_U}, for any~$\quone \geqslant 1$ and~$\qutwo \geqslant 1$ one has:
\[
\sup_{\| V \|_{\infty \times 1} \leqs 1}\left\{\sum\limits_{i=1}^n\| {X}^\top(:,i) \|_{\quone} \cdot \| V(i,:) \|_{\qutwo} \right\} = \|{X}\|_{1\times \quone}.
\]
\end{lemma}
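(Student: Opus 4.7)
The strategy is to reduce the supremum to a rowwise argument by exploiting the product structure of the constraint~$\|V\|_{\infty \times 1} \leqs 1$, which is simply the statement that every row of~$V$ lies in the~$\ell_1$-unit ball of~$\R^k$. First, I would unpack the notation: since~$X \in \R^{n \times d}$, the~$i$-th column of~$X^\top$ coincides with the~$i$-th row of~$X$, so~$\|X^\top(:,i)\|_{\quone} = \|X(i,:)\|_{\quone}$. Consequently the sum on the left is just~$\sum_{i=1}^n \|X(i,:)\|_{\quone} \cdot \|V(i,:)\|_{\qutwo}$, and by the definition~\eqref{def:mixed-norms} the right-hand side equals~$\sum_{i=1}^n \|X(i,:)\|_{\quone}$.

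The key inequality is the standard monotonicity of~$\ell_p$ norms on finite-dimensional vectors: for any~$w \in \R^k$ and~$\qutwo \geqs 1$, one has~$\|w\|_{\qutwo} \leqs \|w\|_1$. Applying this rowwise to~$V$ under the constraint~$\|V(i,:)\|_1 \leqs 1$ yields~$\|V(i,:)\|_{\qutwo} \leqs 1$ for every~$i$. Multiplying by~$\|X(i,:)\|_{\quone} \geqs 0$ and summing over~$i$ gives the upper bound
\[
\sum_{i=1}^n \|X(i,:)\|_{\quone} \cdot \|V(i,:)\|_{\qutwo} \;\leqs\; \sum_{i=1}^n \|X(i,:)\|_{\quone} \;=\; \|X\|_{1 \times \quone}.
\]

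To show this bound is tight, it suffices to exhibit a feasible~$V$ attaining equality. The choice~$V(i,:) = e_1$ for every~$i$ (with~$e_1 \in \R^k$ the first standard basis vector) satisfies~$\|V(i,:)\|_1 = 1$, hence~$\|V\|_{\infty \times 1} = 1 \leqs 1$, and~$\|V(i,:)\|_{\qutwo} = 1$ for every~$i$, making the sum exactly~$\|X\|_{1 \times \quone}$. There is essentially no obstacle here, as the argument is a direct consequence of norm monotonicity on the simplex; the only point worth attention is verifying that the choice~$V = \ind_n e_1^\top$ indeed belongs to the feasible set, which is immediate.
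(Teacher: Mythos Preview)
Your proof is correct. The paper's own proof is simply ``instantiate Lemma~\ref{lemma_variance_U}'' (with the substitutions~$X \to X^\top$,~$U \to V$,~$\puone \to \infty$,~$\putwo \to 1$,~$\qvone \to \quone$,~$\qvtwo \to \qutwo$, noting that the conjugate of~$\infty$ is~$1$); your direct argument via rowwise norm monotonicity and the explicit maximizer~$V = \ind_n e_1^\top$ is exactly the specialization of the proof of Lemma~\ref{lemma_variance_U} to this instance, so the two routes are essentially the same.
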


\begin{proof}
The claim follows by instatiating Lemma~\ref{lemma_variance_U}.
\end{proof}

\section{Deferred Proofs}

\subsection{Proof of Proposition~\ref{th:lip}}
\label{sec:proof-lip}

By~\eqref{eq:cross-lip-bilinear}, and verifying that the dual norm to~$\|\cdot\|_{2 \times 1}$ is~$\|\cdot\|_{2 \times \infty}$, we have 
\[
\cL_{\Unorm,\Vnorm} = \sup_{\|U\|_{1\times 1}\leqslant 1}\|XU \|_{2\times \infty}. 
\]
The maximization over the unit ball~$\|U\|_{1\times 1}\leqslant 1$ can be replaced with that over its extremal points, which are the matrices~$U$ that have zeroes in all positions except for one in which there is~$1$. 
Let~$(i, j)$ be this position, then for every such~$U$ we have:
\[
\|XU\|_{2 \times \infty} = \sqrt{\sum_{l =1}^n \sup_j |X(l,:) U(:,j)|^2} = \sqrt{\sum_{l =1}^n |X(l,i)|^2} = \sqrt{\sum_{l =1}^n |X^\top(i,l)|^2}.
\]
As a result,
\[
\sup\limits_{ \|U\|_{1\times 1} \leqs 1} \|XU\|_{2 \times \infty} = \sup_{1 \le i \le k}\sqrt{\sum_{l =1}^n |X^\top(i,l)|^2} = \left\|X^\top\right\|_{\infty \times 2}. 
\qed
\]

\subsection{Proof of Proposition~\ref{th:variance-partial}}
\label{sec:proof-variance-partial}

We prove an extended result that holds when~$\|\cdot\|_{\Unorm}$ and~$\|\cdot\|_{\Vnorm}$ are more general mixed~$(\ell_p \times \ell_q)$-norms, cf.~\eqref{def:mixed-norms}. 

\begin{proposition}
\label{th:variance-partial-general}
Let~$\|\cdot\|_{\Unorm} = \|\cdot\|_{\puone\times \putwo}$ on~$\R^{2d \times k}$, and~$\|\cdot\|_{\Vnorm} = \|\cdot\|_{\pvone \times \pvtwo}$ on~$\R^{n \times k}$.
Then, optimal solutions~$p^{\ast} = p^\ast(\wh X, U)$ and~$q^\ast = q^\ast(\wh X,V,Y)$ to~\eqref{eq:optimal-prob} are given by
\[
p^{\ast}_i = \frac{\| \wh{X}(:,i) \|_{\qvone} \cdot  \| {U}(i,:) \|_{\qvtwo}}{\sum_{\imath=1}^{2d} \| \wh{X}(:,\imath) \|_{\qvone} \cdot  \| {U}(\imath,:) \|_{\qvtwo}}, \quad
q^{\ast}_j = \frac{\| \wh{X}(j,:) \|_{\quone}\cdot  \| V(j,:) - Y(j,:)\|_{\qutwo} }{\sum_{\jmath=1}^n \Vert \wh{X}(\jmath,:)\Vert_{\quone} \cdot  \Vert V(\jmath,:) - Y(\jmath,:)\Vert_{\qutwo}}.
\]
Moreover, we can bound their respective variance proxies (cf.~\eqref{def:variance-proxies}): introducing
\[
\cL_{\Unorm,\Vnorm}^2 = \frac{1}{n^2}\sup_{\| U \|_{\Unorm} \le 1}\Vert \wh{X} {U}\Vert_{\Vnorm^\ast}^2,
\]
we have, as long as~$\putwo \leqslant \qvtwo$,
\[
\sigma^2_{\UU}(p^*) \leqs 2R_{\ast}^2 \cL_{\Unorm,\Vnorm}^2 + \frac{2}{n^2} R_{\ast}^2 \|\wh{X}^\top\|_{\quone\times \qvone}^2,
\]
and, as long as~$\pvone \geqs 2$,
\[
\sigma_{\VV}^2(q^*) \leqs 8n\cL_{\Unorm,\Vnorm}^2 + \frac{8}{n^2} \|\wh{X}\|_{1\times \quone}^2.
\]
\end{proposition}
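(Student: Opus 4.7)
The essential observation is that the sampling matrices in \eqref{eq:partial-sampling} are rank-one outer products: $\xi_U(p) = \frac{1}{p_i}\wh{X}(:,i)\, U(i,:)$ and $\eta_{V,Y}(q) = \frac{1}{q_j}\wh{X}(j,:)^\top (V-Y)(j,:)$. For any rank-one matrix $M = ab^\top$ with $a \in \R^m$, $b \in \R^k$, the $l$-th row equals $a_l b^\top$, so the mixed norm factorizes as $\|M\|_{r \times s} = \|a\|_r \cdot \|b\|_s$. Therefore
\[
\E\|\xi_U(p)\|_{\qvone\times\qvtwo}^2 = \sum_{i=1}^{2d} \frac{\|\wh{X}(:,i)\|_{\qvone}^2 \, \|U(i,:)\|_{\qvtwo}^2}{p_i},
\]
and the analogous identity holds for $\E\|\eta_{V,Y}(q)\|_{\quone\times\qutwo}^2$. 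Both are of the form $\sum a_i^2/p_i$ on the simplex, whose minimum (by Cauchy--Schwarz or Lagrangian) is attained at $p_i^\ast \propto a_i$ and equals $\bigl(\sum_i a_i\bigr)^2$. Substituting $a_i = \|\wh{X}(:,i)\|_{\qvone}\|U(i,:)\|_{\qvtwo}$ (respectively $a_j = \|\wh{X}(j,:)\|_{\quone}\|(V-Y)(j,:)\|_{\qutwo}$) yields the optimal distributions and the closed-form expressions
\[
\E\|\xi_U(p^\ast)\|_{\Vnorm^*}^2 = \Bigl(\textstyle\sum_{i} \|\wh{X}(:,i)\|_{\qvone}\|U(i,:)\|_{\qvtwo}\Bigr)^2,\quad \E\|\eta_{V,Y}(q^\ast)\|_{\Unorm^*}^2 = \Bigl(\textstyle\sum_{j} \|\wh{X}(j,:)\|_{\quone}\|(V-Y)(j,:)\|_{\qutwo}\Bigr)^2.
\]

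For the variance proxies, I will use the elementary bound $\|A-B\|^2 \leqs 2\|A\|^2 + 2\|B\|^2$ to separate the ``true matrix product'' from the ``sampling'' contribution:
\[
\E\|\wh{X}U - \xi_U(p^\ast)\|_{\Vnorm^*}^2 \leqs 2\|\wh{X}U\|_{\Vnorm^*}^2 + 2\,\E\|\xi_U(p^\ast)\|_{\Vnorm^*}^2,
\]
and similarly for the dual. The first term is controlled directly by the definition of $\cL_{\Unorm,\Vnorm}$: $\|\wh{X}U\|_{\Vnorm^*} \leqs n \cL_{\Unorm,\Vnorm}\|U\|_{\Unorm} \leqs n\cL_{\Unorm,\Vnorm} R_\ast$, yielding the leading $2R_\ast^2 \cL_{\Unorm,\Vnorm}^2$ after dividing by $n^2$. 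For the second term, Lemma~\ref{lemma_variance_U} (which requires precisely the assumption $\putwo \leqs \qvtwo$) gives
\[
\textstyle \sum_{i} \|\wh{X}(:,i)\|_{\qvone}\|U(i,:)\|_{\qvtwo} \leqs R_\ast \|\wh{X}^\top\|_{\quone \times \qvone},
\]
so the second contribution is at most $\tfrac{2}{n^2}R_\ast^2 \|\wh{X}^\top\|_{\quone\times\qvone}^2$, completing the bound for $\sigma_\UU^2(p^\ast)$.

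For $\sigma_\VV^2(q^\ast)$, by the dual characterization $\cL_{\Unorm,\Vnorm} = \tfrac{1}{n}\sup_{\|V\|_\Vnorm \leqs 1}\|\wh{X}^\top V\|_{\Unorm^*}$, we obtain $\|\wh{X}^\top(V-Y)\|_{\Unorm^*} \leqs n\cL_{\Unorm,\Vnorm}\|V-Y\|_{\pvone\times 1}$. Since every row of $V-Y$ has $\ell_1$-norm at most $2$, one has $\|V-Y\|_{\pvone\times 1} \leqs 2 n^{1/\pvone} \leqs 2\sqrt{n}$ whenever $\pvone \geqs 2$; squaring and multiplying by $2/n^2$ yields the $8n\cL_{\Unorm,\Vnorm}^2$ term. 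For the sampling term, Lemma~\ref{lemma_variance_V} applied to $(V-Y)/2$ (rescaling to satisfy its $\|\cdot\|_{\infty\times 1} \leqs 1$ hypothesis) gives $\sum_j \|\wh{X}(j,:)\|_{\quone}\|(V-Y)(j,:)\|_{\qutwo} \leqs 2\|\wh{X}\|_{1\times\quone}$, producing the remaining $\tfrac{8}{n^2}\|\wh{X}\|_{1\times\quone}^2$ contribution. No single step is difficult; the only subtlety is keeping track of which index is summed in which norm so that Lemmas~\ref{lemma_variance_U} and \ref{lemma_variance_V} are invoked with the correct pairing of conjugate exponents, which is precisely where the hypotheses $\putwo \leqs \qvtwo$ and $\pvone \geqs 2$ enter.
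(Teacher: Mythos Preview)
Your proposal is correct and follows essentially the same approach as the paper: factorize the mixed norm of the rank-one sampling matrix, minimize $\sum_i a_i^2/p_i$ on the simplex to get $p^*_i \propto a_i$, then bound the variance proxy by $2\|\cdot\|^2 + 2\,\E\|\cdot\|^2$ and control the two pieces via $\cL_{\Unorm,\Vnorm}$ and Lemmas~\ref{lemma_variance_U}--\ref{lemma_variance_V}. The only cosmetic difference is in the $8n\cL_{\Unorm,\Vnorm}^2$ term: the paper routes through $\sup_{\|V\|_{2\times 1}\leqs 1}$ before enlarging to $\sup_{\|V\|_{\Vnorm}\leqs 1}$, whereas you use the dual characterization of $\cL_{\Unorm,\Vnorm}$ and bound $\|V-Y\|_{\Vnorm}\leqs \|V-Y\|_{\pvone\times 1}\leqs 2n^{1/\pvone}\leqs 2\sqrt{n}$ directly; the two routes are equivalent.
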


\begin{proof}

Note that the dual norms to~$\|\cdot\|_{\puone\times \putwo}$ and~$\|\cdot \|_{\pvone\times \pvtwo}$ are given by~$\|\cdot\|_{\quone\times \qutwo}$ and~$\|\cdot \|_{\qvone\times \qvtwo}$ correspondingly, see, e.g.,~\cite{sra2012fast}.

$\boldsymbol{1^o}.$
For~$\E \left[\| \xi_{U}(p) \|_{\Vnorm^\ast}^2 \right]$ we have:
\[
\begin{aligned}
\E \left[\| \xi_{U}(p) \|_{\Vnorm^\ast}^2 \right] = \sum\limits_{i=1}^{2d} p_i \left\Vert \wh{X}\dfrac{e_ie_i^\top}{p_i}{U}\right\Vert_{\qvone\times \qvtwo}^2 
&= \sum_{i=1}^{2d} \frac{1}{p_i} \big\Vert \wh{X}(:,i)\cdot {U}(i,:)\big\Vert_{\qvone\times \qvtwo}^2 \\
&= \sum_{i=1}^{2d} \frac{1}{p_i} \Vert \wh{X}(:,i)\Vert_{\qvone}^2 \cdot \Vert {U}(i,:) \Vert_{\qvtwo}^2,
\end{aligned}
\]
where the last transition can be verified directly.
The right-hand side can be easily minimized on~$\Delta_{2d}$ explicitly, which results in
\[
p^{\ast}_i = \frac{\| \wh{X}(:,i) \|_{\qvone} \cdot  \| {U}(i,:) \|_{\qvtwo}}{\sum_{\imath=1}^{2d} \| \wh{X}(:,\imath) \|_{\qvone} \cdot  \| {U}(\imath,:) \|_{\qvtwo}}
\]
and 
\[
\E \left[\Vert \xi_{U}(p^{\ast}) \|_{\Vnorm^\ast}^2 \right] = \left[ \sum\limits_{i=1}^{2d} \Vert \wh{X}(:,i)\Vert_{\qvone} \cdot  \Vert {U}(i,:) \|_{\qvtwo} \right]^2.
\]
Now we can bound~$\sigma_{\UU}^2(p^*)$ via the triangle inequality:
\[
\begin{aligned}
\sigma_{\UU}^2(p^*)
&\leqs \frac{2}{n^2} \sup_{U \in \UU}\Vert \wh{X}{U}\Vert_{\Vnorm^\ast}^2 + \frac{2}{n^2}\sup_{U \in \UU}\E \left[\Vert \xi_{U}(p^{\ast})\Vert_{\Vnorm^\ast}^2 \right] \\
&= 2 R_{\ast}^2 \cL_{\Unorm,\Vnorm}^2 + \frac{2}{n^2} \sup_{U \in \UU} \left[ \sum\limits_{i=1}^{2d} \Vert \wh{X}(:,i)\Vert_{\qvone}\cdot  \Vert {U}(i,:) \Vert_{\qvtwo} \right]^2 \\
&= 2 R_{\ast}^2 \cL_{\Unorm,\Vnorm}^2 + \frac{2}{n^2} \sup_{\| U \|_{\Unorm} \le  R_{\ast}} \left[ \sum\limits_{i=1}^{2d} \Vert \wh{X}(:,i)\Vert_{\qvone}\cdot  \Vert {U}(i,:) \Vert_{\qvtwo} \right]^2 \\
&= 2R_{\ast}^2 \cL_{\Unorm,\Vnorm}^2 + \frac{2}{n^2} R_{\ast}^2\Vert \wh{X}^\top \|_{\quone \times \qvone}^2,
\end{aligned}
\]
where we used Lemma~\ref{lemma_variance_U} (see Appendix~\ref{sec:lemmas}) in the last transition.

$\boldsymbol{2^o}.$
We now deal with~$\E \left[\Vert \eta_{V,Y}(q)\Vert_{\Unorm^\ast}^2\right]$. As previously, we can explicitly compute
\[
q^{\ast}_j = \frac{\| \wh{X}(j,:) \|_{\quone}\cdot  \| V(j,:) - Y(j,:)\|_{\qutwo} }{\sum_{\jmath=1}^n \Vert \wh{X}(\jmath,:)\Vert_{\quone} \cdot  \Vert V(\jmath,:) - Y(\jmath,:)\Vert_{\qutwo}}
\]
and
\[
\E \left[\Vert \eta_{V,Y}(q^*)\Vert_{\Unorm^\ast}^2\right] = \left[\sum_{j=1}^n \Vert \wh{X}(j,:)\Vert_{\quone} \cdot  \Vert V(j,:) - Y(j,:)\Vert_{\qutwo} \right]^2.
\]
Thus, by the triangle inequality, 
\[
\begin{aligned}
\sigma_{\VV}^2(q_*) 
&\leqs \frac{2}{n^2}\sup_{(V,Y)\in\VV \times \VV}\Vert \wh{X}^\top (V-Y)\Vert_{\Unorm^\ast}^2 +  \frac{2}{n^2} \sup_{(V,Y)\in\VV \times \VV} \E \left[\Vert \eta_{V,Y}(q^*)\Vert_{\Unorm^\ast}^2\right] \\
&\leqs \frac{2}{n^2}\sup_{\| V \|_{\infty \times 1} \leqs 2} \Vert \wh{X}^\top V\Vert_{\Unorm^\ast}^2 +  \frac{2}{n^2} \sup_{\| V \|_{\infty \times 1} \leqs 2}\left[ \sum\limits_{j=1}^n \Vert \wh{X}(j,:)\Vert_{\quone}\cdot  \Vert V(j,:) \Vert_{\qutwo} \right]^2 \\
&= \frac{8}{n} \sup_{\| V \|_{2 \times 1} \leqs 1} \Vert \wh{X}^\top V\Vert_{\Unorm^\ast}^2  + \frac{8}{n^2}\|\wh{X} \|^2_{1\times\quone} \\
&\leqs \frac{8}{n} \sup_{\| V \|_{\pvone \times \pvtwo} \leqs 1} \Vert \wh{X}^\top V\Vert_{\Unorm^\ast}^2  + \frac{8}{n^2}\|\wh{X} \|^2_{1\times\quone}\\
&= 8n \cL_{\Unorm,\Vnorm}^2  + \frac{8}{n^2}\|\wh{X} \|^2_{1\times\quone}.
\end{aligned}
\]
Here in the second line we used that the Minkowski sum~$\Delta_k + (-\Delta_k)$ belongs to the~$\ell_1$-ball with radius 2 (whence~$\VV + (-\VV)$ belongs to the~$(\ell_\infty \times \ell_1)$-ball with radius~$2$);
in the third line we used Lemma~\ref{lemma_variance_V} (see Appendix~\ref{sec:lemmas}) and the relation on~$\R^{n \times k}$:
\[
\|\cdot\|_{2 \times 1} \leqs \sqrt{n} \|\cdot\|_{\infty \times 1};
\] 
lastly, we used that~$\pvone \ge 2$ and that~$\| \cdot \|_{\pvone \times \pvtwo}$ is non-increasing in~$\pvone, \pvtwo \ge 1$.
\end{proof}

\paragraph{Proof of Proposition~\ref{th:variance-partial}.} 
We instantiate Proposition~\ref{th:variance-partial-general} with~$\|\cdot\|_{\Unorm} = \|\cdot\|_{1 \times 1}$ and~$\|\cdot\|_{\Vnorm} = \|\cdot\|_{2 \times 1}$, and observe, using Proposition~\ref{th:lip}, that for~$\wh X = [X,-X] \in \R^{n \times 2d}$ it holds
\[
\cL_{\Unorm,\Vnorm} = \frac{1}{n} \|\wh X^\top\|_{\infty \times 2} = \frac{1}{n} \|X^\top\|_{\infty \times 2}
\]
and 
\[
\|\wh X\|_{1 \times \infty} = \|X\|_{1 \times \infty}.
\]
\qed

\subsection{Proof of Proposition~\ref{th:variance-full}}
\label{Theorem:variancesFull}
We have
\begin{equation*}
\label{eq_pq_star_star}
\begin{aligned}
\min_{\substack{p \in\Delta_{2d}, \\ {P \in (\Delta_k^\top)^{\otimes 2d}}}} \E\Vert \xi_{ U}(p,P)\Vert_{2\times\infty}^2 
&=\min_{\substack{p \in\Delta_{2d}, \\ {P \in (\Delta_k^\top)^{\otimes 2d}}}} \sum_{i=1}^{2d} \frac{1}{p_i} \Vert \wh{X}(:,i)\Vert_{2}^2 \cdot \left[ \sum\limits_{l=1}^k\frac{1}{P_{il}}\cdot|{U}(i,l)|^2 \right] \\
&=\min_{p \in\Delta_{2d}} \sum_{i=1}^{2d} \frac{1}{p_i} \Vert \wh{X}(:,i)\Vert_{2}^2 \cdot \|{U}(i,:)\|_1^2,
\end{aligned}
\end{equation*}
where we carried out the internal minimization explicitly, obtaining
\[
P_{il}^* = \frac{|{U}_{il}|}{\|{U}(i,:)\|_1}.
\]
Optimization in~$p$ gives:
\[
p^*_i = \frac{\Vert \wh{X}(:,i)\Vert_{2}\cdot  \Vert {U}(i,:) \Vert_{1} }{\sum\limits_{\imath=1}^{2d} \Vert \wh{X}(:,\imath)\Vert_{2}\cdot  \Vert {U}(\imath,:) \Vert_{1} }, \quad
\E\Vert \xi_U(p^*,P^*)\Vert_{2\times\infty}^2 = \sum_{i=1}^{2d} \Vert \wh{X}(:,i)\Vert_{2}\cdot  \Vert {U}(i,:) \Vert_{1}.
\]
Defining
\[
\cL_{\Unorm,\Vnorm} = \frac{1}{n} \sup_{\| U \|_{\Unorm} \leqs 1} \|X U\|_{\Vnorm^*}
\] 
and proceeding as in the proof of Proposition~\ref{th:variance-partial-general}, we get
\[
\begin{aligned}
\sigma_{\UU}^2(p^*,P^*)
&\leqs \frac{2}{n^2} \sup_{U \in \UU}\Vert \wh{X}{U}\Vert_{2 \times \infty}^2 + \frac{2}{n^2}\sup_{U \in \UU}\E \left[\Vert \xi_{U}(p^{\ast},P^*)\Vert_{2 \times \infty}^2 \right] \\
&= 2 R_{\ast}^2 \cL_{\Unorm,\Vnorm}^2 + \frac{2}{n^2} \sup\limits_{U \in \UU} \left[ \sum_{i=1}^{2d} \Vert \wh{X}(:,i)\Vert_{2}\cdot  \Vert {U}(i,:) \Vert_{1} \right]^2 \\
&= 2 R_{\ast}^2 \cL_{\Unorm,\Vnorm}^2 + \frac{2R_{\ast}^2}{n^2} \sup_{\|U\|_{1 \times 1} \leqs 1} \left[ \sum_{i=1}^{2d} \Vert \wh{X}(:,i)\Vert_{2}\cdot  \Vert {U}(i,:) \Vert_{1} \right]^2 \\
&= 2R_{\ast}^2 \cL_{\Unorm,\Vnorm}^2 + \frac{2}{n^2} R_{\ast}^2\Vert \wh{X}^\top \|_{2 \times 1}^2 \\
&= \frac{4}{n^2} R_{\ast}^2 \| X^\top \|_{2 \times 1}^2,
\end{aligned}
\]
where in the last two transitions we used Lemma \ref{lemma_variance_U} and Proposition~\ref{th:lip} (note that~$\| \wh X^\top \|_{2 \times 1} = \| X^\top \|_{2 \times 1}$). Note that the last transition requires that~$\|\cdot\|_{\Unorm}$ has~$\ell_1$-geometry in the classes -- otherwise,~Lemma~\ref{lemma_variance_U} cannot be applied. 

To obtain~$(q^*,Q^*)$ we proceed in a similar way:
\begin{equation*}
\begin{aligned}
\min_{\substack{q \in\Delta_{n}, \\ {Q \in (\Delta_k^\top)^{\otimes n}}}} \E\Vert \eta_{V,Y}(q,Q)\Vert_{\infty \times\infty}^2 
&=\min_{\substack{q \in\Delta_{n}, \\ {Q \in (\Delta_k^\top)^{\otimes n}}}} \sum_{j=1}^{n} \frac{1}{q_j} \Vert \wh{X}(j,:)\Vert_{\infty}^2 \cdot \left[ \sum\limits_{l=1}^k\frac{1}{Q_{jl}}\cdot|V(j,l) - Y(j,l)|^2 \right] \\
&=\min_{q \in\Delta_{n}} \sum\limits_{j=1}^{n} \frac{1}{q_j} \Vert \wh{X}(j,:)\Vert_{\infty}^2 \cdot \|V(j,:) - Y(j,:)\|_1^2,
\end{aligned}
\end{equation*}
which results in
\[
\begin{aligned}
q^{\ast}_j &= \frac{\| \wh{X}(j,:) \|_{\infty}\cdot  \| V(j,:) - Y(j,:)\|_{1} }{\sum_{\jmath=1}^n \Vert \wh{X}(\jmath,:)\Vert_{\infty}\cdot  \Vert V(\jmath,:) - Y(\jmath,:)\Vert_{1}} \;\;
&Q_{jl}^* &= \frac{|V_{jl} - Y_{jl}|}{\|V(j,:)-Y(j,:)\|_1}.
\end{aligned}
\]
The corresponding variance proxy can then be bounded in the same way as in the proof of Proposition~\ref{th:variance-partial-general}.
\qed

\section{Discussion of Alternative Geometries}
\label{sec:AllGeometries}

Here we consider alternative choices of the proximal geometry in mirror descent applied to the saddle-point formulation of the CCSPP~\eqref{eq:erm}, possibly with other choices of regularization than the entrywise~$\ell_1$-norm. The goal is to show that the geometry chosen in Sec.~\ref{sec:geometry-choice-main} is the only one for which we can obtain favorable accuracy guarantees for stochastic mirror descent~\eqref{StochasticUpdates}.

Given the structure of the primal and dual feasible sets, it is reasonable to consider general mixed norms of the type~\eqref{def:mixed-norms}:
\[
\Vert \cdot \Vert_{\Unorm} = \Vert \cdot \Vert_{\puone\times \putwo}, \quad
\Vert \cdot \Vert_{\Vnorm} = \Vert \cdot \Vert_{\pvone\times \pvtwo},
\]
where~$p_U^{1,2}, p_V^{1,2} \ge 1$ (in the case of~$\Vert \cdot \Vert_{\Unorm}$, we also assume the same norm for regularization).
Note that their dual norms can be easily computed: the dual norm of~$\|\cdot\|_{p^1 \times p^2}$ is~$\|\cdot\|_{q^1 \times q^2}$, where~$q^{1,2}$ are the corresponding conjugates to~$p^{1,2}$, i.e.,~$1/p^{i} + 1/q^i = 1$ (see, e.g.,~Lemma 3 in~\citet{sra2012fast}). Moreover, it makes sense to fix~$p_V^1 = 2$ for the reasons discussed in Section~\ref{sec:norms-and-potentials}. This leaves us with the obvious choices~$p_V^{2} \in \{1,2\}$,~$p_{U}^2 \in \{1,2\}$ which corresponds to the sparsity-inducing or the standard Euclidean geometry of the \textit{classes} in the dual/primal;~$p_U^{1} \in \{1,2\}$ which corresponds to the sparsity-inducing or Euclidean geometry of the \textit{features}. 
Finally, the choice~$p_U^{1} = 2$ (i.e., the Euclidean geometry in the features) can also be excluded: its combination with~$p_V^{1} = 2$ is known to lead to the large variance term in the \textit{biclass} case.\footnotemark
\footnotetext{Note that in the biclass case, our variance estimate for the partial sampling scheme (cf. Theorem~\ref{th:md-partial}) reduces to those in~\citep[Section 2.5.2.3]{optbook2}. They consider the cases of~$\ell_1/\ell_1$ and~$\ell_1/\ell_2$ geometries for the primal/dual, and omit the case of~$\ell_2/\ell_2$-geometry, in which the sampling variance ``explodes''.}
This leaves us with the possibilities
\begin{equation}
\label{eq:geometry-possibilities}
p_U^{2}, p_V^{2} \in \{1,2\} \times \{1,2\}.
\end{equation}
In all these cases, the quantity~$\cL_{\UU,\VV}$ defined in~\eqref{eq:cross-lip-bilinear} can be controlled by extending Proposition~\ref{th:lip}:
\begin{proposition}
\label{th:lip-general}
For any~$\alpha \geqslant 1$ and~$\beta \geqslant 1$ such that~$\beta \geqslant \alpha$ it holds:
\[
\cL_{\UU,\VV} := \frac{\| X \|_{1\times \alpha, \, 2\times \beta }}{n} = \frac{\| X^\top \|_{\infty\times 2}}{n}.
\]
\end{proposition}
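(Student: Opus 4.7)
The plan is to follow the extremal strategy of Proposition~\ref{th:lip}, splitting the proof into a matching upper and lower bound on $\sup_{\|U\|_{1 \times \alpha} \leqs 1} \|XU\|_{2 \times \beta}$, which under the stated interpretation equals $\|X\|_{1\times\alpha,\,2\times\beta}$. The key observation is that $\beta \geqs \alpha$ enables rowwise monotonicity $\|u\|_\beta \leqs \|u\|_\alpha$ on $\R^k$, which collapses the mixed norms onto the simple quantity $\max_i \|X(:,i)\|_2 = \|X^\top\|_{\infty\times 2}$ in both directions.

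For the upper bound, I fix a feasible $U$ and set $c_i := \|U(i,:)\|_\alpha$, so that $c \in \R^d_+$ with $\sum_i c_i \leqs 1$. The triangle inequality in $\ell_\beta$ applied rowwise gives
\[
\|XU(l,:)\|_\beta = \Big\|\sum_i X_{li}\, U(i,:)\Big\|_\beta \leqs \sum_i |X_{li}|\,\|U(i,:)\|_\beta \leqs \sum_i |X_{li}|\, c_i,
\]
where the last step uses $\beta \geqs \alpha$. Squaring, summing over $l$, and applying Cauchy--Schwarz (equivalently, Jensen's inequality using $\sum_i c_i \leqs 1$) yields
\[
\sum_l \Big(\sum_i |X_{li}|\, c_i\Big)^2 \leqs \sum_l \sum_i c_i\, |X_{li}|^2 = \sum_i c_i\, \|X(:,i)\|_2^2 \leqs \max_i \|X(:,i)\|_2^2,
\]
hence $\|XU\|_{2\times\beta} \leqs \|X^\top\|_{\infty\times 2}$ for every feasible $U$.

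For the matching lower bound, I exhibit an extremal configuration: let $i^\ast \in \argmax_i \|X(:,i)\|_2$ and take $U^\ast := e_{i^\ast} e_1^\top$, which is feasible since $\|U^\ast\|_{1\times\alpha} = \|e_1\|_\alpha = 1$. Since $XU^\ast = X(:,i^\ast)\, e_1^\top$ is rank one, a direct computation (as in Proposition~\ref{th:lip}) gives
\[
\|XU^\ast\|_{2\times\beta}^2 = \sum_l (|X_{l,i^\ast}|\cdot \|e_1\|_\beta)^2 = \|X(:,i^\ast)\|_2^2,
\]
so $\|XU^\ast\|_{2\times\beta} = \|X^\top\|_{\infty\times 2}$. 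Dividing by $n$ concludes.

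The main obstacle is clean bookkeeping between two norm inequalities that go in the \emph{same} direction: the rowwise embedding $\|\cdot\|_\beta \leqs \|\cdot\|_\alpha$ on $\R^k$ (which requires $\beta \geqs \alpha$), and the scalar Cauchy--Schwarz step converting $(\sum c_i a_i)^2$ into $\sum c_i a_i^2$ (which requires $\sum c_i \leqs 1$). Once both are set up correctly, neither the extreme-point structure of the mixed-norm ball nor any duality arguments are needed; the proof is essentially a two-line sandwich. A subtler point, worth a short remark, is that the extremal $u = e_1$ is independent of $\beta$, which is what makes the ratio $\|u\|_\beta/\|u\|_\alpha$ saturate at $1$ regardless of the particular exponents in the allowed range.
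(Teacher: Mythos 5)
Your proof is correct. Note that the paper itself omits the proof of Proposition~\ref{th:lip-general}, stating only that it ``follows the steps in the proof of Proposition~\ref{th:lip}''; that template is an extreme-point argument: the supremum of the convex function $U \mapsto \|XU\|_{2\times\beta}$ over the $\|\cdot\|_{1\times\alpha}$-ball is attained at an extreme point, the extreme points are the rank-one matrices $\pm e_i u^\top$ with $\|u\|_\alpha = 1$ supported on a single row, and on such matrices the norm factorizes as $\|X(:,i)\|_2\,\|u\|_\beta \leqs \|X(:,i)\|_2$ with equality at $u = e_1$. You replace the extreme-point step for the upper bound by a direct estimate (rowwise triangle inequality in $\ell_\beta$, the embedding $\|\cdot\|_\beta \leqs \|\cdot\|_\alpha$ for $\beta \geqs \alpha$, then Cauchy--Schwarz with the weights $c_i$ summing to at most one), and you supply the matching lower bound with the same witness $e_{i^\ast}e_1^\top$ that the extreme-point argument would single out. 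Both routes are sound and use the hypothesis $\beta \geqs \alpha$ in the same place; yours is more self-contained in that it does not require identifying the extreme points of the mixed-norm ball (a small but nontrivial claim the paper leaves implicit), while the paper's route makes it more transparent \emph{where} the supremum is attained. Each step of your argument checks out, including the inequality $(\sum_i c_i a_i)^2 \leqs (\sum_i c_i)(\sum_i c_i a_i^2) \leqs \sum_i c_i a_i^2$ and the feasibility and evaluation of $U^\ast$.
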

The proof of this proposition follows the steps in the proof of Proposition~\ref{th:lip}, and is omitted.

Finally, the corresponding partial potentials could be constructed by combining the Euclidean and an entropy-type potential in a way similar to the one described in Sec.~\ref{sec:norms-and-potentials} for the dual variable; alternatively, one could use the power potential of~\citet{nesterov2013first} that results in the same rates up to a constant factor. 

Using Proposition~\ref{th:lip-general}, we can also compute the potential differences for the four remaining setups~\eqref{eq:geometry-possibilities}. The results are shown in Table~\ref{TableNonStochastic}. Up to logarithmic factors, we have equivalent results for all four geometries, with the radius~$R_{*}$ evaluated in the corresponding norm~$\|\cdot\|_{1\times 2}$ or~$\|\cdot\|_{1\times 1} = \|\cdot\|_{1} $. 
\begin{table}[ht]
\begin{center}
\begin{tabular}{|c|l|c|c|}
 \cline{3-4}
\multicolumn{2}{c|}{}&\multicolumn{2}{c|}{Norm for~$V \in \R^{n\times k}$}\\
 \cline{3-4}
\multicolumn{2}{c|}{}& $2\times 1$ & $2\times 2$ \\
 \cline{1-4}
  \multirow{7}{*}{{\vspace{1.6cm}Norm for~$U \in \R^{d\times k}$}} & $1\times 2$ & $\barr{c} \vspace{-0.4cm} \\ \Omega_{\UU} = \|U^*\|_{1\times 2}^2 \log d \\  \Omega_{\VV} = n\log k  \earr$ & $\barr{c} \vspace{-0.4cm}\\ \Omega_{\UU} = \|U^*\|_{1\times 2}^2 \log d \\   \Omega_{\VV} = n \earr$\\
  \cline{2-4}
  & $1\times 1$ & $\barr{c} \vspace{-0.4cm} \\ \Omega_{\UU} = \|U^*\|_{1}^2 \log(dk) \\   \Omega_{\VV} = n\log k  \earr$ & $\barr{c} \vspace{-0.4cm} \\ \Omega_{\UU} = \|U^*\|_{1}^2  \log(dk)  \\  \Omega_{\VV} = n \earr$\\
  \hline
\end{tabular}
  \caption{Comparison of the potential differences for the norms corresponding to~\eqref{eq:geometry-possibilities}.}
  \label{TableNonStochastic}
\end{center}
\end{table}

As a result, for the deterministic Mirror Descent (with balanced potentials) we obtain the accuracy bound (cf.~\eqref{eq:md-deterministic-result}):
\[
\Gap(\bar U^T, \bar V^T)  \leqs \frac{O(1)\cL_{\UU,\VV}\sqrt{\Omega_{\UU} \Omega_{\VV}}}{\sqrt{T}} \leqs
\frac{\wt O_{d,k}(1)R_*}{\sqrt{T}} \frac{\| X^\top \|_{\infty\times 2}}{\sqrt{n}} + \frac{\Rem}{T}.
\]
in all four cases, where~$\wt O_{d,k}(1)$ is a logarithmic factor in~$d$ and $k$, and~$R_* = \|U^*\|_{1 \times 2}$ or~$R_* = \|U^*\|_{1}$ depending on~$p_U^2 \in \{1,2\}$. In other words, the deterministic accuracy bound of Theorem~\ref{th:md-deterministic} is essentially preserved for all four geometries in~\eqref{eq:geometry-possibilities}.
On the other hand, using Proposition~\ref{th:variance-partial-general}, we obtain that in the case of~\eqref{eq:partial-sampling}, the extra part of the accuracy bound due to sampling (cf.~\eqref{eq:md-partial-result}) is also essentially preserved:
\begin{equation*}
\begin{aligned}
\E[\Gap(\bar U^T, \bar V^T)]
&\leqs  
\frac{\wt O_{d,k}(1) R_{*}}{\sqrt{T}}  \left( \frac{\|X^\top\|_{\infty\times 2}}{\sqrt{n}} + \frac{\|X \|_{1\times\infty}}{n}\right) + \frac{\Rem}{T}.
\end{aligned}
\end{equation*}

However, if we consider \textit{full sampling}, the situation changes: in the case~$p_U^2 = 2$ the variance bound that holds for~\eqref{eq:partial-sampling} is not preserved for~\eqref{eq:full-sampling}. 
This is because our argument to control the variance of the full sampling scheme always requires that~$p_U^2 \leqs 1$ (see the proof of Proposition~\ref{th:variance-full} in Appendix~\ref{Theorem:variancesFull} for details; note that for~$q_V^2$ we do not have such a restriction since the variance proxy~$\sigma_\VV^2$ is controlled on the \textit{set}~$\VV$ given by~\eqref{eq:cube-of-simplices} that has~$\ell_{\infty}\times\ell_1$-type geometry regardless of the norm~$\|\cdot\|_{\Vnorm}$.
This leaves us with the final choice between the~$\|\cdot\|_{2 \times 1}$ and~$\|\cdot\|_{2 \times 2}$ norm in the dual, as we have to use the elementwise~$\|\cdot\|_{1}$-norm in the primal.
 Both choices result in essentially the same accuracy bound (note that this choice only influences the algorithm but not the saddle-point problem itself). We have focused on the~$\|\cdot\|_{2 \times 1}$ norm because of the algorithmic considerations: with this norm, we have multiplicative updates in the case of the multiclass hinge loss, which allows for a sublinear algorithm presented in Section~\ref{sec:sublinear}.

\section{Correctness of Subroutines in Algorithm~\ref{alg:main}}
\label{sec:alg-correctness}

In this section, we recall the subroutines used in Algorithm~\ref{alg:main} -- those for performing the lazy updates and tracking the running averages -- and demonstrate their correctness.
\resetProcCounter

\begin{algorithm}[H]
\caption{\sc UpdatePrimal}

\end{algorithm}

\paragraph{Primal Updates (Procedure~\ref{proc:primal-update}).}

To demonstrate the correctness of~Procedure~\ref{proc:primal-update}, we prove the following result:
\begin{lemma}
\label{lem:primal-correct}
Suppose that at~$t$-iteration of Algorithm~\ref{alg:main}, Procedure~\ref{proc:primal-update} was fed with~$\wt U = \wt U^t, \alpha = \alpha^t, \pi = \pi^t, \eta = \eta^t, l = l^t$ for which one had
\begin{equation}
\label{eq:primal-premise}
\wt U^t(:,l) \circ \alpha^t = U^t(:,l), \quad \forall l \in [k],
\end{equation}
where~$U^t$ is the~$t$-th primal iterate of~\eqref{StochasticUpdates} equipped with~\eqref{eq:full-sampling} with the optimal sampling distributions~\eqref{eq:optimal-probabilities-full}, and~$\eta^t$ was the only non-zero column~$\eta_{V^t, Y}(:,l^t)$ of~$\eta_{V^t, Y}$. Moreover, suppose also that
\begin{equation}
\label{eq:primal-premise-distrib}
\pi^{t}(\imath) = \|U^t(\imath,:)\|_1 = \sum_{l \in [k]} {U}^t(\imath,l), \quad \imath \in [2d],
\end{equation}
were the correct norms at the~$t$-th step. Then Procedure~\ref{proc:primal-update} will output~$\wt U^{t+t}, \alpha^{t+1}, \pi^{t+1}$ such that 
\begin{equation*}
\wt{U}^{t+1}(:,l) \circ \alpha^{t+1} = U^{t+1}(:,l),\; \forall l \in [k]
\end{equation*}
and
\[
\pi^{t+1}(\imath) = \sum_{l \in [k]} {U}^{t+1}(\imath,l), \; \imath \in [2d].
\]
\end{lemma}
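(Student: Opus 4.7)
The plan is to first specialize the closed-form update~\eqref{UpdatesForU} to the stochastic setting with~\eqref{eq:full-sampling}, and then verify step by step that the quantities computed by Procedure~\ref{proc:primal-update} match the resulting formulas once the scaling representation~\eqref{eq:scaling-invariant} is unpacked. Concretely, since~$\eta_{V^t,Y}^t$ has a single non-zero column, indexed by~$l = l^t$, and equal to~$\eta^t$, the effective stochastic gradient entry~$S_{il}^t = \frac{1}{n}[\eta_{V^t,Y}^t]_{il}$ equals~$\eta_i^t/n$ when~$l = l^t$ and~$0$ otherwise. Plugging this into~\eqref{UpdatesForU} therefore gives~$U^{t+1}(i,l) = \nu\, U^t(i,l)$ for~$l \neq l^t$ and~$U^{t+1}(i,l^t) = \nu\, U^t(i,l^t)\, e^{-2\gamma_t L R_* \eta_i^t/n}$, where~$\nu = \min\{e^{-2\gamma_t \lambda R_* L}, R_*/M^{t+1}\}$.

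The first step will be to show that the quantity~$M$ computed in the procedure coincides with the normalizer~$M^{t+1} = \sum_{i,l} U^t(i,l)\, e^{-2\gamma_t L R_* S_{il}^t}$ appearing in~\eqref{UpdatesForU}. Splitting the double sum according to whether~$l = l^t$ and using~\eqref{eq:primal-premise}--\eqref{eq:primal-premise-distrib}, one gets
\[
M^{t+1} = \sum_i \bigl(\|U^t(i,:)\|_1 - U^t(i,l^t) + U^t(i,l^t) e^{-2\gamma_t L R_* \eta_i^t/n}\bigr) = \sum_i \bigl(\pi^t_i - \alpha^t_i\, \wt U^t(i,l^t)(1 - e^{-2\gamma_t L R_* \eta_i^t/n})\bigr),
\]
which is exactly~$\sum_i \mu_i = M$. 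Hence the scaling factor~$\nu$ of the procedure agrees with that of~\eqref{UpdatesForU}.

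Next I verify the invariant~$\wt U^{t+1}(i,l)\, \alpha^{t+1}_i = U^{t+1}(i,l)$. For~$l \neq l^t$, the procedure leaves~$\wt U(i,l)$ untouched and rescales~$\alpha_i$ by~$\nu$, so the product becomes~$\nu\, U^t(i,l) = U^{t+1}(i,l)$. For~$l = l^t$, the factor~$e^{-2\gamma_t L R_* \eta_i^t/n}$ is additionally picked up on~$\wt U(i,l^t)$, matching the extra exponential in the closed form. Finally, the identity~$\pi^{t+1}_i = \|U^{t+1}(i,:)\|_1$ follows by substituting~$\pi^{t+1}_i = \nu\mu_i$ and expanding as above, which telescopes into~$\sum_{l\neq l^t} U^{t+1}(i,l) + U^{t+1}(i,l^t)$.

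No step here poses a real obstacle; the only subtlety is keeping track of the sparsity of~$\eta^t$ so that the row-norms~$\pi_i$ are updated using only the single column~$l^t$ that actually changes beyond the overall row rescaling by~$\nu$. This is precisely what makes the lazy update correct in~$O(d)$ arithmetic operations rather than~$O(dk)$, and I would emphasize this accounting when writing the formal proof.
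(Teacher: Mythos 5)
Your proposal is correct and follows essentially the same route as the paper's own proof: specialize the multiplicative update~\eqref{UpdatesForU} to the one-column stochastic gradient, split the normalizer~$M$ by whether~$l = l^t$ to identify it with~$\sum_i \mu_i$ via the premises~\eqref{eq:primal-premise}--\eqref{eq:primal-premise-distrib}, and then check the two invariants case by case. No gaps.
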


\begin{proof}
Recall that the matrix~$\eta^t = \eta_{V^{t},Y}$ produced in~\eqref{eq:full-sampling} has a single non-zero column~${\eta}^{t} = \eta^t(:,l^t)$, 
and according to~\eqref{StochasticUpdates}, the primal update $U^t \to U^{t+1}$ writes as (cf.~\eqref{UpdatesForU}):
\[
{U}_{il}^{t+1} = {U}_{il}^{t} \cdot {e^{-2\gamma_t R_{*} L \eta_{V^t, Y}(i,l) /n }} \cdot \min\left\{e^{-2\gamma_t R_{*}L \lambda}, {R_{*}}/{M}\right\},
\]
where
\[
L := \log(2dk), \quad  M_t := \sum_{i=1}^{2d}\sum_{l=1}^{k}{U}_{il}^{t} \cdot e^{-2\gamma_t R_{*} L \eta_{V^t, Y}(i,l)/n},
\]
and~$\eta_{V^t, Y}$ has a single non-zero column~$\eta^t = \eta^t_{V^t,Y}(:,l^t)$. 
This can be rewritten as
\begin{equation}
\label{UpdatesForUStochactic}
U_{il}^{t+1} = \left\{ 
\begin{aligned}
&U_{il}^t \cdot \nu \cdot q_i^t,  & l &= l^t, \\
&U_{il}^t \cdot \nu, & l &\ne l^t,
\end{aligned}
\right.
\end{equation}
where
\[
\begin{aligned}
q_i^t &= e^{-2\gamma_t L R_* \eta^t_i/n},\\ 
\nu &= \min\{e^{-2\gamma_t L {R}_\ast \lambda}, {R}_\ast /M\},\\
M &= \sum_{i \in [2d]} U_{i l^t}^t\cdot q_i^t +  \sum_{i \in [2d]} \; \sum_{l \in [k] \setminus \{l^t\}} U_{i l}^t.
\end{aligned}
\]
Thus,~$M$ and~$\nu$ can be expressed via~$\pi^t(i) = \sum_{l \in [k]} {U}^t(i,l)$, cf.~\eqref{eq:primal-premise-distrib}:
\begin{equation}
\label{MUdpates}
M = \sum_{i \in [2d]} \pi_i^t - \underbrace{\alpha_i^t\tilde{U}^t_{i,l^t}}_{U^t_{i,l^t}}(1-q_i^t),
\end{equation}
where we used the premise~\eqref{eq:primal-premise}. 
Now we can see that lazy updates of $\wt{U}$ can be expressed as
\begin{equation} 
\label{UpdatesAlphaU}
\begin{aligned}
\alpha_i^{t+1} = \nu \cdot \alpha_i^t, \\
\wt{U}^{t+1}_{i,l^t} = \wt{U}^{t}_{i,l^t}\cdot q_i^t,
\end{aligned}
\end{equation}
and the updates for the norms~$\pi^{t+1}$ as
\begin{equation}
\label{rhoUpdates}
\pi_i^{t+1} = \nu^t \Big[ \pi_i^t + \alpha_i^t\wt{U}^t_{i,l^t}(q_i^{t} -1) \Big]
\end{equation}
One can immediately verify that this is exactly the update produced in the call of Procedure~\ref{proc:primal-update} in~line~\ref{alg:line-eta} of Algorithm~\ref{alg:main}.
\end{proof}

\begin{algorithm}[H]
\caption{\sc UpdateDual}

\end{algorithm}

\paragraph{Dual Updates (Procedure~\ref{proc:dual-update}).}
To demonstrate the correctness of~Procedure~\ref{proc:dual-update}, we prove the following lemma.

\begin{lemma}
\label{lem:dual-correct}

Suppose that at~$t$-iteration of Algorithm~\ref{alg:main}, Procedure~\ref{proc:dual-update} was fed with~$\wt V = \wt V^t, \beta = \beta^t, \rho = \rho^t, \xi = \xi^t, \ell = \ell^t$, for which one had
\begin{equation}
\label{eq:dual-premise}
\wt V^t(:,l) \circ \beta^t = V^t(:,l), \quad \forall l \in [k],
\end{equation}
where~$V^t$ is the~$t$-th dual iterate of~\eqref{StochasticUpdates} equipped with~\eqref{eq:full-sampling} with the optimal sampling distributions~\eqref{eq:optimal-probabilities-full}, and~$\xi^t$ was the only non-zero column~$\xi_{U^t}(:,\ell^t)$ of~$\xi_{U^t}$. Moreover, suppose also that
\begin{equation}
\label{eq:dual-premise-distrib}
\rho^t(\jmath) = \|V^t(\jmath,:) - Y(\jmath,:)\|_1, \quad \jmath \in [n] 
\end{equation}
were the correct norms at the~$t$-th step. Then Procedure~\ref{proc:dual-update} will output~$\wt V^{t+t}, \beta^{t+1}, \rho^{t+1}$ such that 
\begin{equation*}
\wt V^{t+1}(:,l) \circ \beta^{t+1} = V^{t+1}(:,l), \; \forall{l \in [k]} 
\end{equation*}
and
\[
\rho^{t+1}(\jmath) = \|V^{t+1}(\jmath,:) - Y(\jmath,:)\|_1, \quad \jmath \in [n].
\]
\end{lemma}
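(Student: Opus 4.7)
The plan is to mirror the argument of Lemma~\ref{lem:primal-correct}, specialized to the dual update rule~\eqref{UpdatesForVSVM} with~$\wh{X}U^t$ replaced by the sparse estimate~$\xi_{U^t}$, which has a single non-zero column at index~$\ell^t$ equal to~$\xi^t$. First I would write out the explicit form of~$V^{t+1}_{j,l}$ that results from substituting this sparse matrix into~\eqref{UpdatesForVSVM}: for each row~$j \in [n]$ the numerator depends only on whether~$l = \ell^t$ (contributing the factor~$\omega_j := e^{2\gamma_t \log(k)\xi_j^t}$) and whether~$l = y_j$ (contributing the factor~$\theta := e^{-2\gamma_t\log k}$, or equivalently the factor~$\veps_j := e^{-2\gamma_t \log(k) Y(j,\ell^t)}$ on the column~$\ell^t$). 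The remaining columns~$l \notin \{\ell^t, y_j\}$ receive no multiplicative factor at all in the numerator.

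Second, I would identify the denominator with the quantity~$\chi_j$ computed in Procedure~\ref{proc:dual-update}. Using the premise~\eqref{eq:dual-premise} (so that~$V^t(j,l) = \beta^t_j \wt V^t(j,l)$) together with the simplex identity~$\sum_{l\in[k]} V^t(j,l) = 1$, the denominator
\[
V^t_{j,\ell^t}\omega_j\veps_j + V^t_{j,y_j}\theta \cdot \ind\{\ell^t \ne y_j\} + \!\!\sum_{l \notin \{\ell^t,y_j\}}\!\! V^t_{j,l}
\]
can be rewritten as~$1 - \beta_j^t\wt V^t(j,\ell^t)(1-\omega_j\veps_j) - \beta_j^t\wt V^t(j,y_j)(1-\theta)\ind\{\ell^t \ne y_j\}$, which is exactly~$\chi_j$ (the conditional subtraction at line~\ref{alg:line-chi} matches the indicator, and when~$\ell^t = y_j$ one has~$\veps_j = \theta$ so the two contributions fuse). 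Once this is established, the update~$\beta_j^+ = \beta_j/\chi_j$ at line~\ref{alg:line-newbeta} together with the multiplicative modifications of~$\wt V(j,\ell^t)$ at line~\ref{alg:line-newVl} and~$\wt V(j,y_j)$ at line~\ref{alg:line-newVyj} delivers~$\beta_j^+ \wt V^+(j,l) = V^{t+1}(j,l)$ for every~$l \in [k]$, by a straightforward case check on~$l \in \{\ell^t,y_j\}$ versus~$l \notin \{\ell^t,y_j\}$.

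Third, to verify the norm bookkeeping, I would use that~$V^{t+1}(j,:) \in \Delta_k$ and~$Y(j,:) = e_{y_j}$, so
\[
\|V^{t+1}(j,:) - Y(j,:)\|_1 = 2\bigl(1 - V^{t+1}(j,y_j)\bigr) = 2 - 2\beta_j^+ \wt V^+(j,y_j),
\]
which is exactly~$\rho_j^+$ as set at line~\ref{alg:line-newrho}. The main obstacle I foresee is keeping the case analysis clean in the degenerate situation~$\ell^t = y_j$, where the two updates in lines~\ref{alg:line-newVl}--\ref{alg:line-newVyj} act on the same coordinate and the indicator in the definition of~$\chi_j$ drops out; I would handle this by verifying both cases separately (or by noting that~$\veps_j \equiv \theta$ collapses them) so that the formulas for~$\chi_j$, $\wt V^+$ and~$\rho^+$ remain consistent. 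Everything else is a direct substitution.
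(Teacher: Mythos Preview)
Your proposal is correct and follows essentially the same route as the paper's proof: unpack the stochastic dual update~\eqref{UpdatesForVSVM} with~$\wh XU^t$ replaced by the one-column matrix~$\xi_{U^t}$, identify the normalizing denominator with~$\chi_j$ via the simplex identity and premise~\eqref{eq:dual-premise}, and then read off the invariants for~$\wt V,\beta,\rho$ directly from lines~\ref{alg:line-newbeta}--\ref{alg:line-newrho}. Your explicit treatment of the degenerate case~$\ell^t = y_j$ is in fact more careful than the paper, which simply asserts that ``we can verify'' the numerator step.
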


\begin{proof}

Recall that the random matrix~$\xi_{U^{t}}$ has a single non-zero column~$\xi^t := \xi_{U^{t}}(:,\ell^t)$, 
and according to~\eqref{StochasticUpdates}, the update~$V^{t} \to V^{t+1}$ writes as~(cf.~\eqref{UpdatesForVSVM}):
\begin{equation}
\label{UpdatesForVStochactic}
V_{jl}^{t+1} = V_{jl}^t \cdot \frac{\exp[2\gamma_t\log(k) \cdot (\xi_{U^t}(j,l) - Y(j,l))]}{\sum\limits_{\ell=1}^k V_{j\ell}^t\cdot \exp[2\gamma_t\log(k) \cdot (\xi_{U^t}(j,\ell) - Y(j,\ell))]}.
\end{equation}
Note that all elements of the matrix~$\xi_{U^t} - Y$ in each row~$j$ 
have value~$1$, except for at most two elements in the columns~$\ell^t$ and~$y_j$, where~$y_j$ is the actual label of the $j$-th training example, that is, the only~$l \in [k]$  for which~$Y(j,l) = 1$. Recall also that~$\sum_{\ell \in [k]} V^t(j,\ell) = 1$ for any~$j \in [n]$. 
Thus, introducing
\[
\omega_j = e^{2 \gamma_t \log(k) \xi_j^t}, \quad \veps_j = e^{-2\gamma_t \log(k)Y(j,l)}
\]
as defined in Procedure~\ref{proc:dual-update},
we can express the denominator in~\eqref{UpdatesForVStochactic} as
\begin{equation}
\label{xiUpdates}
 \chi_j =  
 \left\{ 
\begin{array}{ll}
1 - \underbrace{\beta^t_j \cdot \wt{V}^t_{j,\ell^t}}_{{V}^t_{j,\ell^t}} \cdot (1-\omega_{j} \cdot \veps_j), \quad &\text{if} \;  \ell^t = y_j, \\
1 - \underbrace{\beta^t_j \cdot \wt{V}^t_{j,\ell^t}}_{{V}^t_{j,\ell^t}} \cdot (1-\omega_{j} \cdot \veps_j) - \underbrace{\beta^t_j \cdot \wt{V}^t_{j,y_j}}_{{V}^t_{j,y_j}} \cdot (1 - e^{-2\gamma_t \log(k)}), \quad &\text{if} \;  \ell^t \ne y_j,
\end{array} 
\right.
\end{equation}
where we used the premise~\eqref{eq:dual-premise}. One can verify that this corresponds to the value of~$\chi_j$ produced by line~\ref{alg:line-chi} of Procedure~\ref{proc:dual-update}. Then, examining the numerator in~\eqref{UpdatesForVStochactic}, we can verify that lines~\ref{alg:line-newbeta}--\ref{alg:line-newVyj} guarantee that
\[
\wt V^{t+1}_{j,l} \cdot \beta^{t+1}_j = V^{t+1}_{j,l}, \; \forall{l \in [k]} 
\]
holds for the updated values. 
To verify the second invariant, we combining this result with the premise~\eqref{eq:dual-premise-distrib}. This gives
\begin{equation}
\label{UpdatesOmega}
\rho^{t+1}_{j} = 2 - V^{t+1}_{j,y_j} =  2 - 2 \beta_j^{t+1}\cdot \wt{V}^{t+1}_{j,y_j},
\end{equation}
which indeed corresponds to the update in line~\ref{alg:line-newrho} of the procedure.
\end{proof}

\paragraph{Correctness of Tracking the Cumulative Sums.}
We only consider the primal variables (Procedure~\ref{proc:primal-track} and line~\ref{alg:line-post-prim} of Algorithm~\ref{alg:main}); the complimentary case can be treated analogously. 
Note that due to the previous two lemmas, at any iteration~$t$ of Algorithm~\ref{alg:main} Procedure~\ref{proc:primal-track} is fed with~$l = l^t$,~$\wt U = \wt U^t$,~$\alpha = \alpha^t$ for which 
it holds~$\wt U^t \alpha^t = U^t$. 
Now, assume that all previous input values~$\Acur^\tau, \tau \le t,$ of variable~$\Acur$, and the current inputs~$\Aprev^t, \Usum^t$ of variables~$\Aprev, \Usum$, satisfy the following:
\begin{align}
\Acur^\tau &= \sum_{s = 0}^{\tau-1} \alpha^{s}, \quad \forall \tau \le t, \label{eq:Acur-inv}\\
\Aprev^t(i,l)  &= A^{\tau^t(i,l)}_{i}, \label{eq:Aprev-inv}\\
\Usum^t(i,l) &= \sum_{s = 0}^{\tau^t(i,l)} U^s(i,l), \label{eq:Usum-inv}
\end{align}
where~$0 \leqs \tau^t(i,l) \leqs t-1$ is the latest moment~$s$, \textbf{strictly before}~$t$, when the sampled~$l^{s} \in [k]$ coincided with the given~$l$:
\begin{equation}
\tau^t(i,l) = \argmax_{s \leqs t-1} \{s: l^{s} = l\}. \label{eq:Closing-inv}
\end{equation}
Let us show that this invariant will be preserved aftet the call of Procedure~\ref{proc:primal-track} -- in other words, that~\eqref{eq:Acur-inv}--\eqref{eq:Closing-inv} hold for~$t + 1$, i.e., for the ouput values~$\Usum^{t+1}, \Aprev^{t+1}, \Acur^{t+1}$ (note that the variables~$\Usum, \Aprev, \Acur^{t+1}$ only changed within Procedure~\ref{proc:primal-track}, so their output values are also the input values at the next iteration).

\begin{proof}
Indeed, it is clear that~\eqref{eq:Acur-inv} will be preserved (cf.~line~\ref{alg:line-Acur-update} of Procedure~\ref{proc:primal-track}). 
To verify~\eqref{eq:Aprev-inv}, note that~$\Aprev(i,l)$ only gets updated when~$l = l^t$ (cf.~line~\ref{alg:line-Aprev-update}), and in this case we will have~$\tau^{t+1}(i,l) = t$, and otherwise~$\tau^{t+1}(i,l) = \tau^{t}(i,l)$, cf.~\eqref{eq:Closing-inv}. 

Thus, it only remains to verify the validity of~\eqref{eq:Usum-inv} after the update. To this end, note that by~\eqref{eq:Closing-inv} we know that the value~$\wt{U}^s(i,l)$ of the variable~$\wt U(i,l)$ remained constant for~$\tau^t(i,l) \le s < t$, and it will not change after the call at $t$-th iteration unless~$l^t = l$, that is, unless~$\tau^{t+1}(i,l) = t$. This is exactly when line~\eqref{alg:line-Usum-update} is invoked, and it ensures~\eqref{eq:Usum-inv} for~$t+1$.
\end{proof}

Finally, invoking~\eqref{eq:Acur-inv}--\eqref{eq:Closing-inv} at~$t = T$, we see that line~\ref{alg:line-post-prim} results in the correct final value~$\sum_{t= 0}^{T} U^t$ of the cumulative sum~$\Usum$. 
Thus, the correctness of Algorithm~\ref{alg:main} is verified.

\section{Additional Remarks on Algorithm~\ref{alg:main}}
\label{sec:alg-details}

\paragraph{Removing the~$O(dk)$ Complexity Term.}
In fact, the extra term~$O(dk)$ in the runtime and memory complexities of Algorithm~\ref{alg:main} can be easily avoided. 
To see this, recall that when solving the simplex-constrained CCSPP~\eqref{SaddleUhat}, we are foremost interested in solving the~$\ell_1$-constrained CCSPP~\eqref{MainEquationConstrained}, and an~$\veps$-accurate solution~$U = [U_1; U_2] \in \R^{2d \times k}$ to~\eqref{SaddleUhat} yields an~$\veps$-accurate solution~$\wh U = U_1 - U_2 \in \R^{d \times k}$ to~\eqref{MainEquationConstrained}. 
Recall that we initialize Algorithm~\ref{alg:main} with~$\wt U^0 = \ind_{2d \times k}$ and~$\alpha^0 = \ind_{2d}$, which corresponds to~$\wh U^0 = 0_{2d \times k}$. 
Moreover, at any iteration we change a single entry of~$\wt U^t$, and scale the whole scaling vector~$\alpha^0 = \ind_{2d}$ by a constant (in fact, all entries of~$\alpha^t$ are always equal to each other; we omitted this fact in the main text to simplify the presentation, since the entries of~$\beta$ generally have different values). Hence, the final candidate solution~$\wh U^{T} = [\bar U^T_1 - \bar U^T_2]$ to the~$\ell_1$-constrained problem will actually have at most~$O(dT)$ non-zero entries that correspond to the entries of~$\wt U$ changed in the course of the algorithm. To exploit this, we can modify Algorithm~\ref{alg:main} as follows: 

\begin{itemize}
\item
Instead of explicitly initializing and storing the whole matrices~$\wt U$,~$\Usum$, and~$\Aprev$, we can hard-code the ``default'' value~$\wt U(i,l) = 1$ (cf.~line~\ref{alg:line-U-init} of Algorithm~\ref{alg:main}), and use a bit mask to flag the entries~$\wt U(i,l)$ that have already been changed at least once. This mask can be stored as a list~$\List$ of pairs $(i,l)$, i.e. in a sparse form.
\item 
When post-processing the cumulative sum~$\Usum$ (see line~\eqref{alg:line-post-prim} of Algorithm~\ref{alg:main}), instead of post-processing all entries of~$\Usum$, we can only process those in the list~$\List$, and ignore the remaining ones, since the corresponding to them entries in~$\wh U^T$ (a candidate solution to~\eqref{MainEquationConstrained}) will have zero values. We can then directly output~$\wh U^T$ in a sparse form.
\end{itemize}

It is clear that such modification of Algorithm~\ref{alg:main} results in the replacement of the~$O(dk)$ term in runtime complexity with~$O(dT)$ (which is always an improvement since~$O(d)$ a.o.'s are done anyway in each iteration); 
moreover, the memory complexity changes from~$O(dn + nk + dk)$ to~$O(dn + nk + d \min(T,k))$.

\paragraph{Infeasibility of the Noisy Dual Iterates.}
Note that when we generate an estimate of the primal gradient~$X^T(V^t-Y)$ according to~\eqref{eq:full-sampling} or~\eqref{eq:partial-sampling}, we also obtain an unbiased estimate of the dual iterate~$V^t$, and vice versa. In the setup with vector variables,~\citet{optbook2} propose to average such noisy iterates instead of the acutal iterates~$(U^t, V^t)$ as we do in~\eqref{StochasticUpdates}. Averaging of the noisy iterates is easier to implement since they are sparse (one does not need to track the cumulative sums), and one could show similar guarantees for the primal accuracy of their running averages. However, in the case of the dual variable its noisy counterpart is infeasible~\citep[Sec.~2.5.1]{optbook2}; as a result, one loses the guarantee for the duality gap. Hence, we prefer to track the averages of the actual iterates of~\eqref{StochasticUpdates} as we do in Algorithm~\ref{alg:main}.

\end{document}